%% file: paper.tex
\documentclass[twoside]{article}

\usepackage{float}
\usepackage[accepted]{aistats2026}
\usepackage[round]{natbib}

\usepackage{hyperref}
\input{macros/macros}
\input{macros/bib_macros}

\usepackage{stfloats} 
\begin{document}

%

%

\twocolumn[

\aistatstitle{Active Learning for Stochastic Contextual Linear Bandits}

\aistatsauthor{ Emma Brunskill \And Ishani Karmarkar \And  Zhaoqi Li }

\aistatsaddress{ Stanford University \And  Stanford University \And Stanford University } ]

\begin{abstract}
    A key goal in stochastic contextual linear bandits is to efficiently learn a near-optimal policy. Prior algorithms for this problem learn a policy by strategically sampling actions but naively (passively) sampling contexts from the underlying context distribution. However, in many practical scenarios---including online content recommendation, survey research, and clinical trials---practitioners can actively sample or recruit contexts based on prior knowledge of the context distribution. Despite this potential for \emph{active learning}, the role of strategic context sampling in stochastic contextual linear bandits is underexplored. We propose an algorithm that learns a near-optimal policy by strategically sampling rewards of context-action pairs. We prove \emph{instance-dependent} theoretical guarantees demonstrating that our active context sampling strategy can improve over the minimax rate by up to a factor of $\smash{\sqrt{d}}$, where $d$ is the linear dimension. We show empirically that our algorithm reduces the number of samples needed to learn a near-optimal policy, in tasks such as warfarin dose prediction and joke recommendation.
\end{abstract}
\vspace{-.75em}
\input{main_body/introduction}
\input{main_body/main_proof_sketch}

\input{main_body/comparison}

\input{main_body/experiments}

\input{main_body/discussion}

\subsubsection*{Acknowledgements}
Ishani Karmarkar was funded in part by NSF Grant CCF-1955039 and a PayPal research award. 
Emma Brunskill and Zhaoqi Li appreciate the support of a Stanford HAI research grant award. 
We thank Konwoo Kim for helpful discussions in early stages of this project and Yingxi Li for helpful conversations. 






\bibliographystyle{plainnat}
\bibliography{ref}


\clearpage
\appendix
\thispagestyle{empty}

\onecolumn
\aistatstitle{Active Learning for Stochastic Contextual Linear Bandits}














\input{appendix/organization}
\input{appendix/assumptions}
\input{appendix/compare_settings}
\input{appendix/additional_theory}

\input{appendix/additional_experiments}

\input{appendix/omitted_proofs}

\end{document}

%% file: macros/macros.tex
\usepackage{amsmath,amssymb,amsthm}
\usepackage{graphicx}
\usepackage{color}
\usepackage{bbm}
\usepackage{booktabs}
\usepackage{breakcites}
\usepackage{bm}
\usepackage{thm-restate}
\usepackage{color-edits}
\usepackage{float}
\usepackage{subcaption}
\usepackage{enumitem}
\usepackage{dsfont}
\usepackage{xspace}
\usepackage{placeins}
\usepackage[algo2e,linesnumbered,ruled,vlined]{algorithm2e}
\usepackage{algorithmic}
\usepackage{subcaption}
\usepackage{array}
\usepackage[dvipsnames]{xcolor}
\usepackage{cleveref}
\usepackage{makecell}

\theoremstyle{plain}
\newtheorem{theorem}{Theorem}[section]
\newtheorem{lemma}[theorem]{Lemma}

\newtheorem{corollary}[theorem]{Corollary}

\theoremstyle{definition}
\newtheorem{definition}[theorem]{Definition}

\theoremstyle{remark}
\newtheorem{remark}[theorem]{Remark}

\newcommand{\paren}[1]{\left(#1 \right )}

\newcommand{\Brac}[1]{\left[#1\right]}

\newcommand{\Set}[1]{\left\{#1\right\}}

\newcommand{\abs}[1]{\left\lvert#1\right\rvert}

\newcommand{\norm}[1]{\left\lVert#1\right\rVert}

\newcommand{\defeq}{:=}

\newcommand{\normInline}[1]{\lVert#1\rVert}
\newcommand{\iid}{$\text{i.i.d.}$ }

\newcommand{\Z}{{\mathbb Z}}

\newcommand{\R}{\mathbb R}

\newcommand{\Esymb}{\mathbb{E}}
\newcommand{\Psymb}{\mathbb{P}}

\DeclareMathOperator*{\E}{\Esymb}

\DeclareMathOperator*{\ProbOp}{\Psymb}

\newcommand{\prob}[1]{\ProbOp\Set{#1}}

\newcommand{\cA}{\mathcal A}
\newcommand{\cB}{\mathcal B}
\newcommand{\cC}{\mathcal C}

\newcommand{\cF}{\mathcal F}

\newcommand{\cN}{\mathcal N}

\newcommand{\cS}{\mathcal S}

\newcommand{\cX}{\mathcal X}

\DeclareMathOperator*{\argmin}{argmin} 
\DeclareMathOperator*{\argmax}{argmax}

\allowdisplaybreaks

\SetKwInput{KwInput}{Input}
\SetKwInput{KwOutput}{Output}
\SetKwInput{Return}{return}
\SetKwInput{KwDS}{Data Structure}
\usepackage{algorithmic}

\usepackage{booktabs} 
\usepackage{array} 

\newcommand{\thetahat}{\hat{\theta}}
\newcommand{\thetastar}{{{\theta}^\star}}

\newcommand{\tr}{{\mathrm{tr}}}

\newcommand{\calS}{\mathcal{S}}

\usepackage{enumitem}

\newcommand{\betamind}{\beta}

\newcommand{\SigmaOf}[1]{\Sigma_{#1}}
\newcommand{\SigmOHatM}[1]{\hat{\Sigma}_{#1, T}}
\newcommand{\SigmaOS}[1]{{\Sigma}_{#1}}

\newcommand{\SigmaOInv}[1]{\Sigma^{-1}_{#1}}
\newcommand{\SigmaOHatMInv}[1]{\hat{\Sigma}^{-1}_{#1, T}}
\newcommand{\SigmaOSInv}[1]{{\Sigma}^{-1}_{#1}}

\newcommand{\SigmaLambdaS}[2]{{\Sigma}_{#1}}

\newcommand{\SigmaLambdaSInv}[2]{{\Sigma}^{-1}_{#1}}

\SetCommentSty{mycommfont}
\SetKwFor{Repeat}{repeat}{}{}


%% file: macros/bib_macros.tex
  \usepackage{nth}
  \usepackage{intcalc}

  \newcommand{\cAAAI}[1]{AAAI\ Conference\ on\ Artificial (AAAI)}

%% file: main_body/introduction.tex
\section{INTRODUCTION}\label{sec:intro}

In many applications, algorithm designers seek to leverage reward feedback to develop contextualized decision policies. For example, in healthcare, clinical trial outcomes may be used to design medication dosages adapted to patients’ demographics and health conditions~\citep{varatharajah2022contextual, wang2025optimal}. Similarly, in recommendation systems, user interaction data can inform content recommendations tailored to specific users' preferences~\citep{bouneffouf2012contextual, tang2014ensemble}. Personalized decision-making might also aid in AI alignment to enhance language models for use in domains like education, law, or medicine~\citep{liu2024sample, chen2024online}.

\emph{Contextual bandits} offer a natural framework to formalize such decision-making problems. In a contextual bandit, we have a finite context set $\mathcal{X}$ and action set $\mathcal{A}$. Contexts are drawn from a distribution $p \in \Delta^{\mathcal{X}}$. (As a shorthand, for any set $S$ of size $k$, $\Delta^S \defeq \Delta^k$, where $\Delta^k$ $k$-dimensional simplex.) Each context-action pair yields a (possibly stochastic) reward $r(x,a)$. In applications, contexts correspond, for example, to patients or users, while rewards might reflect medical outcomes, user engagement, or preference alignment.

In the exploration setting, or experiment design setting \citep{zanette2021design, krishnamurthy2023proportional, deshmukh2018simple, li2022instance}, the goal is to design an \emph{exploration algorithm} which observes sampled rewards $\{r(x_t,a_t)\}_{t=1}^T$ of $T$ context-action pairs $\{(x_t, a_t)\}_{t=1}^T$ and uses these to learn a policy $\hat{\pi}$. We measure the quality of $\hat{\pi}$ by its \emph{(simple) regret},
\begin{align}\label{eq:simple-regret-general}
R(\hat{\pi}) = 
{
\max_{\pi \in \Pi}  \mathbb{E}[r(x, \pi(x))] 
}
- {{\E[r(x, \hat{\pi}(x))]}},
\end{align}
where $\Pi \defeq \{\pi: \mathcal{X} \to \mathcal{A}\}$ and expectation is taken with respect to $x \sim p$ and the stochasticity in $r$. To design efficient exploration algorithms with theoretical regret bounds, prior works often consider the setting where the rewards are generated by a noisy $d$-dimensional linear model. This is called the \emph{stochastic contextual linear bandits} (SCLBs) setting \citep{zanette2021design, krishnamurthy2023proportional, abbasi2011improved,deshmukh2018simple, ruan2021linear, lattimore2020bandit}. These works learn near-optimal policies for SCLBs by sampling rewards of context-action pairs, where contexts are sampled \iid from $p$ (what we call \emph{passive context sampling}) and actions are sampled strategically.

\begin{figure*}[t]
  \centering
  \begin{subfigure}[t]{0.9\textwidth}
    \centering
    \includegraphics[width=.9\textwidth]{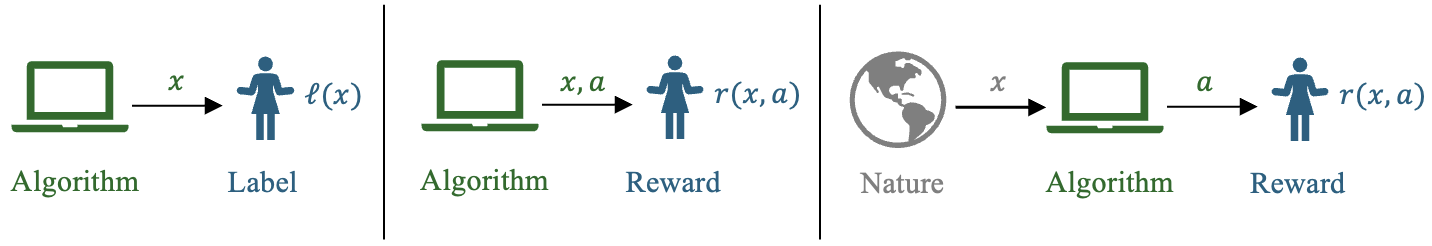}
    \caption{Active context sampling for contextual bandits (center) lies between classic active learning (left), e.g., for regression, and passive context sampling for contextual bandits (right).\\  \vspace{1em}}
    \label{fig:active_bandits_diagram}
  \end{subfigure}
  \begin{subfigure}[b]{0.9\textwidth}
    \centering
    \includegraphics[width=.9\textwidth]{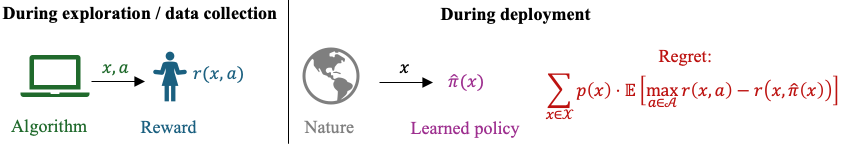}
    \caption{ During exploration, active context sampling allows selecting \emph{both} contexts and actions. However, the goal is to learn a policy that performs well in \emph{deployment}, when contexts will be sampled from $p$ (recall \eqref{eq:simple-regret-general}). }
    \label{fig:active_bandits_train_deploy}
  \end{subfigure}

  \caption{Active context sampling for contextual bandits.}
  \label{fig:active_bandits}
\end{figure*}

However, in several real-world applications, practitioners may already know the context distribution $p$ and can leverage the fact that $p$ is known a-priori to \emph{strategically} select \emph{both} observed contexts \emph{and} observed actions for which to sample rewards. This approach (which we call \emph{active context sampling}) is naturally motivated in the following examples, for instance:
\begin{itemize}[leftmargin=*, nosep]
\item \textbf{Clinical trial design}: Clinicians typically possess prior information about patient demographics and medical conditions. They can deliberately recruit patients (contexts) with specific demographic or medical attributes, such as rare diseases or particular genetic markers, and carefully select experimental treatments (actions) to administer. While traditional trials often aim to represent general populations based on surface-level features (e.g., age, gender), incorporating genomic information or detailed patient-treatment embeddings may provide new opportunities to structure trials more efficiently. This might reduce experimental burden and enhance statistical power \citep{fowler2023clinical, deng2011active, minsker2016active}.
\item \textbf{Consumer marketing}: Marketing professionals commonly have prior knowledge about the demographic profiles of their audiences. With this information, they can intentionally target consumers (contexts) from underrepresented segments—for example, individuals who rarely respond to traditional advertising or those in niche market groups—and craft tailored interventions (actions) for them. By strategically selecting informative consumer-advertisement pairs, marketers might decrease the number of observations required to find effective marketing strategies \citep{labrecque2021addressing}.
\end{itemize}

In such settings, one might hope that by \emph{jointly} optimizing the observed context and action pairs $\{(x_t, a_t)\}_{t=1}^T$ for which the rewards $\{r(x_t, a_t)\}_{t =1}^T$ are observed, an exploration algorithm may be able to better leverage existence of context-action pairs that disproportionately reveal useful information about the reward model. Hence, in this work, we explore the following central question:

\begin{center}
\emph{Can active context sampling reduce the sample complexity needed to learn a high performing policy for stochastic contextual linear bandits?}
\end{center}

Now, one natural concern might be: does allowing the exploration algorithm to choose both contexts and actions reduce the problem to a linear bandit or classic active learning? This is \emph{not} the case, because, {while the algorithm can select contexts and actions during \emph{exploration}, at \emph{deployment}, the environment produces contexts according to the distribution $p$}. So, we require a \emph{context-dependent} policy to achieve low regret, and the problem does not directly reduce to a linear bandit.\footnote{One \emph{could} treat each context as a separate linear bandit; but, this would lead to sample complexities with linear dependence on the \emph{number of contexts}, which in general can be huge relative to the feature dimension. In the SCLB literature, one seeks to \emph{avoid} this linear dependence on the number of contexts by leveraging linear reward structure.} Our techniques do build upon prior literature on active learning for regression, but our objective is different. In active learning for regression, the loss is measured by mean squared error, so continuous convex optimization results \citep{frostig2015competing} apply. In SCLBs, the loss measures the \emph{suboptimality} of the policy, which is a discontinuous loss and requires new insights (see also Appendix~\ref{sec:comparing_settings}.) 

\paragraph{Our contributions.} Motivated by practical applications, we design an exploration algorithm which actively samples contexts \emph{and} actions to learn a near-optimal policy for an SCLB (See Figure~\ref{fig:active_bandits}.)
\begin{itemize}[leftmargin=*, nosep]
    \item We provide a \emph{polynomial-time} active context sampling exploration algorithm for SCLBs and prove an \emph{instance-dependent} regret bound for our algorithm. We prove that our instance-dependent regret bound matches the minimax-optimal rate in the worst-case. 
    \item To demonstrate the power of active context sampling we construct a class of SCLBs where our instance-dependent regret bounds provably improve by a $\smash{\sqrt{d}}$-factor over the state-of-the-art rates obtained by prior polynomial-time algorithms. 
    \item We support our analysis with numerical experiments on warfarin dose prescription and joke recommendations. These show our method requires fewer samples to learn a good policy, compared to baselines.
\end{itemize}

\section{RELATED WORK}\label{sec:setup}

An SCLB is a tuple $\cB = (\cX, \cA, \phi, p, \nu, \thetastar)$. Here, $\cX$ is the context set, $\cA$ is the action set, $\smash{\phi: \cX \times \cA \to \R^d_{\neq 0}}$ is a known $d$-dimensional feature mapping, and $\smash{p \in \Delta^\cX}$ is a known context distribution. We also have a 1-subgaussian noise distribution $\nu$ over $\R$ and \emph{unknown} linear parameter $\thetastar \in \R^d$. Rewards are modeled as a noisy linear function in the feature mapping, i.e. $r(x, a) \sim \phi(x,a)^\top \thetastar + \eta$ where $\eta \sim \nu$. We assume that the problem is normalized so that $\max_{(x, a) \in \cX \times \cA} \normInline{\phi(x, a)} \leq L$ and $\max_{(x, a) \in\cX \times \cA} \E[r(x,a)] \in [0,1]$, where $\normInline{\cdot}$ is the $\ell_2$ norm. To ensure well-posedness, we assume the feature mapping is full-rank, i.e., $\text{span}(\{\phi(x,a)\}) = \R^d$. For SCLBs, note that the regret \eqref{eq:simple-regret-general} simply reduces to
\begin{align}\label{eq:simple-regret-sclb}
    R(\hat{\pi}) = \E_{x \sim p} [\max_{a \in \cA} \phi(x,a)^\top \thetastar - \phi(x,\hat{\pi}(x))^\top \thetastar].
\end{align}
Exploration algorithms for SCLBs are well-studied \citep{zanette2021design, krishnamurthy2023proportional, deshmukh2018simple, ruan2021linear, li2022instance}. All of these prior works design algorithms which use \emph{passive context sampling}. In particular, \citet{zanette2021design} introduced two polynomial-time algorithms, reward-free LinUCB (RFLinUCB) (adapted from the LinUCB algorithm of \citet{abbasi2011improved}) and the Planner-Sampler algorithm. Both algorithms use $T$ reward observations $\{r(x_t, a_t)\}_{t=1}^T$ to learn a policy $\hat{\pi}$ such that ${R(\hat{\pi}) \leq \tilde{O}(\sqrt{d\beta/T})}$ with high probability.\footnote{$\tilde{O}(\cdot)$ hides polylogarithmic factors in $d, 1/\delta$ and $L$.} Here, $\betamind$ is a parameter, which under mild assumptions satisfies $\smash{\sqrt{\beta} = \tilde{O}(\sqrt{d})}$ (defined formally in Lemma~\ref{lemma:simple-regret-bound-general}, \eqref{eq:beta-lambda}). This rate is minimax-optimal (up to polylogarithmic factors) \citep{zanette2021design, chu2011contextual}.\footnote{This rate remains minimax optimal, even in the setting where contexts are sampled actively, as we discuss further in Appendix~\ref{sec:lower-bound-discussion}.}

Inspired by \emph{practical applications} where active context sampling may be beneficial, our work focuses on leveraging the power of active context sampling to develop a \emph{polynomial-time} algorithm with tighter instance-dependent-regret bounds. \citet{li2022instance} also studied instance-dependent regret bounds for SCLBs; however, their algorithm again uses passive context sampling and requires exponential time. Thus, it does not lend itself well to practical applications. We discuss \citet{li2022instance} further in Section~\ref{sec:comparison} and Appendix~\ref{apx:additional_results}. An orthogonal line of work studies algorithms minimizing the \emph{cumulative regret}, for SCLBs; however this is not directly comparable to our exploration setting (see \citep{lattimore2020bandit, abbasi2011improved} and Appendix~\ref{sec:regret-setting} for further discussion of the cumulative regret versus simple regret setting.)

SCLBs are a type of linear Markov Decision Process (MDP) with effective horizon $1$, where contexts are the MDP states. \citet{wagenmaker2022instance, wagenmaker2022beyond} obtain instance-dependent rates for MDPs, but neither considers actively sampling states/contexts, and their algorithms are computationally hard to implement. \citet{wang2021sample} and \citet{gheshlaghi2013minimax} consider MDPs in the generative model setting, where states are sampled actively \emph{but} rewards are known a-priori. The SCLB problem is \emph{trivial} if rewards are known a-priori. 

Our work is related to experiment design and active learning \citep{pukelsheim2006optimal, settles2009active, rodemann2024reciprocal, de2023active}. Most related is work on active learning for regression \citep{chaudhuri2015convergence} and G-optimal experiment design for linear bandits \citep{pukelsheim2006optimal, lattimore2020bandit}. \citet{chaudhuri2015convergence} design active learning methods for maximum likelihood estimator (MLE) problems such as ridge regression, but their approach and analysis \emph{crucially} relies on the differentiability of the MLE objective and classic statistical bounds for MLEs. SCLBs do not satisfy these assumptions: learning a near-optimal policy for an SCLB is not an MLE problem, and the objective \eqref{eq:simple-regret-sclb} is not even differentiable due to the max over $a \in \cA$. To overcome this, we design an algorithm which \emph{combines} techniques from ridge regression and $G$-optimal design with a careful matrix-concentration analysis to prove our theoretical guarantees. 

We are unaware of prior work on active context sampling for SCLBs. There has been work on active learning for other contextual decision making models. \citet{char2019offline, li2022near, kirschner2020distributionally} studied contextual Bayesian optimization, where the reward is modeled as a Gaussian process (instead of a linear reward). Their guarantees are weaker, requiring either super-linear dependence on the context-action space \citep{char2019offline}, or explicit dependence on the dimension $d$ \citep{li2022near}. Lastly, \citet{das2024active} studied active context sampling in a preference framework with a Bradley-Terry-Luce reward model.

\citet{minsker2016active} studied active context sampling in the context of clinical trials. However, they make several  assumptions on the setting which are specifically motivated by filtering in ongoing clinical trials. In particular, they assume binary actions and that the policy given a patient (context) samples uniformly at random over the two actions. Hence, their work does not yield improved regret bounds in our setting. 

%% file: main_body/main_proof_sketch.tex
\section{APPROACH OVERVIEW}\label{sec:approach}

Our goal is to design a way to sample \emph{rewards} of a set of context-action pairs $\{(x_t,a_t)\}_{t=1}^T$ so that the resulting dataset of rewards $\{r(x_t,a_t)\}_{t=1}^T$ allows us to learn a near-optimal contextual policy $\hat{\pi}$. This $\hat{\pi}$ would be used for future deployment, when contexts are sampled from a known distribution $p$ (recall Figure~\ref{fig:active_bandits}, see Figure~\ref{fig:learning-paradigms}.) In this section, we fix $\lambda >0$ to be a regularization parameter, $T > 0$ to be a sample budget, and $\cB = (\cX, \cA, \phi, p, \nu, \thetastar)$ to be any SCLB. In the main body, we assume the context distribution $p$ is known, in order to inform active context sampling. Appendix~\ref{subsec:approxpsection}, relaxes this requirement, allowing $p$ to be replaced by an \emph{approximate} context distribution, $\hat{p}$.

\paragraph{Notation.} We use $[n]$ to denote the set $\{1, ..., n\}$ and $\normInline{\cdot}$ to denote the Euclidean norm (if applied to vectors) or the spectral norm (if applied to matrices). We use $[v]_i$ for the $i$-the entry of $v \in \R^d$. We use $\succeq$, $\succ$ for the Loewner ordering: $A \in \R^{d \times d}$ satisifes $A \succeq 0$ ($A \succ 0$) if and only if $A$ is positive semi-definite (positive definite), respectively (see Lemma~\ref{lemma:lowener}.) For any $A \succ 0$ and $x \in \R^d$, we denote $ {\normInline{x}_{A^{-1}} \defeq (x^\top A^{-1} x)^{1/2}}$. For any set $\cS \subset \cX \times \cA$, we define the covariance matrix $ {\SigmaOS{\cS} \defeq \lambda I + \sum_{(x, a) \in \cS} \phi(x,a) \phi(x,a)^\top}.$ For any $w \in \Delta^{\cX \times \cA}$, let $\SigmaOf{w} \defeq \lambda/T \cdot I + \E_{(x,a) \sim w} \phi(x,a) \phi(x,a)^\top$ denote the $w$-weighted covariance matrix of the feature mapping $\phi$. We also denote the $T$-sample \emph{empirical} covariance matrix as follows
$\SigmOHatM{w} \defeq {\lambda}/{T} \cdot I + {1}/{T}  \sum_{t \in [T]} \phi(x_t, a_t)\phi(x_t,a_t)^\top 
$ where $(x_t,a_t) \sim w$ \iid

We first restate a standard result from the SCLB literature: the next lemma bounds the regret of a learned policy obtained by fitting a ridge regression model to dataset $\cS$ \citep{lattimore2020bandit} and is the basis for prior works which obtain polynomial-time algorithms with minimax-optimal regret for SCLBs \citep{zanette2021design,abbasi2011improved}. 

\begin{restatable}[Ridge regression regret bound]{lemma}{ridgebound}\label{lemma:simple-regret-bound-general} Let $\lambda > 0$ be a regularization parameter, $\mathcal{S} = \{(x_t, a_t)\}_{t \in [T]} \subset \cX \times \cA$, and $\delta \in (0, 1)$ be a failure probability. For each $t \in [T]$, let $ {r_t \sim \phi(x_t,a_t)^\top \theta^\star + \eta_t}$ where each $\eta_t {\sim} \nu$, independently. Let $ {\hat{\theta}}$ solve the \emph{ridge regression} problem, i.e., $\hat{\theta} \defeq \SigmaLambdaS{\cS}{\lambda}^{-1} \sum_{t \in [T]} \phi(x_t, a_t) r_{t}$ and 
$\hat{\pi}\defeq x \mapsto \argmax_{a \in \cA} \phi(x,a)^\top \hat{\theta}$. Moreover, define 
\begin{equation}\label{eq:beta-lambda}
\begin{split}
\sqrt{\betamind}
&\defeq 2 \min\Big(2\sqrt{2}\,\sqrt{\log\!(12\pi^{-2}\delta^{-1}{T^2\,|\cX|\,|\cA|})},
      \\
&\;\; \sqrt{d\,\log\!({2\delta^{-1}(1+TL^2/\lambda)})}
      + \sqrt{\lambda}\,\|\thetastar\|
\Big)
\end{split}
\end{equation}
and the uncertainty measure 
\begin{align}\label{eq:regret_bound}
    \Gamma(\cS) \defeq {\E_{x \sim p} \max_{a \in \cA} \normInline{\phi(x,a)}_{\SigmaOSInv{\cS}}^2}. 
\end{align}
Then, with probability $1-\delta/2$, $R(\hat{\pi}) \leq \sqrt{\betamind  \Gamma(\cS)}.$
\end{restatable}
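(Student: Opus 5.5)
The plan is to reduce the regret to a bound on the error of $\hat{\theta}$ in the $\SigmaOS{\cS}$-norm, taking the minimum of two confidence bounds, and then to use Cauchy--Schwarz/Jensen over $x \sim p$.

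\textbf{Regret decomposition.} Fix a context $x$, let $a^\star(x) \in \argmax_a \phi(x,a)^\top\thetastar$, and write $\hat a = \hat{\pi}(x)$. Since $\hat{\pi}$ is greedy for $\hat{\theta}$, we have $\phi(x,\hat a)^\top\hat{\theta} \ge \phi(x,a^\star)^\top\hat{\theta}$. Hence
\[
\phi(x,a^\star)^\top\thetastar-\phi(x,\hat a)^\top\thetastar \le \phi(x,a^\star)^\top(\thetastar-\hat{\theta}) + \phi(x,\hat a)^\top(\hat{\theta}-\thetastar).
\]
Each term has the form $|\phi(x,a)^\top(\hat{\theta}-\thetastar)|$, so the per-context gap is at most $2\max_a |\phi(x,a)^\top(\hat{\theta}-\thetastar)|$. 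On a good event $\mathcal{E}$ I will show that, uniformly over $(x,a)$,
\[
|\phi(x,a)^\top(\hat{\theta}-\thetastar)| \le \tfrac12\sqrt{\betamind}\,\normInline{\phi(x,a)}_{\SigmaOSInv{\cS}}.
\]
The factor $2$ inside $\sqrt{\betamind}$ is there to absorb the $2$ from the decomposition. Given this, the regret is at most $\sqrt{\betamind}\,\E_{x\sim p}\max_a\normInline{\phi(x,a)}_{\SigmaOSInv{\cS}}$. Jensen's inequality, $\E Z \le \sqrt{\E Z^2}$, then gives $\sqrt{\betamind\,\Gamma(\cS)}$.

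\textbf{Uniform bound on $\mathcal{E}$.} Write
\[
\hat{\theta}-\thetastar = \SigmaOSInv{\cS}\sum_t \phi_t\eta_t - \lambda\SigmaOSInv{\cS}\thetastar.
\]
I will prove the bound in two ways and take the minimum.

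\emph{Self-normalized route.} Cauchy--Schwarz in the $\SigmaOS{\cS}$ geometry gives
\[
|\phi^\top(\hat{\theta}-\thetastar)| \le \normInline{\phi}_{\SigmaOSInv{\cS}}\Big(\big\|\textstyle\sum_t\phi_t\eta_t\big\|_{\SigmaOSInv{\cS}} + \sqrt{\lambda}\normInline{\thetastar}\Big),
\]
using $\lambda\normInline{\thetastar}_{\SigmaOSInv{\cS}} \le \sqrt{\lambda}\normInline{\thetastar}$. The self-normalized martingale bound of \citet{abbasi2011improved}, at confidence $\delta/4$, together with the determinant–trace inequality $\log\det(\SigmaOS{\cS}/\lambda) \le d\log(1+TL^2/\lambda)$, bounds the noise norm by roughly $\sqrt{2\log(4/\delta)+d\log(1+TL^2/\lambda)}$. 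This is dominated by $\sqrt{d\log(2\delta^{-1}(1+TL^2/\lambda))}$ up to the stated constants.

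\emph{Union-bound route.} The design $\cS$ is fixed, with non-adaptive noise. For each fixed pair $(x,a)$, the variable $\phi^\top\SigmaOSInv{\cS}\sum_t\phi_t\eta_t$ is subgaussian with variance proxy $\sum_t(\phi^\top\SigmaOSInv{\cS}\phi_t)^2 \le \normInline{\phi}^2_{\SigmaOSInv{\cS}}$, because $\sum_t\phi_t\phi_t^\top \preceq \SigmaOS{\cS}$. A subgaussian tail bound with a union bound over $\cX\times\cA$ then gives a $\sqrt{2\log(\cdot\,|\cX||\cA|/\delta)}$ bound. The $T^2$ and $\pi^{-2}$ inside the logarithm come from a further union over $T$ with weights $6/(\pi^2T^2)$, which the paper presumably uses so the statement holds simultaneously for all sample sizes.

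\textbf{Main obstacle.} The bias term $\lambda\phi^\top\SigmaOSInv{\cS}\thetastar$ has to be controlled in the union-bound branch. The stated first argument of the minimum contains no $\sqrt{\lambda}\normInline{\thetastar}$ term, so either it is absorbed by the constant $2\sqrt{2}$ under the normalization (mean rewards in $[0,1]$ with $\lambda$ small), or it is handled by the minimum structure. I would first check whether the intended claim takes each branch of the minimum separately, with the bias added. Failing that, I would bound the bias by $\sqrt{\lambda}\normInline{\thetastar}\,\normInline{\phi}_{\SigmaOSInv{\cS}}$ and accept the resulting constant bookkeeping. Finally, the failure probabilities of the two branches need to be split so that the total is $\delta/2$.
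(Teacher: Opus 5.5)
You follow the paper's route. It uses the same greedy decomposition of the per-context gap into $2\max_a|\phi(x,a)^\top(\hat{\theta}-\thetastar)|$, a uniform confidence width $\tfrac12\sqrt{\beta}\,\normInline{\phi(x,a)}_{\Sigma_{\cS}^{-1}}$, and then Jensen. The paper does not derive the width; it cites Propositions~1 and~2 of \citet{soare2014best}. Your reading of the first branch is correct: it is Proposition~1 there at confidence $\delta/2$, including the $6n^2K/(\pi^2\delta)$ union over sample sizes. The second branch is Proposition~2, i.e.\ the ellipsoid of \citet{abbasi2011improved}. Both arguments of the min are deterministic, so you only need the branch that attains it, at confidence $\delta/2$. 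Splitting $\delta$ between branches is unnecessary and only loosens the constants.

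The obstacle you flag is a genuine gap, and the paper has it too, hidden in the citation. Proposition~1 of \citet{soare2014best} is for \emph{unregularized} least squares. For ridge with $\lambda>0$ the first branch is false in general, so neither absorbing the bias into $2\sqrt{2}$ nor appealing to the min can work. A counterexample:
\begin{itemize}
\item Take one context with $\phi(x,1)=e_1$, $\phi(x,2)=e_2$, $\thetastar=(1/2,3/5)$, and let $\cS$ contain $(x,1)$ exactly $T-1$ times and $(x,2)$ once.
\item As $\lambda\to\infty$, the min is attained by the first branch, which does not depend on $\lambda$, and $\Gamma(\cS)=1/(\lambda+1)\to 0$. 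So the claimed bound tends to $0$.
\item But $\lambda\hat{\theta}\to\sum_t\phi(x_t,a_t)r_t$, whose coordinates have means $(T-1)/2$ and $3/5$. For moderately large $T$, $\hat{\pi}$ therefore picks action $1$ with probability near one and has regret $1/10$.
\end{itemize}

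There are two ways to repair the statement.
\begin{itemize}
\item Your fallback is the right fix: bound the bias by $\sqrt{\lambda}\normInline{\thetastar}\normInline{\phi}_{\Sigma_{\cS}^{-1}}$ and add $\sqrt{\lambda}\normInline{\thetastar}$ to the first argument of the min as well.
\item Alternatively, use the slack in the first branch. Your variance-proxy argument needs only $\sqrt{2\log(4|\cX||\cA|/\delta)}\le\sqrt{2}\sqrt{\log(12\pi^{-2}\delta^{-1}T^2|\cX||\cA|)}$, which holds for $T\ge 2$. Hence the first branch holds verbatim whenever $\sqrt{\lambda}\normInline{\thetastar}\le\sqrt{2\log(12\pi^{-2}\delta^{-1}T^2|\cX||\cA|)}$. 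This covers small $\lambda$, as in the experiments, but not arbitrary $\lambda$.
\end{itemize}
With either correction your proof is complete.
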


The uncertainty measure \eqref{eq:regret_bound} quantifies how the design of the dataset $\calS$ influences the regret bound when using ridge regression to learn a policy $\hat{\pi}$. Prior works \citep{zanette2021design, abbasi2011improved} design algorithms (RFLinUCB and Planner-Sampler) which construct a set $\calS$ by \emph{passively} sampling contexts $x_t \sim p$ and strategically sampling actions $a_t \sim \pi_t(x_t)$ (where each $\pi_t \in \Delta^{\cA}$) such that with high probability, 
\begin{align}\label{eq:old-bound}
    \Gamma(\cS) = {\E_{x \sim p} \max_{a \in \cA} \normInline{\phi(x,a)}_{\SigmaOSInv{\cS}}^2} \leq \tilde{O}(d/T). 
\end{align}
\cite{zanette2021design, abbasi2011improved} then essentially instantiate Lemma~\ref{lemma:simple-regret-bound-general} to obtain an overall regret bound of $\Tilde{O}(\sqrt{\betamind d/T})$, which is minimax-optimal \citep{zanette2021design, abbasi2011improved, chu2011contextual}. Interestingly, our next result formalizes a sense in which the uncertainty bound in \eqref{eq:old-bound} is \emph{tight} if one restricts to constructing the dataset $\cS$ by passively sampling $x_t \sim p$ (passive context sampling) \emph{regardless} of how the actions $a_t$ are chosen. 
\begin{restatable}{theorem}{barrier}\label{theorem:barrier} For each $t \in [T]$, let $\pi_t: \cX \to \Delta^{\cA}$ be arbitrary. Let $\cS = \{(x_1, a_1), ..., (x_T, a_T)\} \subset \cX \times \cA$ such that for each $t \in [T]$, $x_t \sim p$ and $a_t \sim \pi_t(x_t)$. Then, as $\lambda \to 0$, $\smash{\E_{\cS}[ \Gamma(\cS) ] \geq d/T}$. 
\end{restatable}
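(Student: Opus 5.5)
Fix an arbitrary realization of $\cS$. The plan is to lower bound $\Gamma(\cS)$ by a trace expression against the context-marginal covariance, and then use the fact that contexts are drawn from $p$ to relate that covariance to $\SigmaOS{\cS}$ in expectation.

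For each $x$, the maximum over actions dominates any average over actions. In particular, for each $t$ we may average with respect to $\pi_t(x)$, and then average over $t$:
\begin{align*}
\Gamma(\cS) \geq \E_{x\sim p} \frac{1}{T}\sum_{t\in[T]} \E_{a\sim \pi_t(x)} \normInline{\phi(x,a)}^2_{\SigmaOSInv{\cS}} = \frac{1}{T}\,\tr\paren{\SigmaOSInv{\cS} M},
\end{align*}
where $M \defeq \sum_{t\in[T]} \E_{x\sim p,\, a\sim\pi_t(x)} \phi(x,a)\phi(x,a)^\top$. This $M$ is deterministic and equals $\E_{\cS}\Brac{\sum_t \phi(x_t,a_t)\phi(x_t,a_t)^\top} = \E_{\cS}\Brac{\SigmaOS{\cS}} - \lambda I$.

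One subtlety needs attention here. If $\pi_t$ depends adaptively on the past, the expectation identity still holds: by the tower property, each term $\E[\phi(x_t,a_t)\phi(x_t,a_t)^\top]$ equals $\E_{x\sim p}\E_{a\sim\pi_t(x)}[\cdot]$ averaged over the history, since $x_t\sim p$ is fresh. The statement treats the $\pi_t$ as fixed, so I will take them non-adaptive. Adaptivity would also break the clean separation in the averaging step above, because $M$ would become random.

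Next, take the expectation over $\cS$. The map $A\mapsto \tr(A^{-1}M)$ is convex on positive definite matrices for $M\succeq 0$. Applying Jensen's inequality gives
\begin{align*}
\E_{\cS}\Gamma(\cS)\geq \frac1T \tr\paren{(\E_{\cS}\SigmaOS{\cS})^{-1}M} = \frac1T\tr\paren{(\lambda I+M)^{-1}M}.
\end{align*}
As $\lambda\to0$, $\tr((\lambda I+M)^{-1}M)\to \mathrm{rank}(M)$. This leaves showing $\mathrm{rank}(M)=d$, and care is needed at that step.

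I expect this rank step to be the main obstacle. When $M$ is rank-deficient, say with column space $V$ of dimension $k<d$, every sampled feature lies in $V$ almost surely. Features $\phi(x,a)$ that have positive probability under $p$ but are not in $V$ then have $\normInline{\cdot}^2_{\SigmaOSInv{\cS}}\geq \Omega(1/\lambda)\to\infty$, provided the feature has a component orthogonal to $V$. In that case the bound holds trivially, since the left side diverges.

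To handle this formally, I would split into two cases. Either some $x$ with $p(x)>0$ has some action $a$ with $\phi(x,a)\notin V$, in which case $\Gamma\to\infty$ deterministically. Or all features of contexts in the support of $p$ lie in $V$. For the second case I would need full rank restricted to the support. I would argue, or assume as in the setup, that the feature span over the support of $p$ is $\R^d$, using the full-rank assumption together with $p$ having full support. In the subcase $k<d$ this again forces some feature outside $V$, which returns us to the divergent case. Combining the two cases yields $\E_{\cS}\Gamma(\cS)\geq d/T$ in the limit.
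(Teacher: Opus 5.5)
Your proof is correct and follows the paper's argument: you lower bound the max over actions by the average under the behavior policies, pass $\E_{\cS}$ inside the inverse by convexity, and evaluate $\frac{1}{T}\tr\paren{(\lambda I+M)^{-1}M}$. The one difference is that you average over actions before applying Jensen, which makes the quantity linear in $\SigmaOSInv{\cS}$ and avoids the step $\E_{\cS}\max_a \geq \max_a \E_{\cS}$ that the paper uses without stating it.

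Your separate treatment of the case $\mathrm{rank}(M)<d$ fills a step the paper glosses over when it simply asserts that the trace tends to $d$. So does your observation that the features of contexts in the support of $p$ must span $\R^d$, without which the claim can fail.
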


Despite this obstacle, hope remains that active context sampling---strategically sampling the contexts $x_t$ when constructing $\cS$---might allow us to leverage the structure of a \emph{given} SCLB $\cB$ to obtain a more fine-grained, \emph{instance-dependent} analysis than the standard minimax-optimal $\smash{\tilde{O}(\sqrt{\beta d/T})}$ bound. In particular, we aim to design an algorithm that leverages the existence of any disproportionately informative context-action pairs $(x,a) \in \cX \times \cA$ to obtain instance-dependent regret bounds as low as $\smash{\Tilde{O}(\sqrt{\beta/T}}$) in the best-case and at \emph{most} $\smash{\Tilde{O}(\sqrt{\beta d/T}}$) in the worst-case. 

Given Lemma~\ref{lemma:simple-regret-bound-general}, one natural approach towards this goal is to construct $\cS$ so that \eqref{eq:regret_bound} is as small as possible. In particular, we might hope to find the optimal $\cS^\star = \argmin_{\cS \subset \cX\times\cA , |\cS| = T} \Gamma(\cS)$. Sadly, this is NP-hard \citep{civril2009selecting, welch1982algorithmic}. Nonetheless, we can consider a natural \emph{fractional relaxation}, wherein we seek an optimal sampling \emph{distribution} $w^\star$ (with objective value $\cC_\cB$) defined as follows:
\begin{equation}\label{eq:optimal-sampling-distribution}
\begin{split}
w^\star &\defeq 
  \argmin_{w \in \Delta^{\cX \times \cA}}
  \E_{x \sim p} \max_{a \in \cA} 
  \normInline{\phi(x,a)}^2_{\SigmaOInv{w}}, \\
\cC_{\cB} &\defeq 
  \min_{w \in \Delta^{\cX \times \cA}}
  \E_{x \sim p} \max_{a \in \cA} 
  \normInline{\phi(x,a)}^2_{\SigmaOInv{w}}.
\end{split}
\end{equation}

Theorem~\ref{theorem:SDP-formulation-hat} shows that \eqref{eq:optimal-sampling-distribution} can be expressed as a semi-definite program (SDP), enabling a polynomial-time solution! Intuitively, if we could ``fractionally'' sample context-action pairs, then $T \cdot w^\star(x,a)$ is the optimal ``fraction'' of context $(x,a)$ that should be included in $\cS$. A fractional sampling procedure is not implementable, but we can \emph{simulate} it: for each $t \in [T]$, let $(x_t, a_t) \sim w^\star$ \iid For large $T$, we expect:
\begin{equation}\label{eq:our-bound}
\begin{split}
\Gamma(\cS) 
  &= \E_{x \sim p} \max_{a \in \cA} 
     \normInline{\phi(x,a)}_{\SigmaOSInv{\cS}}^2 \\
  &\approx \tfrac{1}{T} \E_{x \sim p} \max_{a \in \cA} 
     \normInline{\phi(x,a)}^2_{\SigmaOInv{w^\star}} 
     = {\cC_{\cB}}/T.
\end{split}
\end{equation}

If \eqref{eq:our-bound} holds, Lemma~\ref{lemma:simple-regret-bound-general} implies a regret of $\tilde{O}(\sqrt{\cC_{\cB} \beta/T})$. The next section formally converts the intuition laid out here into a polynomial-time algorithm (Algorithm~\ref{alg:active-lcb}) achieving a regret of $\smash{\tilde{O}(\sqrt{\cC_{\cB} \beta/T})}$. Further, we show the instance-dependent quantity $\smash{\cC_{\cB}}$ can be as low as $O(1)$ (indicating that our result can improve the regret bounds of Planner-Sampler and RFLinUCB by up to a $\smash{\sqrt{d}}$ factor) and that $\cC_{\cB}$ is \emph{at most} $d$ (ensuring our algorithm is minimax-optimal.) 

\section{MAIN RESULT}\label{sec:theoretical-analysis}

Algorithm~\ref{alg:active-lcb} essentially implements the approach introduced in Section~\ref{sec:approach} (up to some technical details.) Line~\ref{line:smooth} computes a distribution $q$ that \emph{approximates} the \emph{G-optimal distribution} $q^\star$, defined as follows: 
\begin{theorem}[Theorem 21.1 of \citep{lattimore2020bandit}, restated]\label{thm:kiefer-wolofitz} Suppose that $\smash{q^\star \defeq \argmin_{q\in \Delta^{\cX \times \cA}}\max_{(x, a) \in \cX \times \cA} \normInline{\phi(x,a)}_{\SigmaOInv{q}}^2.}$ Then $\smash{\max_{(x, a) \in \cX \times \cA} \normInline{\phi(x,a)}_{\SigmaOInv{q^\star}}^2 = d}$. 
\end{theorem}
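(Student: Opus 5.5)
The plan is to prove the Kiefer--Wolfowitz equivalence, working with $\lambda \to 0$ so that $\SigmaOf{q} = \E_{(x,a)\sim q}\phi(x,a)\phi(x,a)^\top$. The full-rank assumption guarantees that some $q$ makes this matrix invertible. Throughout I index the finite feature set by $z = (x,a)$ and write $\phi_z \defeq \phi(x,a)$. The core idea is to relate the G-optimal problem to the D-optimal problem $\max_{q} \log\det \SigmaOf{q}$, whose optimality conditions produce the constant $d$.

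\textbf{Lower bound.} For every $q$ with $\SigmaOf{q}$ invertible,
\[
\sum_z q(z)\,\normInline{\phi_z}^2_{\SigmaOInv{q}} = \tr\big(\SigmaOInv{q}\,\SigmaOf{q}\big) = d.
\]
A maximum is at least a $q$-weighted average, so $\max_z \normInline{\phi_z}^2_{\SigmaOInv{q}} \geq d$ for all $q$. Hence the min-max value is at least $d$.

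\textbf{Upper bound.} Let $\bar q$ maximize the concave function $f(q) = \log\det \SigmaOf{q}$ over the simplex. A maximizer exists because the simplex is compact and $f$ is upper semicontinuous, equal to $-\infty$ exactly on singular designs. The gradient is
\[
\partial f/\partial q(z) = \normInline{\phi_z}^2_{\SigmaOInv{q}}.
\]
Fix any $z$ and consider the feasible direction toward the point mass $e_z$. The first-order optimality condition at $\bar q$ gives
\[
\langle \nabla f(\bar q), e_z - \bar q\rangle \leq 0,
\qquad\text{i.e.,}\qquad
\normInline{\phi_z}^2_{\SigmaOInv{\bar q}} - \sum_{z'} \bar q(z')\,\normInline{\phi_{z'}}^2_{\SigmaOInv{\bar q}} \leq 0.
\]
By the trace identity above, the subtracted sum equals $d$. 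Therefore $\max_z \normInline{\phi_z}^2_{\SigmaOInv{\bar q}} \leq d$.

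\textbf{Conclusion.} The min-max value is at most $d$, witnessed by $\bar q$, and at least $d$ by the lower bound, so it equals $d$. It follows that $q^\star$ attains the value $d$.

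The main obstacle is the upper bound. It requires justifying the derivative formula $\partial_t \log\det(A + tB) = \tr(A^{-1}B)$ at $t=0$, together with the existence and invertibility of $\bar q$. Alternatively, since the result is restated from \citet{lattimore2020bandit}, one may simply cite it there.
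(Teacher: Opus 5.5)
Your argument is correct and is the standard proof of the cited Kiefer--Wolfowitz theorem (Theorem 21.1 of Lattimore and Szepesv\'ari), which the paper invokes by citation only: the trace identity gives the lower bound, and first-order optimality of the D-optimal design gives the upper bound. Your passage to $\lambda \to 0$ is necessary rather than cosmetic: with the paper's regularized $\SigmaOf{q} = \frac{\lambda}{T} I + \E_{(x,a)\sim q}\phi(x,a)\phi(x,a)^\top$, every $\normInline{\phi(x,a)}^2_{\SigmaOInv{q}}$ at your $\bar q$ falls strictly below its unregularized value, so the min-max value is $< d$ and only the inequality $\le d$ survives, which is all the paper uses to obtain $\cC_{\cB} \le d$.
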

There are various polynomial-time algorithms to find $q$ which is a 2-multiplicative approximation to $q^\star$ in the sense of Line~\ref{line:smooth} (e.g., Franke-Wolfe or an SDP solver; see Chapter 21 of \cite{lattimore2020bandit}). 

Second, in Line~\ref{line:approximate}, for given constant $\alpha \in [0, 1]$, the algorithm computes a distribution $w$, which approximates an ``$\alpha$-smoothed'' version of $w^\star$ (denoted $w^*$):
\begin{equation}\label{eq:smooth-opt}
\begin{split}
    &w^* = \argmin_{w \in \Delta^{\cX \times \cA}} 
           \E_{x \sim p} \max_{a \in \cA} 
           \normInline{\phi(x,a)}_{\Sigma_w^{-1}}^2, \\
        &\text{st. } 
           w(x,a) \geq \alpha \cdot {q}(x,a), ~\forall (x,a) \in \cX \times \cA.
\end{split}
\end{equation}
The optimization problems for $w^*$ and $w^\star$ \eqref{eq:optimal-sampling-distribution} differ only in the constraint that $w^*$ dominates $\alpha {q}$. This constraint is for a technical reason---it ensures $w^*$ is \emph{well-conditioned} with respect to the feature mapping so that finite-sample matrix-concentration arguments hold. Fortunately, as the next lemma shows, the objective values attained by $w, w^*,$ and $w^\star$ are all close; so we can replace $w^\star$ with $w$ in the approach  of Section~\ref{sec:approach} at the cost only constants in sample complexity. 

\begin{restatable}{lemma}{noloss}\label{lemma:no-loss} Let $w^\star$, $w^*$, and $w$ be as in \eqref{eq:optimal-sampling-distribution}, \eqref{eq:smooth-opt}, and Line~\ref{line:approximate}. Then, $\E_{x \sim p} \max_{a \in \cA} \normInline{\phi(x,a)}_{\Sigma_{w}^{-1}}^2  \leq 2 \E_{x \sim p} \max_{a \in \cA} \normInline{\phi(x,a)}_{\Sigma_{w^*}^{-1}}^2 \leq 2/(1-\alpha) \cdot  \cC_\cB$.
\end{restatable}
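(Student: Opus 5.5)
The plan is to prove the two inequalities separately.

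\textbf{First inequality.} The first one, $\E_{x \sim p} \max_{a} \normInline{\phi(x,a)}_{\Sigma_{w}^{-1}}^2 \leq 2\, \E_{x \sim p} \max_{a} \normInline{\phi(x,a)}_{\Sigma_{w^*}^{-1}}^2$, should follow directly from the specification of Line~\ref{line:approximate}. I expect that line to require $w$ to be a feasible point of \eqref{eq:smooth-opt} whose objective is within a factor $2$ of the optimum. This is achievable in polynomial time because \eqref{eq:smooth-opt} is the SDP of Theorem~\ref{theorem:SDP-formulation-hat} with the extra linear constraints $w \geq \alpha q$, which preserve convexity. So the first step is simply to unpack the guarantee of Line~\ref{line:approximate}.

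\textbf{Second inequality.} Here the idea is to exhibit a feasible point of \eqref{eq:smooth-opt} built from $w^\star$. Take
\[
\tilde w \defeq (1-\alpha) w^\star + \alpha q .
\]
It lies in $\Delta^{\cX\times\cA}$ and satisfies $\tilde w(x,a) \geq \alpha q(x,a)$, so it is feasible. Since $w^*$ is optimal for \eqref{eq:smooth-opt}, its objective is at most that of $\tilde w$. It therefore suffices to show
\[
\E_{x\sim p}\max_a \normInline{\phi(x,a)}^2_{\Sigma_{\tilde w}^{-1}} \leq \tfrac{1}{1-\alpha}\,\cC_\cB .
\]

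\textbf{Loewner comparison.} By linearity of $w \mapsto \E_{w}\phi\phi^\top$,
\[
\Sigma_{\tilde w} = \tfrac{\lambda}{T} I + (1-\alpha)\E_{w^\star}\phi\phi^\top + \alpha \E_{q}\phi\phi^\top \succeq (1-\alpha)\Big(\tfrac{\lambda}{T} I + \E_{w^\star}\phi\phi^\top\Big) = (1-\alpha)\Sigma_{w^\star}.
\]
This uses $\alpha\E_q\phi\phi^\top \succeq 0$ and $\tfrac{\lambda}{T} I \succeq (1-\alpha)\tfrac{\lambda}{T} I$. Matrix inversion is operator-monotone decreasing on positive definite matrices (Lemma~\ref{lemma:lowener}), so $\Sigma_{\tilde w}^{-1} \preceq \tfrac{1}{1-\alpha}\Sigma_{w^\star}^{-1}$. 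Hence $\normInline{\phi(x,a)}^2_{\Sigma_{\tilde w}^{-1}} \leq \tfrac{1}{1-\alpha}\normInline{\phi(x,a)}^2_{\Sigma_{w^\star}^{-1}}$ pointwise. Taking the max over $a$ and then the expectation over $x\sim p$ gives the bound $\tfrac{1}{1-\alpha}\cC_\cB$, and chaining with the first inequality yields the claim.

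\textbf{Main obstacle and edge case.} The only delicate point is the bookkeeping of the regularizer $\tfrac{\lambda}{T} I$, which is not scaled by the mixture weights. It only helps, but one must check the inequality direction as above. For $\alpha = 1$ the bound is vacuous, so we assume $\alpha<1$. Everything else is routine once Line~\ref{line:approximate} is read as a factor-$2$ approximation of \eqref{eq:smooth-opt}.
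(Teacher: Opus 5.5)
Your proposal is correct and follows essentially the same route as the paper. The first inequality is read off from the factor-$2$ guarantee of Line~\ref{line:approximate}, and the second uses the feasible mixture $(1-\alpha)w^\star + \alpha q$, the comparison $\Sigma_{\tilde w} \succeq (1-\alpha)\Sigma_{w^\star}$, and operator monotonicity of inversion, with your explicit handling of the $\tfrac{\lambda}{T} I$ term being a small clarification the paper leaves implicit.
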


Theorem~\ref{theorem:SDP-formulation-hat} proves that \eqref{eq:smooth-opt} can be expressed as a semi-definite program (SDP). The proof leverages properties of Schur complements (Lemma~\ref{lemma:schur-complement}.) SDPs can be solved to high accuracy in polynomial time (e.g., \cite{jiang2020faster}). Thus, one can compute a $w$ satisfying Line~\ref{line:approximate} in polynomial time. There are also many practical, open-source solvers (e.g., \cite{goulart2024clarabel}). We used MOSEK (see Appendix~\ref{apx:additional-experiments}.)

Finally, the algorithm draws $T$ \iid samples from $w$ to construct $\cS$, performs ridge regression, and outputs a policy using the procedure of Lemma~\ref{lemma:simple-regret-bound-general}. 

We prove Algorithm~\ref{alg:active-lcb} has the following guarantee.

\begin{restatable}{theorem}{mainresult} \label{theorem:main} Let $\cB = (\cX, \cA, \phi, p, \nu, \thetastar)$ be an SCLB, $\lambda > 0$, $\delta \in (0, 1)$, and $T > 0$ be a sample budget. Invoke Algorithm~\ref{alg:active-lcb} with $\alpha \gets 1/2$. Let $\beta$ be as defined in \eqref{eq:beta-lambda}. There exists $T_0 = \tilde{O}(d^2)$ so that whenever $T \geq T_0$, with probability $1-\delta$, Algorithm~\ref{alg:active-lcb} outputs $\hat{\pi}$ with $R(\hat{\pi}) \leq \tilde{O}(\sqrt{{\beta \cC_{\cB}}/{T}})$. This regret bound is at most $\tilde{O}(\sqrt{\beta d/T})$. The algorithm runs in polynomial time. 
\end{restatable}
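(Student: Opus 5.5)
The plan is to combine Lemma~\ref{lemma:simple-regret-bound-general} with a matrix-concentration argument. The concentration step shows that the empirical covariance built from $T$ \iid draws from $w$ is, with high probability, spectrally comparable to $T\Sigma_w$. Since $\Sigma_{\cS} = T\,\hat\Sigma_{w,T}$, it suffices to show that with probability $1-\delta/2$,
\[
\hat\Sigma_{w,T} \succeq \tfrac12 \Sigma_w .
\]
Given this, for every $(x,a)$,
\[
\normInline{\phi(x,a)}^2_{\Sigma_{\cS}^{-1}} = \tfrac1T\normInline{\phi(x,a)}^2_{\hat\Sigma_{w,T}^{-1}} \le \tfrac2T \normInline{\phi(x,a)}^2_{\Sigma_w^{-1}} .
\]
Taking $\E_{x\sim p}\max_a$ and applying Lemma~\ref{lemma:no-loss} with $\alpha=1/2$ gives $\Gamma(\cS)\le \tfrac2T\cdot 4\,\cC_\cB = 8\cC_\cB/T$. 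A union bound with Lemma~\ref{lemma:simple-regret-bound-general} then yields $R(\hat\pi)\le\sqrt{8\beta\cC_\cB/T}$ with probability $1-\delta$.

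For the concentration step I would whiten. Let $Z_t = \Sigma_w^{-1/2}\phi(x_t,a_t)\phi(x_t,a_t)^\top\Sigma_w^{-1/2}$, which are \iid PSD matrices. Their mean is $\Sigma_w^{-1/2}(\Sigma_w - \tfrac{\lambda}{T}I)\Sigma_w^{-1/2}\preceq I$, and the regularization term only helps the lower bound. The key is a uniform bound on $\normInline{Z_t} = \normInline{\phi(x_t,a_t)}^2_{\Sigma_w^{-1}}$, and this is exactly where the smoothing constraint $w\ge \alpha q$ of Line~\ref{line:approximate} is used. The constraint gives $\Sigma_w \succeq \alpha(\Sigma_q - \tfrac{\lambda}{T}I) + \tfrac{\lambda}{T} I \succeq \alpha\Sigma_q$. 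Since $q$ is a $2$-approximation to the G-optimal design, Theorem~\ref{thm:kiefer-wolofitz} gives $\max_{(x,a)}\normInline{\phi(x,a)}^2_{\Sigma_q^{-1}}\le 2d$. Hence $\normInline{Z_t}\le 2d/\alpha = 4d$ almost surely. (The $\lambda/T$ regularization in $\Sigma_q$ only decreases these norms, so this is consistent with the definitions.)

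Next I would apply a matrix Chernoff lower-tail bound to $\sum_t Z_t$ (e.g., Tropp's inequality for bounded PSD summands, with $R=4d$). Since the regularization term is added back, it suffices to control the minimum eigenvalue of the whitened sum relative to its mean. One subtlety is that the mean is not exactly $I$: the directions where $\lambda/T$ dominates are harmless, because the regularizer alone already covers them. The cleanest approach is to apply Chernoff to $\tfrac1T\sum_t Z_t + \Sigma_w^{-1/2}\tfrac{\lambda}{T}\Sigma_w^{-1/2}$, treating the deterministic part as a shift, or equivalently to use a matrix Bernstein bound with variance proxy at most $4d\cdot I/T$. Either route gives $\hat\Sigma_{w,T}\succeq \tfrac12\Sigma_w$ with failure probability $d\exp(-cT/d)$. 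That is at most $\delta/2$ once $T\gtrsim d\log(d/\delta)$, which is well within $T_0=\tilde O(d^2)$. The stated $d^2$ leaves slack for the factor of $2$ in the approximation of $q$ and the crude constants. I expect this step, namely correctly handling the regularizer and the almost-sure bound via smoothing, to be the main technical obstacle. If the Chernoff route gets messy, I would fall back to Bernstein, at the cost of the $d^2$ threshold.

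Finally, the worst-case claim $\cC_\cB\le d$ follows by plugging the G-optimal $q^\star$ into the objective \eqref{eq:optimal-sampling-distribution}: $\E_{x\sim p}\max_a\normInline{\phi(x,a)}^2_{\Sigma_{q^\star}^{-1}}\le \max_{(x,a)}\normInline{\phi(x,a)}^2_{\Sigma_{q^\star}^{-1}}\le d$ by Theorem~\ref{thm:kiefer-wolofitz}. Polynomial running time follows from three facts: computing $q$ (Frank--Wolfe), solving the SDP of Theorem~\ref{theorem:SDP-formulation-hat} for $w$, and the sampling and ridge regression steps, which are all polynomial in $|\cX||\cA|$, $d$, and $T$.
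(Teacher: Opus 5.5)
Your proposal is correct and follows the paper's skeleton. The steps are the same: reduce to $\hat\Sigma_{w,T}\succeq\tfrac12\Sigma_w$; use $w\ge\alpha q$ to get the almost-sure bound $\normInline{\phi(x,a)}^2_{\Sigma_w^{-1}}\le 2d/\alpha=4d$; apply Lemma~\ref{lemma:no-loss}; take a union bound with Lemma~\ref{lemma:simple-regret-bound-general}; and use the G-optimal design for $\cC_\cB\le d$. Strictly, the $2d$ bound for $q$ is the defining property of Line~\ref{line:smooth}; Kiefer--Wolfowitz only guarantees that such a $q$ exists. The paper whitens in two stages, by $\Sigma_q^{-1/2}$ and then by $M^{-1/2}$ with $M=\Sigma_q^{-1/2}\Sigma_w\Sigma_q^{-1/2}$. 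This is your $\Sigma_w^{-1/2}$ up to an orthogonal factor, since $(M^{-1/2}\Sigma_q^{-1/2})^\top(M^{-1/2}\Sigma_q^{-1/2})=\Sigma_w^{-1}$.

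The genuine difference is the concentration inequality. The paper applies two-sided matrix Hoeffding to the centered summands, with $\normInline{Z_t}\le 4d/\alpha$. Hoeffding pays for the square of the almost-sure bound, so this forces $T_0=\Theta(d^2\log(d/\delta))$. It also produces an upper bound $\hat\Sigma_{w,T}\preceq\tfrac32\Sigma_w$ that is never used.

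Your one-sided matrix Chernoff exploits positivity instead. Fold the regularizer into each summand, $Y_t=\Sigma_w^{-1/2}(\tfrac{\lambda}{T}I+\phi(x_t,a_t)\phi(x_t,a_t)^\top)\Sigma_w^{-1/2}$. Then $\E Y_t=I$ exactly and $0\preceq Y_t\preceq(4d+1)I$, using $\tfrac{\lambda}{T}\Sigma_w^{-1}\preceq I$. Tropp's lower-tail bound gives failure probability at most $d\,(2/e)^{T/(2(4d+1))}$. Hence $T_0=O(d\log(d/\delta))$, a factor $d$ better than the theorem asserts.

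One correction to your fallback remark: matrix Bernstein would not cost you the $d^2$ threshold. Its variance proxy is $\sum_t\E(Y_t-I)^2\preceq(4d+1)T\,I$, which also yields $O(d\log(d/\delta))$. It is Hoeffding, the paper's choice, that incurs the extra factor of $d$.
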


\textit{Proof sketch.} Given Lemma~\ref{lemma:simple-regret-bound-general}, to prove Theorem~\ref{theorem:main}, it suffices to show that for $\cS$ as constructed in Line~\ref{line:cS}, the following holds with high probability:
\begin{align*}
    \Gamma(\cS) = \E_{x \sim p} \max_{a \in \cA} \normInline{\phi(x,a)}_{\SigmaOSInv{\cS}}^2 \leq 8\cC_{\cB}/T. 
\end{align*}
We prove this bound in three stages. First, by construction of $\cS$ in Line~\ref{line:cS}, $\smash{\SigmaOS{\cS} = T\SigmOHatM{w}}$. Inverting both sides, we have ${T \SigmaOSInv{\cS} = \SigmaOHatMInv{w}}.$ Second, we use matrix concentration guarantees \cite{tropp2012user} to show that for $T$ sufficiently large, with high probability,
\begin{align}\label{eq:invert-get-T}
T \SigmaOSInv{\cS} = \SigmaOHatMInv{w} \preceq 2 \cdot \SigmaOInv{w}. 
\end{align}

\input{main_body/alg}

This concentration argument uses that $q$ is constructed to be well-conditioned to the features, in that $\max_{(x,a) \in \cX \times \cA} \normInline{\phi(x,a)}_{\SigmaOInv{{q}}}^2 \leq 2d $ and that $\Sigma_w \succeq 1/2 \cdot \Sigma_{{q}}$ (since $w$ dominates $q/2$.) Thus, we have that for any $(x, a) \in \cX \times \cA$, 
\begin{align*}
    &T \normInline{\phi(x,a)}_{\SigmaOSInv{\cS}}^2 = T \cdot \phi(x,a)^\top (\SigmaOS{\cS})^{-1} \phi(x,a) \\
    &\leq 2 \cdot \phi(x,a)^\top\SigmaOInv{{w}}\phi(x,a) = 2 \cdot \normInline{\phi(x,a)}_{\SigmaOInv{{w}}}^2. 
\end{align*}
Third, applying expectation and max to the above display, and then invoking Lemma~\ref{lemma:no-loss}, we conclude 
\begin{align*}
    T {\E_{x \sim p} \max_{a \in \cA} \normInline{\phi(x,a)}_{\SigmaOSInv{\cS}}^2} \leq  2 {\E_{x \sim p} \max_{a \in \cA} \normInline{\phi(x,a)}_{\SigmaOInv{{w}}}^2} \leq 8 \cC_\cB. 
\end{align*}

\begin{figure*}[tb]
    \centering
    \includegraphics[width=.6\linewidth]{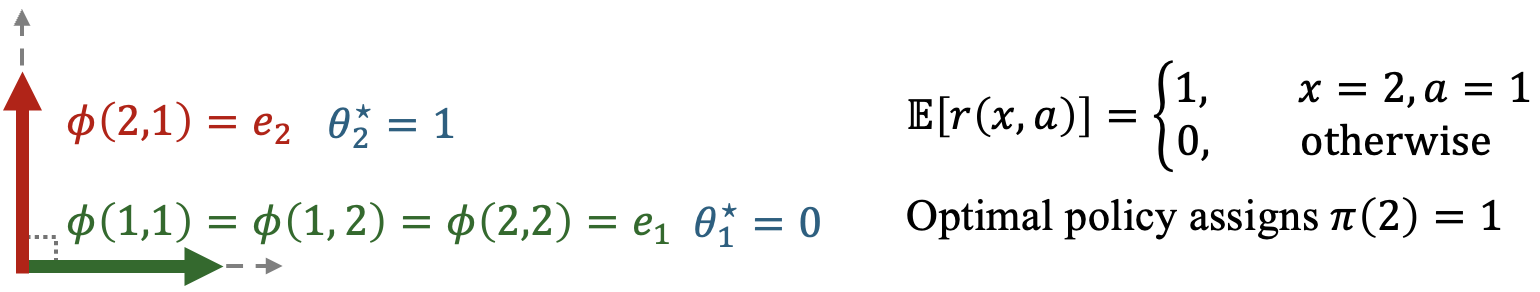}
    \caption{Visualization of $\cB_{2,2}^\star$. In context $x=1$ any action is optimal. Only $a=1$ is optimal in $x=2$. Since $p(1) \gg p(2)$, passive context sampling repeatedly sees $x=1$, and hence require many samples to learn the best policy. Active context sampling upsamples $x=2$ to learn the best policy more efficiently. }
    \label{fig:hard-instance-visualization}
    \vspace{-1em}
\end{figure*}

Thus, $\Gamma(\cS) \leq 8\cC_{\cB}/T$ as desired, and Lemma~\ref{lemma:simple-regret-bound-general} yields the regret bound ${R(\hat{\pi}) \leq \tilde{O}(\sqrt{\beta \cC_{\cB}/T})}$. Finally, to show that $\cC_{\cB} \leq d$, we let $q^\star$ be the G-optimal design as in Theorem~\ref{thm:kiefer-wolofitz}. Then, we see that $\smash{\cC_{\cB} \leq {\E_{x \sim p} \max_{a \in \cA} \normInline{\phi(x,a)}_{\SigmaOf{q^\star}^{-1}}^2}} \leq \smash{\max_{(x, a)\in \cX \times \cA} {\normInline{\phi(x,a)}_{\SigmaOf{q^\star}^{-1}}^2}} = d.~~~\square$

Next, we highlight some advantages of Algorithm~\ref{alg:active-lcb}. 

\begin{itemize}[leftmargin=*, nosep]

\item\textbf{Choice of $\alpha$:} The choice $\alpha = 1/2$ in Theorem~\ref{theorem:main} is purely to enable a finite-sample matrix concentration analysis in the proof. In theory, any constant $\alpha \in (0,1)$ would suffice for the analysis to go through (at the cost of constants in the sample complexity). In practice, we find even $\alpha =0$ works well and avoids computing $q$, so we set $\alpha \gets 0$ in experiments. 

\item \textbf{SDP approximation:} The theoretical analysis only requires a $2$-multiplicative approximation to $w^*$ in Line~\ref{line:approximate}, but in practice, SDPs can be solved to high accuracy. Tighter approximation would lead to tighter constants in our regret bound. Consequently, in experiments, we solve the SDP to convergence. 

\item \textbf{Batching:} For $t \in [T]$, Algorithm~\ref{alg:active-lcb} queries rewards for $\{(x_t, a_t)\}_{t \in [T]}$  drawn \iid from $w$ in Line~\ref{line:cS}. Importantly, the $t$-th pair $(x_t, a_t)$ is \emph{not} dependent on the history $\smash{\{(x_j, a_j)\}_{j \in [t-1]}}$. This ensures the reward observations can be parallelized. Hence, our algorithm falls under the batch-learning paradigm in active learning \cite{gentile2024fast} and non-adaptivity paradigm in SCLBs \cite{zanette2021design}. This is important in settings where each reward requires long observation horizons. For example, in drug trials, it may be infeasible to run trials on one subject at a time to inform selection of the next subject, so parallelizing reward observations is essential.

\item \textbf{Robustness to approximation of $p$:} Theorem~\ref{theorem:main} assumes the context distribution $p$ is known exactly. However, in some applications, the context distribution $p$ is only known approximately, i.e., one replaces $p$ in Algorithm~\ref{alg:active-lcb} with $\hat{p} \approx p$. In applications, $\hat{p}$ might be the \emph{empirical distribution} from a historical dataset of sampled contexts. Building such a $\hat{p}$ \emph{only} requires \emph{context} data (no reward data), which we believe is available in many applications. Theorem~\ref{thm:unknown-p} in Appendix~\ref{sec:discussion-assumptions} shows that the theoretical guarantees of Algorithm~\ref{alg:active-lcb} are robust to approximation of $p$ in the sense that they decay {smoothly} with the total-variation distance between $p$ and $\hat{p}$. This enables, for example, the following corollary. 
\end{itemize}

\begin{restatable}{corollary}{corrolaryapproxp} Let $\cB = (\cX, \cA, \phi, p, \nu, \thetastar)$ be an SCLB, $\lambda > 0$, $\delta \in (0, 1)$, and $T > 0$ be a sample budget. Suppose that $p$ is unknown but that one has access to $M = \tilde{\Theta}(\abs{\cX} d^2\epsilon^{-2})$ historical iid samples from $p$. Let $\hat{p}$ be the empirical context distribution constructed from these $M$ samples. Let $\hat{\cB} = (\cX, \cA, \phi, \hat{p}, \nu, \thetastar)$ be the corresponding approximate bandit instance and $\beta$ be as defined in \eqref{eq:beta-lambda}. There exists $T_0 = \tilde{O}(d^2)$ so that whenever $T \geq T_0$, with probability $1-\delta$, Algorithm~\ref{alg:active-lcb} outputs $\hat{\pi}$ such that the regret evaluated on the \emph{true} bandit instance ${\cB}$ satisfies 
\begin{align*}
    R(\hat{\pi}) &= \E_{x \sim p} [\max_{a \in \cA} r(x,a) - r(x, \hat{\pi}(x))] \\
    &\leq \tilde{O}(\sqrt{\beta \cdot (\cC_\cB + \epsilon)/{T}}). 
\end{align*}
Moroever, the algorithm runs in polynomial time. 
\end{restatable}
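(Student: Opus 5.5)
The plan is to run Algorithm~\ref{alg:active-lcb} on the approximate instance $\hat{\cB}$ and reuse the analysis of Theorem~\ref{theorem:main}. The only quantity that changes is the uncertainty measure $\Gamma$, because the regret is evaluated under the true $p$.

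\textbf{Step 1 (concentration step, unchanged).} The concentration step in the proof sketch of Theorem~\ref{theorem:main} never uses $p$. It only uses that $w$ dominates $q/2$ and that $q$ is a 2-approximate G-optimal design. So for $T \geq T_0 = \tilde{O}(d^2)$, with high probability,
\[
T\,\normInline{\phi(x,a)}^2_{\SigmaOSInv{\cS}} \leq 2\,\normInline{\phi(x,a)}^2_{\SigmaOInv{w}} \quad \text{for every } (x,a),
\]
where $w$ is now the approximate solution of \eqref{eq:smooth-opt} with $p$ replaced by $\hat{p}$. Lemma~\ref{lemma:simple-regret-bound-general} involves $p$ only through $\Gamma$, so it is enough to bound
\[
T\,\Gamma(\cS) \leq 2\,\E_{x\sim p} f_w(x), \qquad f_w(x) \defeq \max_{a \in \cA} \normInline{\phi(x,a)}^2_{\SigmaOInv{w}}.
\]

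\textbf{Step 2 (change of measure).} I will compare expectations under $p$ and $\hat{p}$ through total variation. The key fact is that $w \succeq q/2$ makes $f_w$ uniformly bounded:
\[
\normInline{\phi}^2_{\SigmaOInv{w}} \leq 2\,\normInline{\phi}^2_{\SigmaOInv{q}} \leq 4d.
\]
Hence for any such $w$,
\[
\abs{\E_p f_w - \E_{\hat{p}} f_w} \leq 8d\cdot \mathrm{TV}(p,\hat{p}).
\]
I apply this twice. First, for the algorithm's $w$:
\[
\E_p f_w \leq \E_{\hat{p}} f_w + 8d\,\mathrm{TV}.
\]
Second, Lemma~\ref{lemma:no-loss} on $\hat{\cB}$ gives
\[
\E_{\hat{p}} f_w \leq 2\,\E_{\hat{p}} f_{w^*_{\hat{p}}} \leq 2\,\E_{\hat{p}} f_{\tilde{w}},
\]
where $\tilde{w}$ is the $\alpha$-smoothed optimizer for the true $p$. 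This is feasible for the $\hat{p}$ problem because $q$ does not depend on $p$. Then
\[
\E_{\hat{p}} f_{\tilde{w}} \leq \E_p f_{\tilde{w}} + 8d\,\mathrm{TV} \leq \tfrac{1}{1-\alpha}\,\cC_\cB + 8d\,\mathrm{TV}.
\]
Combining these gives $T\,\Gamma(\cS) \leq O(\cC_\cB + d\cdot\mathrm{TV}(p,\hat{p}))$. This is the bound I expect Theorem~\ref{thm:unknown-p} to give, and I would cite it directly if its statement matches.

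\textbf{Step 3 (sample complexity of $\hat{p}$).} I need $d\cdot \mathrm{TV}(p,\hat{p}) \leq \epsilon$. The standard empirical-distribution bound on a finite support gives, with probability at least $1-\delta/4$,
\[
\mathrm{TV}(p,\hat{p}) \leq \tilde{O}\big(\sqrt{\abs{\cX}/M}\big).
\]
With $M = \tilde{\Theta}(\abs{\cX} d^2 \epsilon^{-2})$ this makes $\mathrm{TV}(p,\hat{p}) \leq \epsilon/d$.

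\textbf{Step 4 (assembly).} A union bound over the $\hat{p}$ event, the concentration event and the ridge-regression event of Lemma~\ref{lemma:simple-regret-bound-general}, with failure probabilities rescaled so they total $\delta$, combined with $R(\hat{\pi}) \leq \sqrt{\beta\,\Gamma(\cS)}$, yields $R(\hat{\pi}) \leq \tilde{O}\big(\sqrt{\beta(\cC_\cB+\epsilon)/T}\big)$. Polynomial running time follows because computing $\hat{p}$ is linear in $M$ and the rest is Algorithm~\ref{alg:active-lcb} unchanged.

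The main obstacle is Step 2. Without the uniform bound $f_w \leq 4d$, which comes from the smoothing constraint in \eqref{eq:smooth-opt}, the TV change of measure would fail, since $f_w$ can be unbounded for ill-conditioned $w$. That bound is exactly where the factor $d^2$ in $M$ comes from.
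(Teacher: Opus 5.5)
Your proposal is correct and follows essentially the same route as the paper. The paper proves this corollary by invoking Theorem~\ref{thm:unknown-p}, whose key Lemma~\ref{lemma:helpmeout} is exactly your Step~2: the uniform bound $\max_a \normInline{\phi(x,a)}^2_{\Sigma_w^{-1}} \leq 4d$ from $w$ dominating $q/2$, then a TV change of measure applied to $\hat{w}$ and to the true smoothed optimizer $w^*$, giving $4\cC_\cB + O(d\cdot \mathrm{tv}(p,\hat{p}))$, then the empirical-distribution bound $\mathrm{tv}(p,\hat{p}) \leq \epsilon/d$ for $M = \tilde{O}(\abs{\cX} d^2 \epsilon^{-2})$ (your Step~3).
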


%% file: main_body/alg.tex
\setlength{\algomargin}{.9em}
\begin{algorithm2e}[!b]
\DontPrintSemicolon 
\SetAlgoLined                  
\KwIn{SCLB $\cB = (\cX, \cA, \phi, p, \nu, \theta^\star)$, $\lambda > 0$, $T \in Z_{>0}$, smoothing parameter $\alpha \in [0, 1]$.}
\caption{Active-SCLB}\label{alg:active-lcb}
\BlankLine
\tcp{Compute smoothing distribution ${q}$ (using Franke-Wolfe or an SDP solver)}
Find ${q} \in \Delta^{\cX \times \cA}$ st. $\mathop{\max}\limits_{(x,a) \in \cX \times \cA} \normInline{\phi(x,a)}_{{\SigmaOInv{{q}}}}^2 \leq 2 d$ \label{line:smooth}\; 
\BlankLine
 \tcp{Compute a 2-multiplicative approximation of the solution to \eqref{eq:smooth-opt} using an SDP solver}
 Compute $w \in \Delta^{\cX \times \cA}$ st. $w(x,a) \geq \alpha {q}(x,a)$ for all $(x,a) \in \cX \times \cA$, and $\mathop{\E}\limits_{x \sim p} \mathop{\max}\limits_{a \in \cA} \normInline{\phi(x,a)}_{\SigmaOInv{w}}^2 \leq 2 \mathop{\E}\limits_{x \sim p}\mathop{\max}\limits_{a \in \cA} \normInline{\phi(x,a)}_{\SigmaOInv{{w}^*}}^2$\label{line:approximate}\; 
\BlankLine
\tcp{Ridge regression}
 $\cS \gets \smash{\{(x_1, a_1), ..., (x_T, a_T) \}}$ with $(x_t,a_t) {\sim} w$ \iid \label{line:cS}\; 
 For $t \in [T]$ sample $r_t \gets r(x_t, a_t) $\; 
 $\thetahat \gets \SigmaLambdaSInv{\cS}{\lambda} \sum_{t \in [T]} \phi(x_t, a_t) r_{t} $\; 
 \Return{$\hat{\pi}(x) \gets x \mapsto \argmax_{a \in \cA} \phi(x,a)^\top \thetahat$}
\end{algorithm2e}

%% file: main_body/comparison.tex
\section{ACTIVE VS PASSIVE SAMPLING}\label{sec:comparison}

We now demonstrate that the instance-dependent bound in Theorem~\ref{theorem:main} leads to quantifiably improved performance on an instance-dependent basis. We describe the following family of SCLB instances (parameterized by $A$ and $d$) where active context sampling (Algorithm~\ref{alg:active-lcb}) achieves improved regret bounds compared to the minimax rate. This instance is adapted from the hard instance for passive learning in maximum likelihood estimation \citep{chaudhuri2015convergence}. 

\begin{definition}\label{def:hard-instance} Let $d \in \Z_{>0}, A \in \Z_{>1}$. Let $\cB^\star_{d,A} = (\cX, \cA, \phi, p, \nu, \thetastar)$ be an SCLB with $\cX = [d]$, $\cA = [A]$, $\nu = \cN(0, 1)$ and $\phi, p, \thetastar$ as follows, where $e_i$ is the $i$-th standard basis vector and $\mathds{1}$ is the indicator function:
\begin{align*}
    &\phi(x, a) = 
        e_x \cdot \mathds{1}[a=1] + e_1 \cdot \mathds{1}[a\neq 1], \quad [\theta]_i^\star = \mathds{1}[i \neq 1], \\
        &p(x) = (1 - (d-1)/d^2) \cdot \mathds{1}[x=1] +
        1/d^2 \cdot \mathds{1}[x\neq1].
\end{align*}
\end{definition}

Figure~\ref{fig:hard-instance-visualization} provides a visualization for $A = d= 2$. This SCLB has one high-probability context $x=1$ and $(d-1)$ low-probability contexts. The high probabilty context \emph{always} reveals $\theta_1^\star$, which is 0. Any algorithm can only learn about high-reward actions when it queries a context $x \neq 1$. If an algorithm passively samples contexts, this happens rarely (only with probability roughly $1/d$.) 
On the other hand, an active context sampling algorithm can actively \emph{upsample} these rarer contexts in order to gain information about $\thetastar$'s remaining $(d-1)$ coordinates---allowing it to more efficiently determine the best action to select for these contexts. This suggests that active context sampling should perform well on $\cB^\star_{d,A}$. We formalize this by showing $\cC_{B^\star_{d, A}}$ is \emph{independent of $d, \lambda, |\cA|$} as follows. 

\begin{restatable}{lemma}{activesdphardinstance}\label{lemma:active-sdp-hard-instance} For any $d \in \Z_{>0}, A \in \Z_{>1}$, $\smash{\cC_{\cB^\star_{d,A}} \leq 4}$. 
\end{restatable}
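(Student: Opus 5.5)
Since $\cC_{\cB}$ is a minimum over $w \in \Delta^{\cX\times\cA}$, it suffices to exhibit one explicit design $w$ and evaluate the objective in \eqref{eq:optimal-sampling-distribution} for it. In $\cB^\star_{d,A}$ the feature vectors are only the basis vectors $e_1,\dots,e_d$: $\phi(x,1)=e_x$ and $\phi(x,a)=e_1$ for $a\neq 1$. Any $w$ therefore induces a diagonal covariance
\begin{align*}
\Sigma_w = \tfrac{\lambda}{T} I + \sum_{i} m_i e_i e_i^\top ,
\end{align*}
where $m_i$ is the total $w$-mass placed on pairs whose feature is $e_i$. Dropping the $\lambda/T$ term only increases $\Sigma_w^{-1}$, so we get $\normInline{e_i}^2_{\Sigma_w^{-1}} \le 1/m_i$. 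This bound is valid for every $\lambda>0$, which is why the final constant will not depend on $\lambda$.

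Next compute $\max_a \normInline{\phi(x,a)}^2_{\Sigma_w^{-1}}$ for each context. For $x=1$, every action has feature $e_1$, so the maximum is at most $1/m_1$. For $x\neq 1$, the available features are $e_x$ (from $a=1$) and $e_1$ (from the other actions, which exist because $A\ge 2$), so the maximum is at most $\max(1/m_x, 1/m_1)$. Hence
\begin{align*}
\E_{x\sim p}\max_a \normInline{\phi(x,a)}^2_{\Sigma_w^{-1}} \le \frac{1}{m_1} + \sum_{x\neq 1} \frac{1}{d^2}\cdot \frac{1}{m_x},
\end{align*}
where the $1/m_1$ term uses only that the $p$-masses sum to $1$.

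Now choose the design. Put mass $1/2$ on $(1,1)$, so $m_1=1/2$, and spread the other $1/2$ uniformly over $(x,1)$ for $x\ge 2$, so $m_x = 1/(2(d-1))$. The first term is then $2$. The second term is $(d-1)\cdot d^{-2}\cdot 2(d-1)\le 2$. The total is at most $4$, which is independent of $d$, $A$ and $\lambda$. The degenerate case $d=1$ is handled directly: put all mass on $(1,1)$, which gives objective at most $1$.

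There is no real obstacle here. The only care needed is to justify discarding the $\lambda/T\cdot I$ term via the monotonicity of the inverse in the Loewner order, and to note that the maximum over actions for $x\neq1$ must account for both $e_x$ and $e_1$. Balancing the masses between $e_1$ and the rare coordinates is exactly what gives the constant $4$, because the $1/d^2$ context probabilities offset the $(d-1)^2$ blow-up coming from spreading mass over $d-1$ coordinates.
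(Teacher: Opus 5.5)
Your proposal is correct and follows essentially the same route as the paper: exhibit one explicit design supported on $a=1$, observe that the covariance is diagonal, and evaluate the objective coordinate by coordinate. The only differences are cosmetic. You put mass $1/2$ on $(1,1)$ and $1/(2(d-1))$ on each rare context, whereas the paper puts $1/(2d)$ on each rare context and the remainder on $(1,1)$. You are also more explicit than the paper in two places: you justify dropping the $\lambda/T\cdot I$ term via Loewner monotonicity of the inverse, and you treat the case $d=1$ separately.
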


Thus, on $\cB^\star_{d, A}$ Theorem~\ref{theorem:main} gives a regret bound of $\smash{\Tilde{O}(\sqrt{\beta/T})}$. In contrast, prior polynomial-time algorithms (RFLinUCB and Planner-Sampler) passively sample contexts and only guarantee a bound of $\smash{\Tilde{O}(\sqrt{\beta d/T})}$, which is worse by a factor of $\sqrt{d}$ \citep{zanette2021design}. This illustrates a concrete case where our result can be stronger than prior work by up to a $\smash{\sqrt{d}}$ factor! We validate this improvement empirically in Section~\ref{sec:experiments}. Appendix~\ref{sec:subsample} shows this $\sqrt{d}$ improvement persists \emph{even} if $p$ is unknown but \emph{approximated} from roughly $\Theta(d^2)$ $\text{i.i.d.}$ \emph{context samples} from $p$. 



\paragraph{Adaptive sampling SCLB bounds.} \citet{li2022instance} use passive context sampling with a \emph{data-adaptive} exploration policy to obtain instance-dependent rates, which, in some cases might be tighter than Theorem~\ref{theorem:main}. Their algorithm uses passive context sampling to strategically sample actions with a large reward gap data-adaptively. In other words, \citet{li2022instance} extends prior work on data-adaptive best-arm identification for stochastic non-contextual bandits to stochastic contextual linear bandits (SCLBs). However, to our knowledge there is no polynomial-time implementation of \citet{li2022instance}'s data-adaptive algorithm. Motivated by applications, our focus is on designing polynomial-time algorithms. 

Nevertheless, introducing active context sampling provably \emph{still} improves the passive-learning data-adaptive sampling algorithm in \citep{li2022instance}! Appendix~\ref{apx:additional_results} presents a data-adaptive active context sampling analog of \citet{li2022instance}'s exponential-time algorithm and proves this would improve over the regret bounds attained in \cite{li2022instance}---in particular, on the class of SCLBs proposed in Definition~\ref{def:hard-instance}, our active-context sampling variant of \citet{li2022instance}'s original passive-context sampling algorithm tightens the resulting bound by a factor of $\sqrt{d}$---just as we find for our polynomial-time algorithm. While we omit this analysis in the main body (as our focus is on tractable algorithms) our results highlight that active context sampling has the potential to strengthen contextual bandit algorithms more broadly.

%% file: main_body/experiments.tex
\section{NUMERICAL EXPERIMENTS}\label{sec:experiments}

We implement our exploration policy (Algorithm~\ref{alg:active-lcb}) with $\alpha \gets 0$ (denoted Active-SCLB). We compare Active-SCLB to RF-LinUCB \citep{zanette2021design, abbasi2011improved} and the Planner-Sampler method of \cite{zanette2021design}. We also compare to a third baseline (Passive-SCLB), which is a natural passive learning analog of our Active-SCLB, where in Line~\ref{line:approximate} we enforce additional constraints to force the contexts to be sampled passively from $p$. 

Additional details explaining these choice of baselines are in Appendix~\ref{apx:additional-experiments}, where, for example, we explain why RFLinUCB is a fairer comparison than LinUCB. Using each method (Active-SCLB and baselines) we collect a dataset of $T$ samples and train a policy using ridge regression (recall Lemma~\ref{lemma:simple-regret-bound-general}) and examine how the regret defined in \eqref{eq:simple-regret-general} decays with $T$. 

All experiments were performed on a CPU machine with 12 cores and 36 GB RAM. We use MOSEK as our numerical SDP solver. Further detail about the experimental setup can be found in Appendix~\ref{apx:additional-experiments}. Code to reproduce experiments is linked in Appendix~\ref{sec:organization}. 

\paragraph{Synthetic experiments.} We first study the synthetic SCLB instance $\cB_{d, A}^\star$ in Definition~\ref{def:hard-instance}. We fix $\lambda=1\mathrm{e-}6$ and vary $d$ in the first three columns of Figure~\ref{fig:all-experiments}. The top row shows the regret gap between Active-SCLB and baselines is more pronounced as $d$ grows. This is consistent with the $\smash{O(\sqrt{d})}$ theoretical gap between the regret bound of Active-SCLB and other methods, as dicussed in Section~\ref{sec:comparison}. The bottom row demonstrates that the baselines need far more data to match the performance of Active-SCLB at a given sample budget. 

\paragraph{Real-world data.} Next, we evaluate our method on two real-world datasets, Warfarin and Jester. For these real-world datasets, we also report the regret of a \emph{naive baseline}, which is the regret of the naive policy that selects the \emph{same action for each context} ($\smash{\pi_{\mathrm{naive}}(x) = \argmax_{a \in \cA} \E_{x' \sim p} r(x',a)}$ for all $\smash{x \in \cX}$). This baseline performs poorly, which certifies that on these real-world datasets, the contextual information is useful. To realistically model real-world settings where context-action pairs would generally be drawn \emph{without} replacement, for these real-world datasets we (slightly) modify each of the methods  to sample context-action pairs without replacement (using rejection sampling) when reporting our results. 

For Warfarin, we use the Warfarin Pharmacogenetics Consortium dataset in the warfit-learn package, which is a cleaned clinical dataset of 5650 patients taking the blood thinner warfarin \cite{international2009estimation, truda2021evaluating} (after removing duplicates). Each patient is associated with 31 features corresponding to demographics and health history. The task is to select the best dosage (action) of $\{$``low'', ``medium'', and ``high''$\}$ for a given patient (context), corresponding to 3 actions and 17,223 context-action pairs. The right dosage is patient-specific and challenging to determine; too high a dose may cause internal bleeding but too low a dose may cause blood clots \citep{international2009estimation, truda2021evaluating}. We model the context distribution $p$ as uniform over contexts. The reward of a context-action (patient-dosage) pair $(x,a)$ is $+1$ if the dosage is correct ($0$ otherwise). The regret of a (dosage) policy is the fraction of patients \emph{mis-dosed}. 

For Jester, we use the cleaned version \citep{kong2020sublinear} of the Jester dataset \cite{goldberg2001eigentaste}, which contains ratings of 48,447 users on 100 jokes (we subsample down to 2,000 users to keep experiments tractable.) Similar to \citet{kong2020sublinear}, we hold out the top 5 ``gold'' jokes with the highest average ratings. For each user, we create a $30$-dimensional feature vector by multiplying their $95$-dimensional feature vector of joke ratings (for the remaining jokes) with a $95 \times 30$ matrix whose entries are iid $\cN(0,1)$ and applying a sigmoid to each of the resulting $30$ values. The task is to predict the best ``gold'' joke (action) for each user (context), resulting in 10,000 context-action pairs. We model $p$ as uniform across contexts. The reward of $(x,a)$ is the user $x$'s rating of gold joke $a$.

\begin{figure*}[!tb]
  \centering
  \captionsetup[subfigure]{font=small}
  \setlength{\tabcolsep}{1.5pt}
  \renewcommand{\arraystretch}{0.95}

  \begin{tabular}{@{\hskip -2pt}ccccc@{\hskip -2pt}}
    \begin{minipage}[t]{0.19\linewidth}
      \centering
      \includegraphics[width=\linewidth]{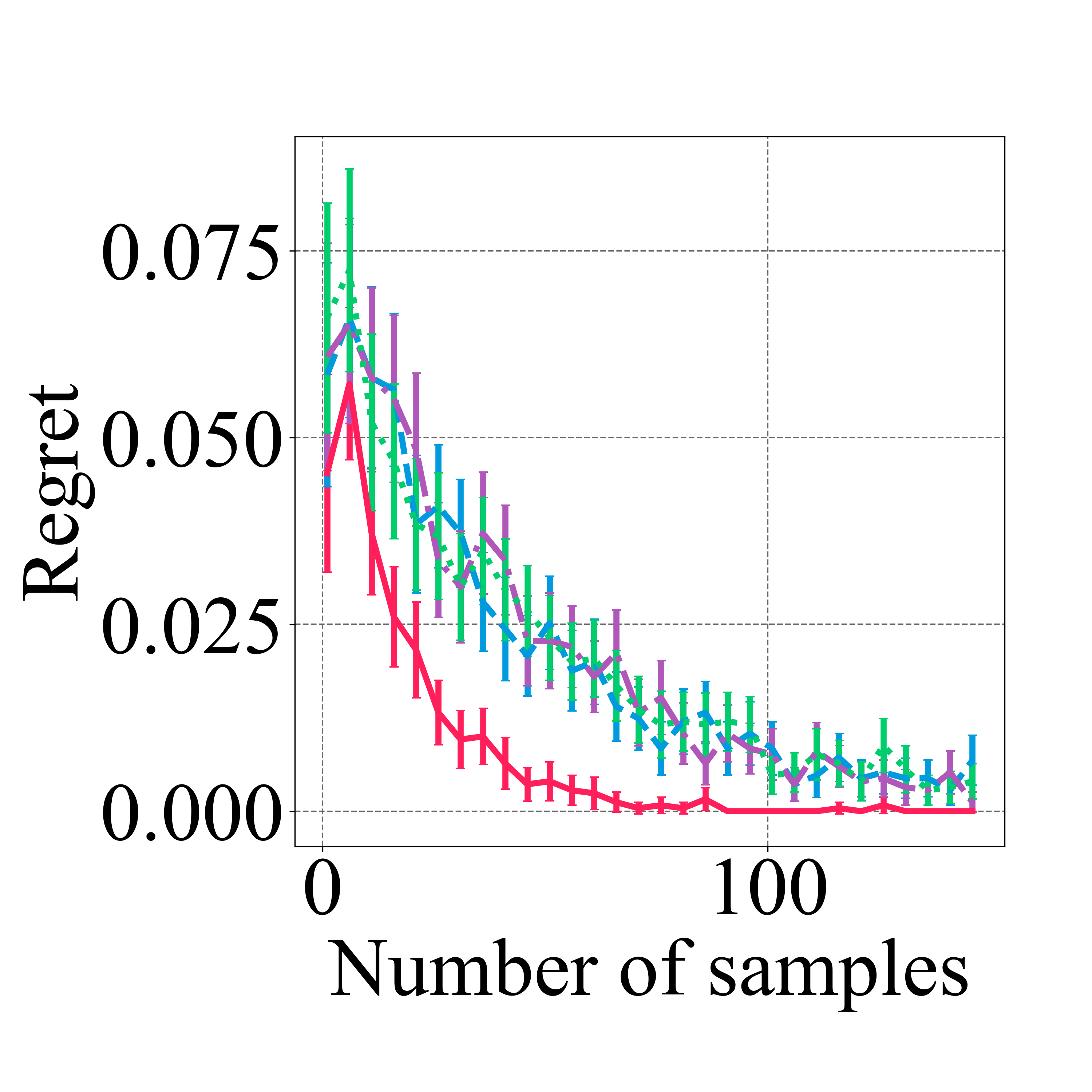}\vspace{-.5em}
      \includegraphics[width=\linewidth]{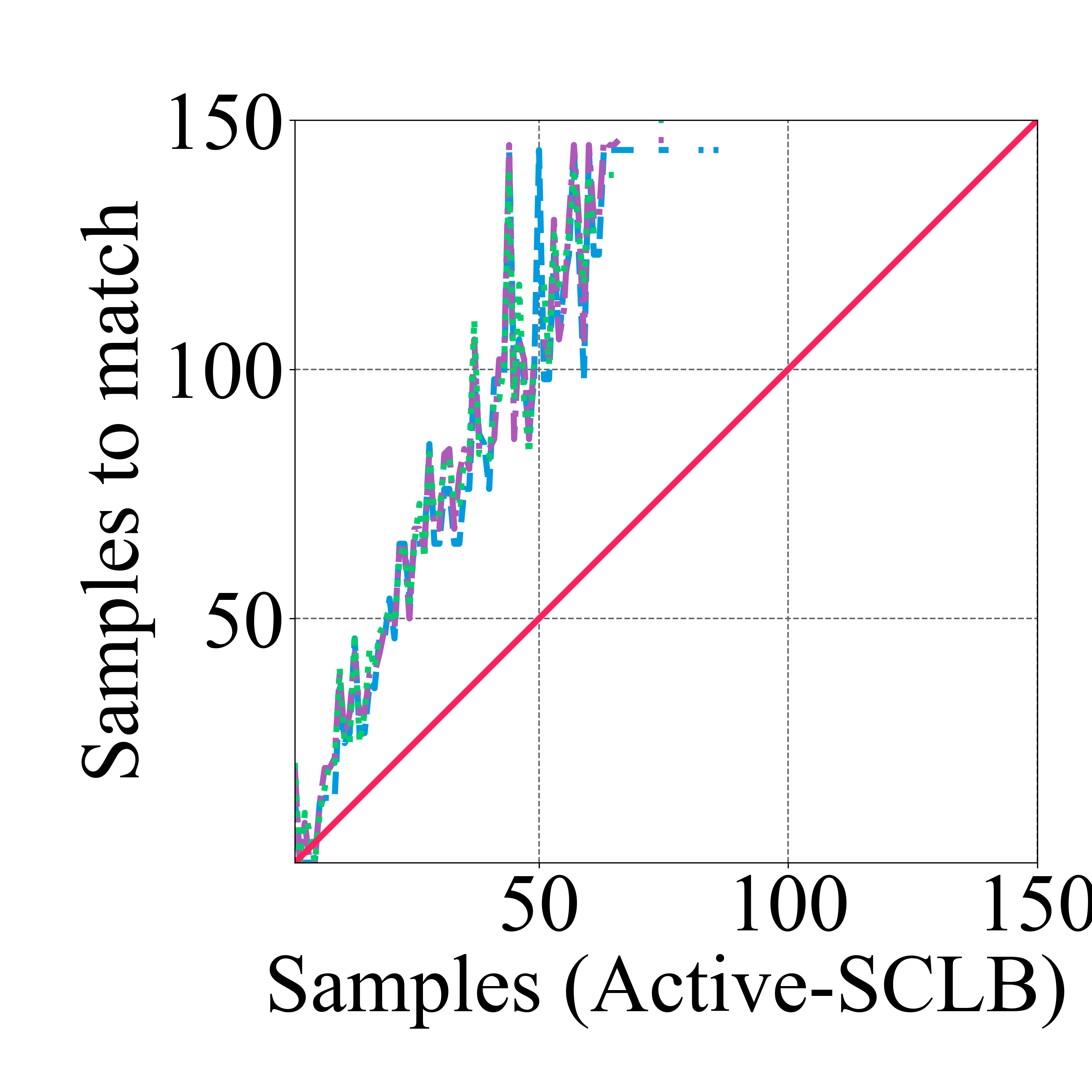}
      \caption*{(a) $\cB^\star_{5, 10}$}
    \end{minipage}&
    \begin{minipage}[t]{0.19\linewidth}
      \centering
      \includegraphics[width=\linewidth]{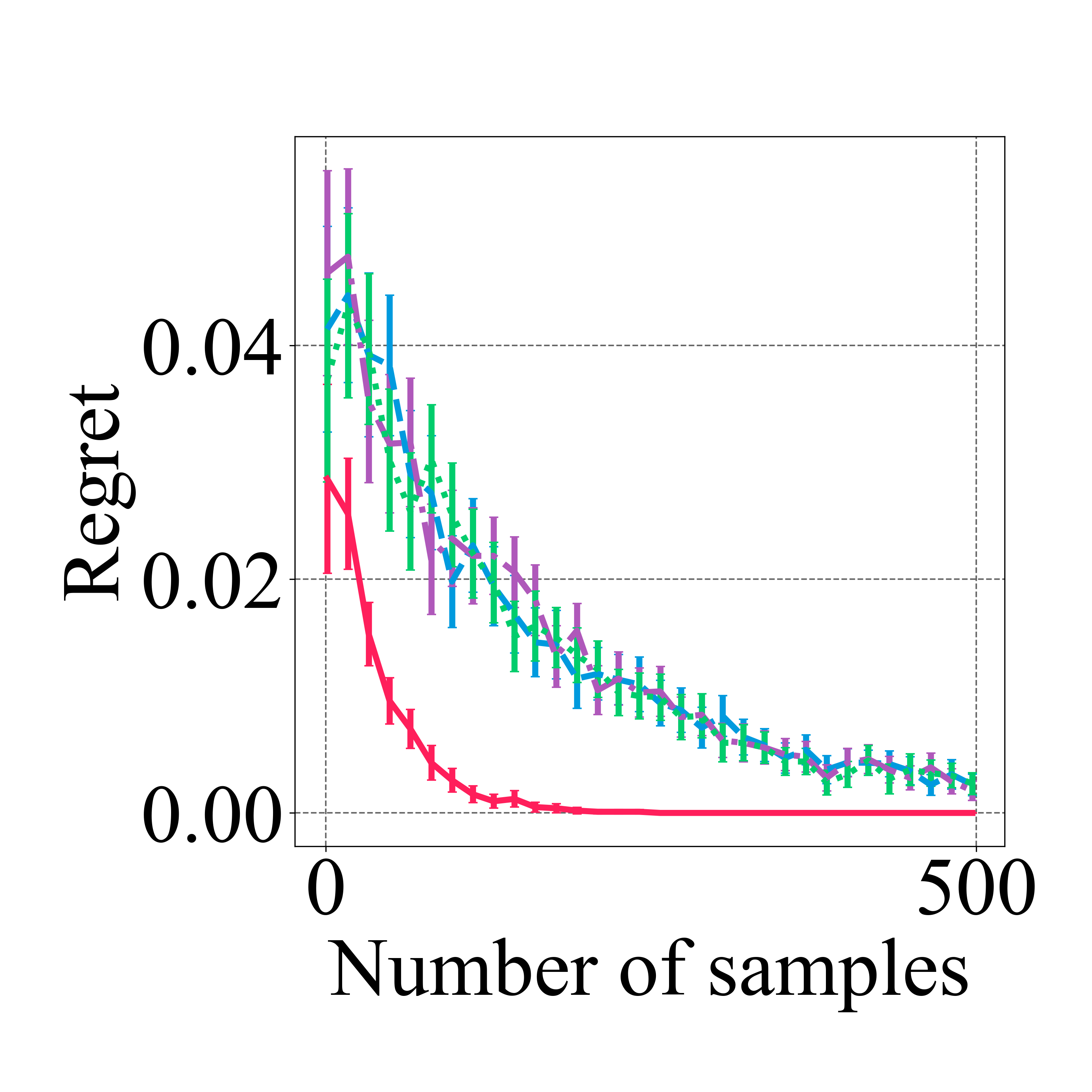}\vspace{-.5em}
      \includegraphics[width=\linewidth]{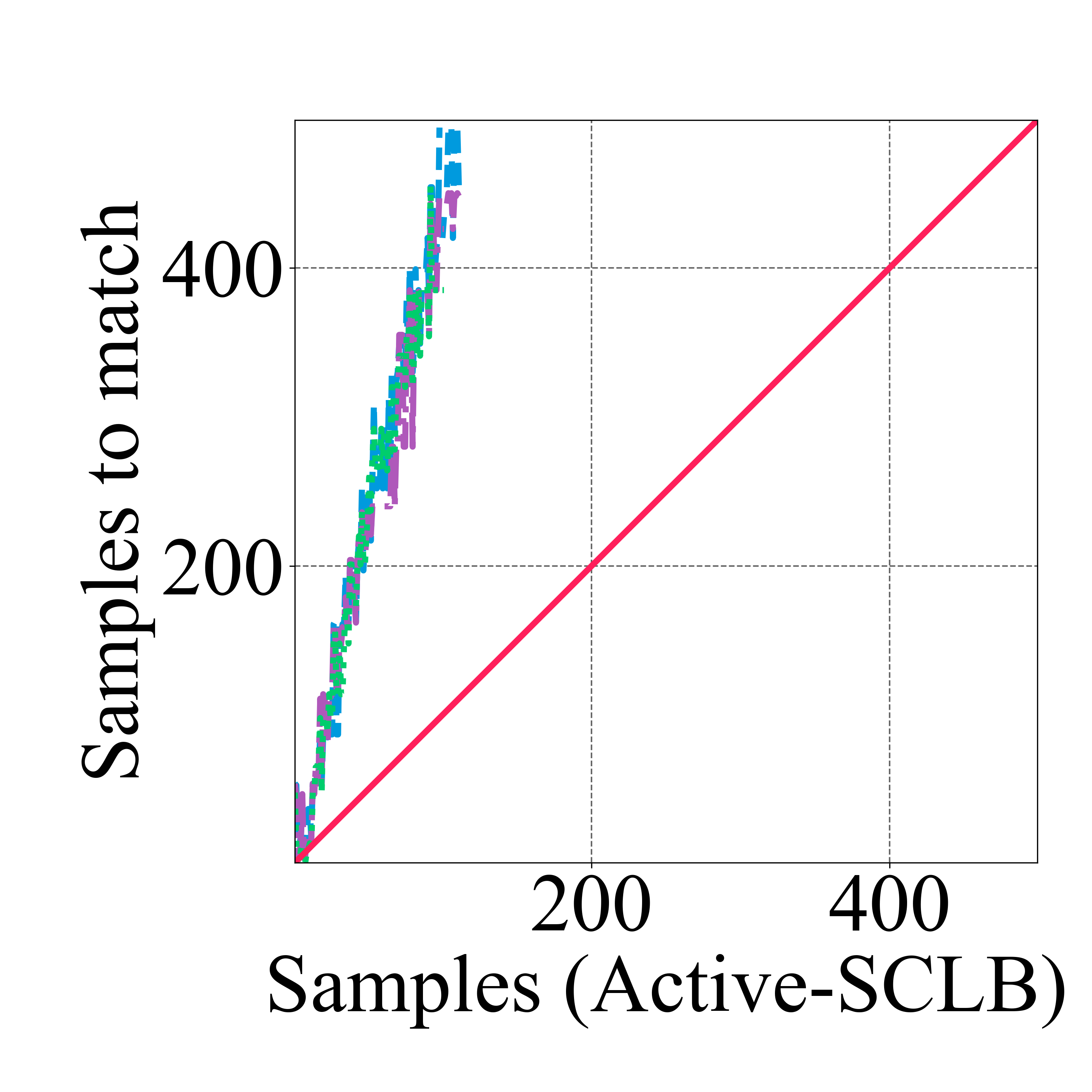}
      \caption*{(b) $\cB^\star_{10, 10}$}
    \end{minipage}&
    \begin{minipage}[t]{0.19\linewidth}
      \centering
      \includegraphics[width=\linewidth]{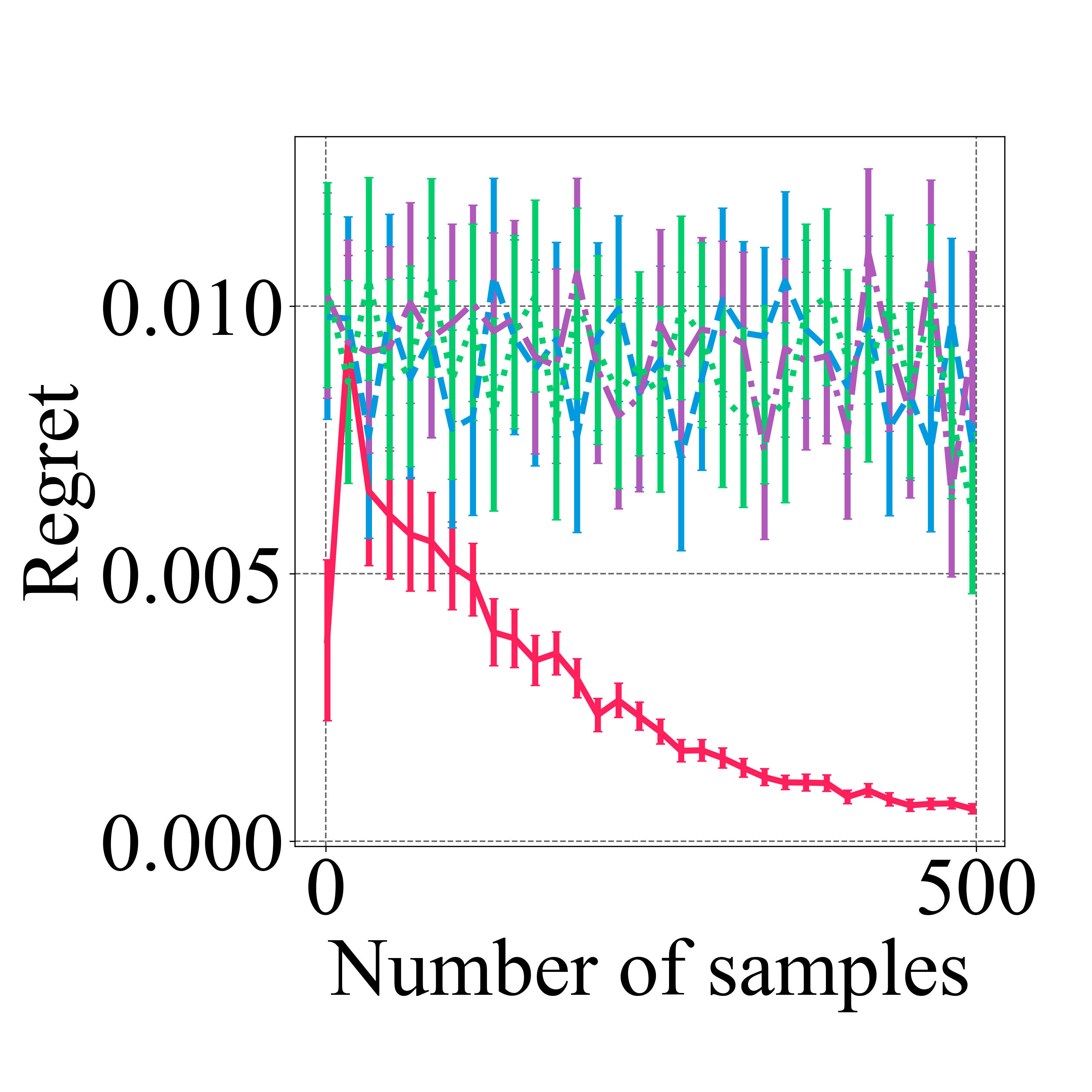}\vspace{-.5em}
      \includegraphics[width=\linewidth]{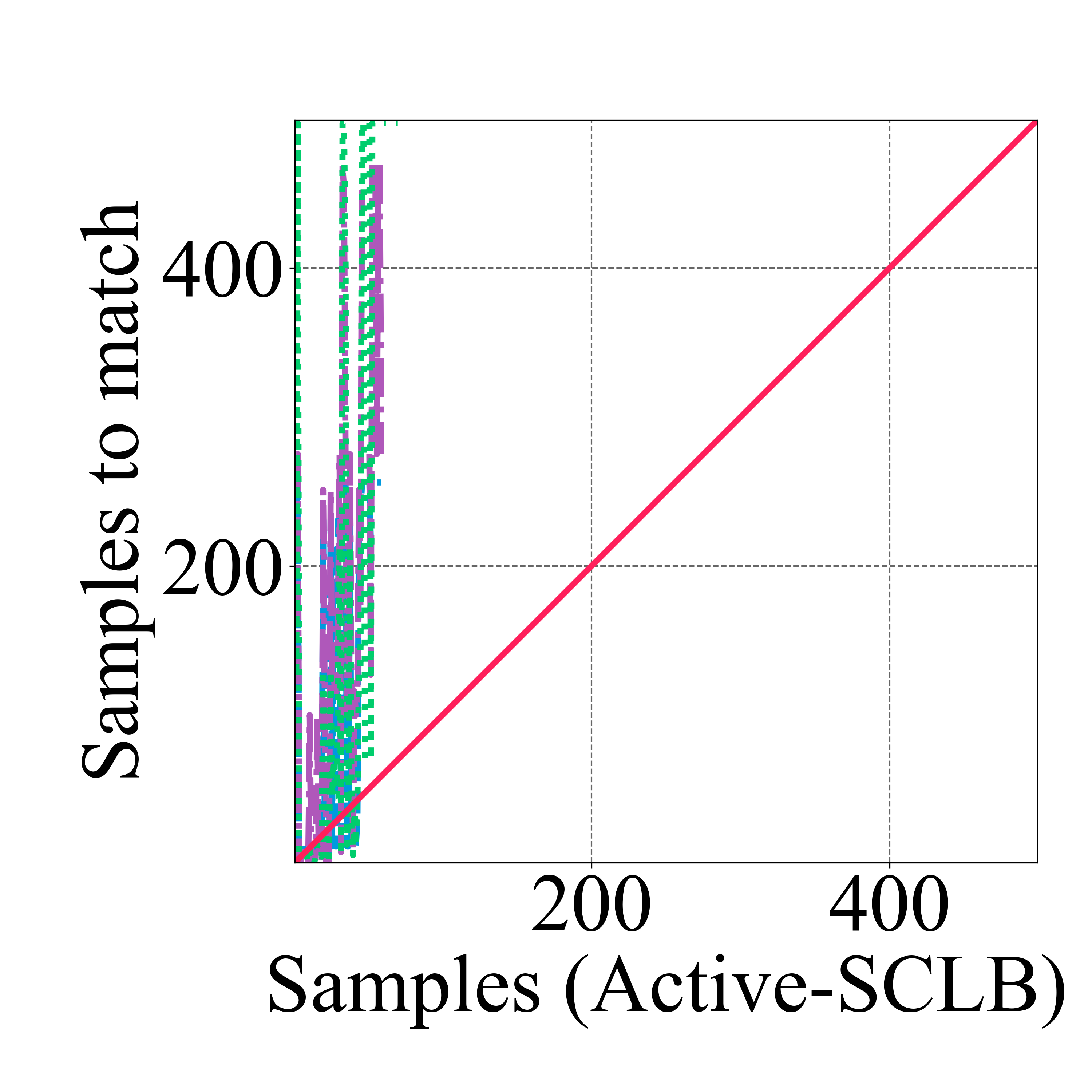}
      \caption*{(c) $\cB^\star_{50, 10}$}
    \end{minipage}&
    \begin{minipage}[t]{0.19\linewidth}
      \centering
      \includegraphics[width=\linewidth]{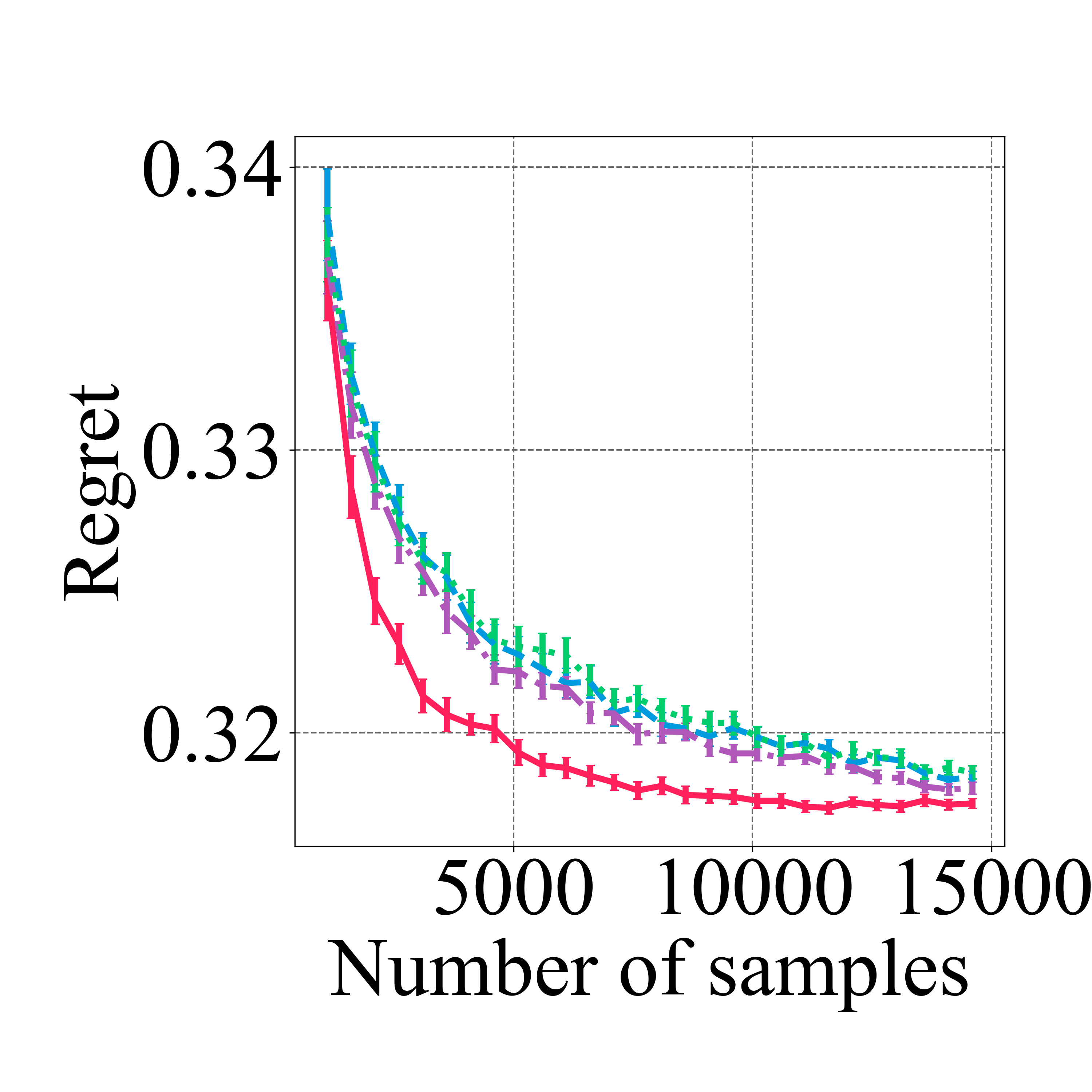}\vspace{-.5em}
      \includegraphics[width=\linewidth]{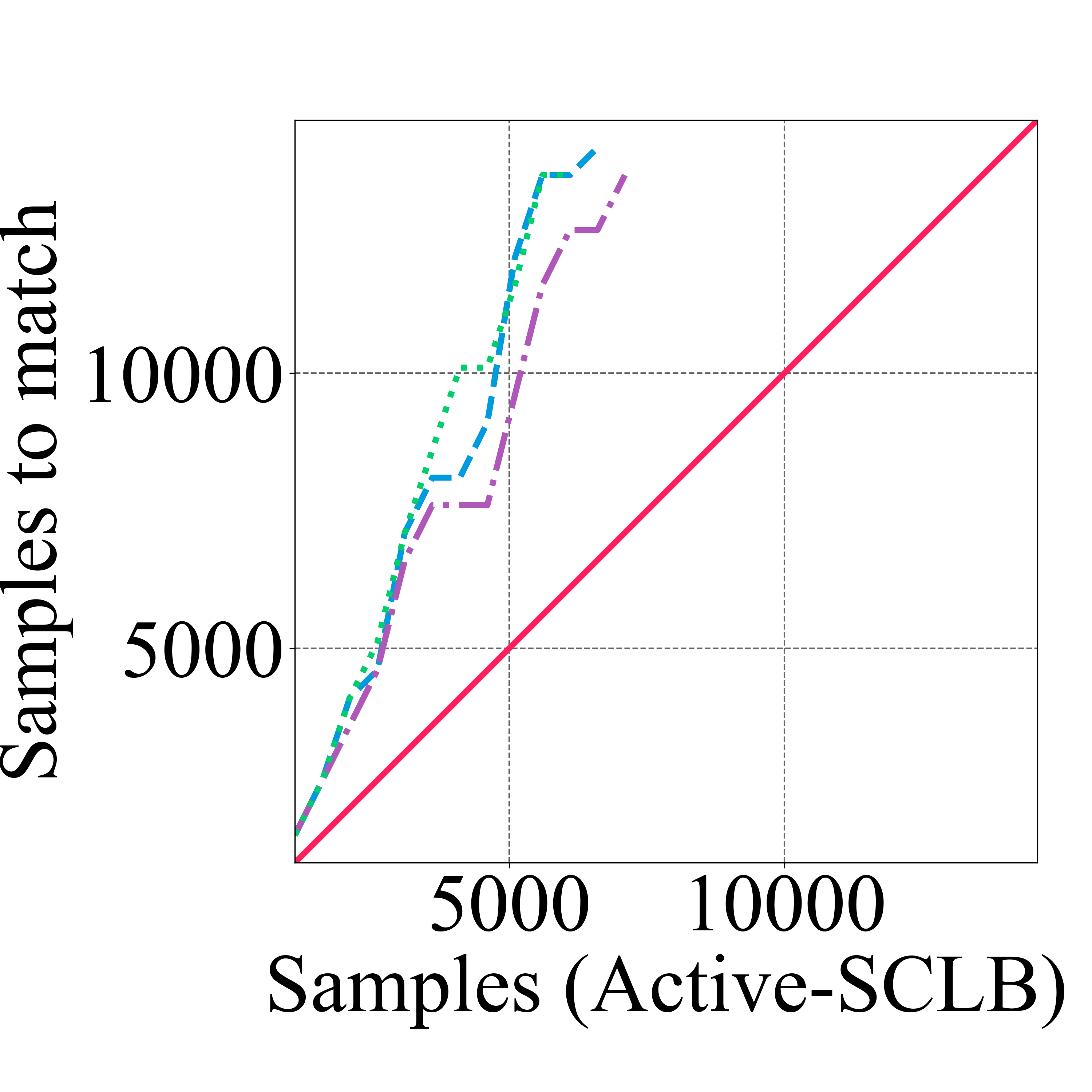}
      \caption*{(d) Warfarin}
    \end{minipage}&
    \begin{minipage}[t]{0.19\linewidth}
      \centering
      \includegraphics[width=\linewidth]{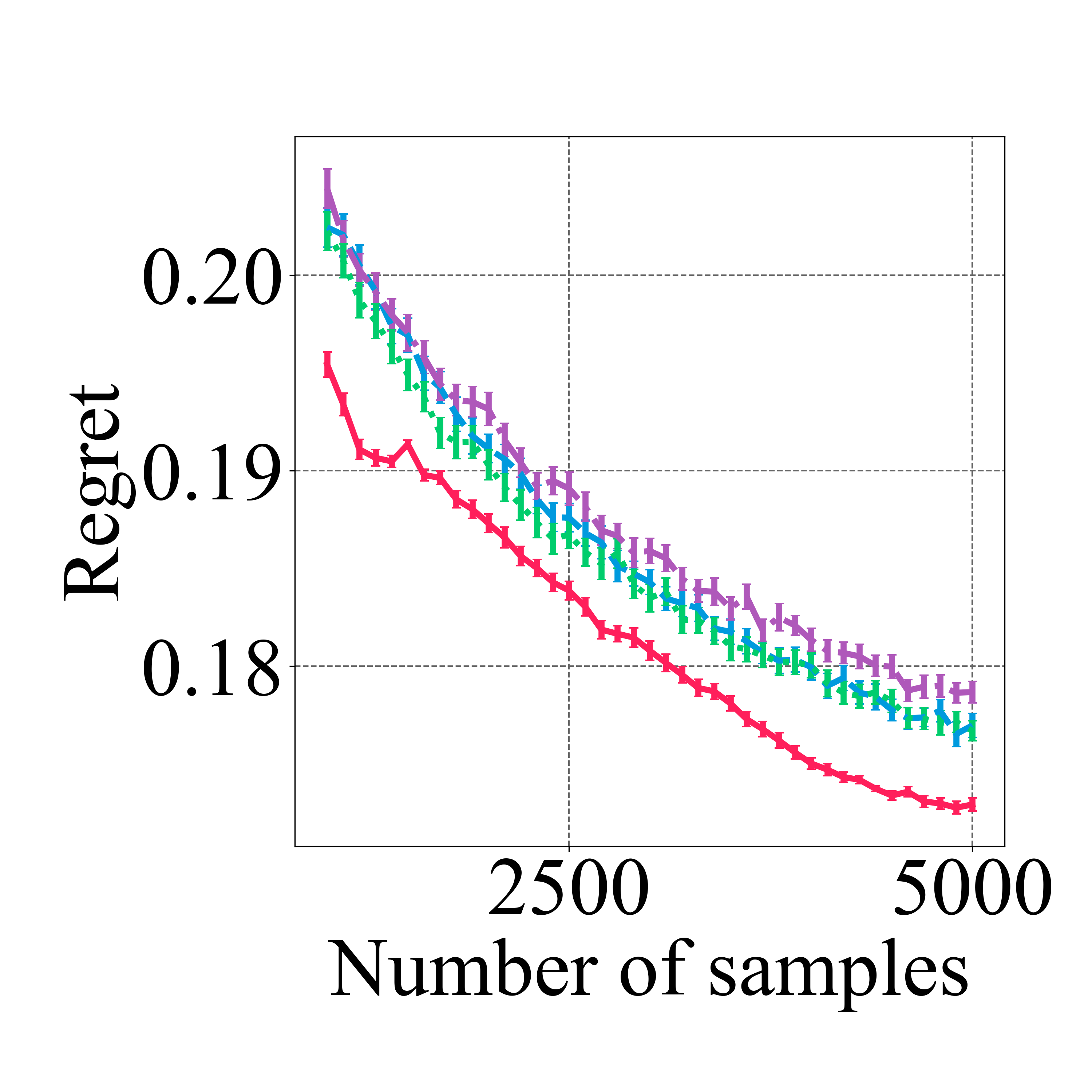}\vspace{-.5em}
      \includegraphics[width=\linewidth]{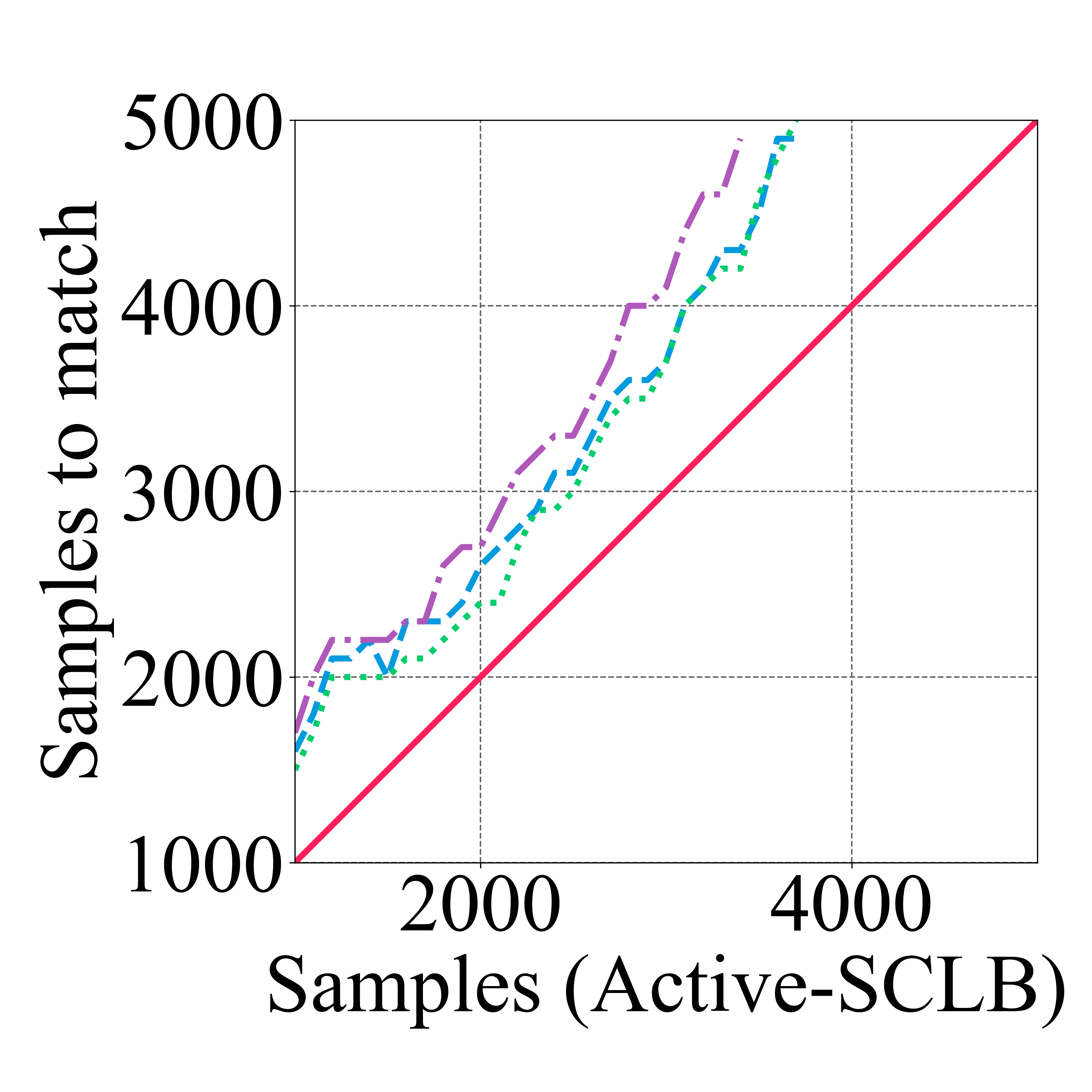}
      \caption*{(e) Jester}
    \end{minipage}
  \end{tabular}

  \vspace{0.6em}
  \includegraphics[width=.8\linewidth]{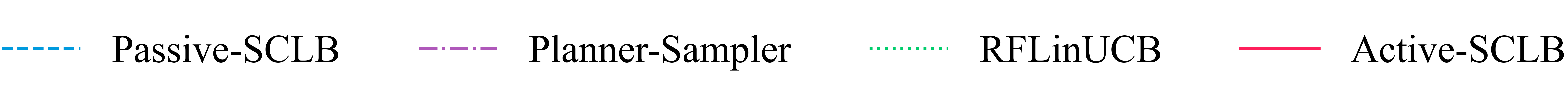}

  \caption{
    Numerical experiments: first three columns show synthetic results on $\mathcal{B}^\star_{d,10}$ (Definition~\ref{def:hard-instance}) for $d \in \{5, 10, 50\}$;
    last two columns show real-world datasets (Warfarin, Jester).
    Top row: Regret vs.\ number of samples (mean $\pm$\,2 standard errors over 100 trials).
    Bottom row: Minimum samples required for baselines' mean regret to match Active-SCLB's mean regret.
    The \emph{naive baseline} has regret 0.382 (Warfarin) and 0.219 (Jester).
  }
  \label{fig:all-experiments}
\end{figure*}

If $k$ users have the same rankings, then they would have the same 95-dimensional feature vector, and  the implicit effect, in our implementation, is that context is up-weighted by a factor of $k$ in the effective context distribution, since we assume that 
is uniform over the original possibly duplicated users. However, our algorithms and experiments continue to work, since there is no assumption that different contexts must have different feature vectors.

Th right two columns of Figure~\ref{fig:all-experiments} show results for $\lambda = 1\text{e-}6$. The top row shows Active-SCLB consistently outperforms baselines. The bottom row shows that on Warfarin, Active-SCLB often requires 5,000 fewer samples to achieve similar regret to the baselines; meanwhile, on Jester, it sometimes requires 1,500 fewer samples.

%% file: main_body/discussion.tex
\paragraph{Runtime.} Active-SCLB and Passive-SCLB have a runtime overhead to initialize a sampling distribution, but after this initial runtime overhead, sampling values from this distribution is fast even if  $|\cX|$ is large and in particular, the reward sampling can be done in parallel \citep{zanette2021design}. RFLinUCB has no initialization overhead because it sequentially selects the next action, depending on the past. However, each sample must be implemented sequentially, so its complexity scales like $T \times R$ where $R$ is the time to collect a single reward observation. Thus, while RFLinUCB has low overhead and is computationally cheap, its practical application can be slow in settings where each reward observation takes time (e.g., clinical trials, educational interventions, world simulations etc.), i.e., when $R$ is large. As \citep{zanette2021design}, this is a key consideration which is not captured in pure numerical simulations, where $R$ is artificially small.

In Appendix~\ref{sec:runtime}, we report runtime (minutes) on Warfarin. Initialization overhead of Active-SCLB is slightly higher than Passive-SCLB and Planner-Sampler due to its added complexity. RFLinUCB has no initialization overhead and is the fastest; however, Active-SCLB consistently achieves lower regret (see Figure 1 of our paper.)

\paragraph{Experiments with approximate $p$.} To evaluate our results in the setting where $p$ is only known approximately, in Appendix~\ref{sec:approximate_p}, we run additional experiments on Warfarin and Jester, where we modified Active-SCLB to replace $p$ with $\hat{p}$, where $\hat{p}$ is the empirical distribution constructed from $|\cX|/4$ random contexts (denoted Active-SCLB-Empirical.) We find that Active-SCLB-Empirical is sometimes slightly worse than Active-SCLB but performs similarly overall and consistently outperforms baselines. Baselines require significantly more samples for the average regret to match that of Active-SCLB-Empirical. 

\paragraph{Varying regularization parameters.} In Appendix~\ref{sec:regularizer}, we include additional experiments ablating the effect of the choice of regularization parameter $\lambda$ on our results. Overall, we find that our results are largely consistent across different scales of $\lambda$.

\section{DISCUSSION}\label{sec:conclusion}

We present an active context sampling algorithm for SCLBs. We prove an \emph{instance-dependent} regret guarantee which in the worst-case, matches the minimax-optimal rate and in the best-case is up to a $\smash{\sqrt{d}}$-factor tighter than comparable works \cite{zanette2021design, abbasi2011improved}. We validate our theoretical analysis with empirical results illustrating the benefits of active context sampling in SCLBs. We hope our work provides useful techniques for future research on the role of active sampling in contextual learning. 

We acknowledge two limitations in our work, which we hope will motivate directions future research. First, works on SCLBs often make a linear realizability assumption \citep{zanette2021design, abbasi2011improved}. However, on the Warfarin and Jester datasets, our experiments show that Algorithm~\ref{alg:active-lcb} performs well even though we expect the true reward models are nonlinear. This suggests our procedure can perform well empirically \emph{even} if the problem is \emph{misspecified} as an SCLB. Thus, extending our techniques to more complex reward models (either empirically or theoretically) would be interesting avenues for future work.

Another limitation in our work is that although SDP-solving is fast for moderately-sized SDPs, SDP-solving could be expensive in massive context-action spaces. On Warfarin and Jester, our experiments show that our method easily scales to at 15,000 context-action pairs. On both datasets, we find that SDP solving takes $<1$ hour. To scale to massive problems, practitioners might explore GPU-accelerated SDP solvers \citep{fujisawa2012high,lin2025pdcs} or experiment with heuristic approximations of Line~\ref{line:approximate} (e.g., sub-sampling context-action pairs prior to solving the SDP). While we conjecture it is nontrivial to provide theoretical guarantees for such heuristics, exploring heuristic active context sampling techniques may be valuable direction for empirical researchers. 

%% file: appendix/organization.tex
\section{ORGANIZATION OF APPENDIX SECTIONS}\label{sec:organization}

We briefly provide an outline of the contents in this appendix. In Appendix~\ref{sec:discussion-assumptions} we discuss how some of the SCLB modeling assumptions in the main body can be relaxed to still yield interesting results. In Appendix~\ref{sec:comparing_settings} we include a helpful visualization to compare our setting of active context sampling for contextual bandits with prior work on linear bandits, passive context sampling for contextual bandits, and traditional active learning for regression. In Appendix~\ref{apx:additional_results} we discuss additional theoretical findings, including an active context sampling version of the data-adaptive ContextualRAGE algorithm proposed in \cite{li2022instance}. In Appendix~\ref{apx:additional-experiments} we discuss additional implementational details, including additional experiments on Warfarin dosage and Joke recommendation with different values of the regularization parameter $\lambda$. In Appendix~\ref{apx:omitted-proofs} we include the full proofs of the theoretical results which were omitted (or only sketched) in the main body. 

\paragraph{Link to code for experiments.} We have made code available to the reviewers at the following anonymous repository: \url{https://anonymous.4open.science/r/ACLB_release-1B6E/README.md}. 

%% file: appendix/assumptions.tex
\section{DISCUSSION OF SCLB MODEL, LIMITATIONS, AND RELAXATIONS}\label{sec:discussion-assumptions}

In this section, we discuss in more detail the role of some assumptions of our SCLB model, and how some of them can be relaxed under appropriate conditions. 

\subsection{Relaxing the assumption that the context distribution $p$ is known.}\label{subsec:approxpsection} One assumption in our work is that the context distribution $p$ is known a-priori. Although we believe this to be reasonable for some applications (see for example, the motivating examples presented in Section~\ref{sec:intro}); in other settings $p$ may only be known \emph{approximately} from \emph{prior historical data from the context distribution}. Consequently, a natural question is the following: 
\begin{center}
    \emph{Suppose that the exact context distribution $p$ is unknown a priori and that the learner only has access to an approximate context distribution $\hat{p} \in \Delta^{\cX}$. Suppose that one runs Algorithm~\ref{alg:active-lcb} with $\hat{p}$ in place of $p$. How do the theoretical guarantees decay as a function of the error between $p$ and $\hat{p}$?}
\end{center}

To answer this question, in Section~\ref{sec:TV} we show that the theoretical guarantees of our algorithm decay smoothly as a function of the total variation distance between $p$ and $\hat{p}$, and we quantify the amount of historical context data needed to control this total variation distance. 

Furthermore, Section~\ref{sec:subsample} we show that on the bandit instance from Definition~\ref{def:hard-instance}, a coarse approximation of $\hat{p}$ constructed from a dataset of $\Theta(d^2)$ sampled contexts is sufficient to maintain the $\sqrt{d}$ improvement over passive context sampling. 

\subsubsection{Theoretical guarantees decay with the total variation distance between $p$ and $\hat{p}$}\label{sec:TV}

In the following, we use $\mathrm{tv}(p, p') \defeq 1/2 \cdot \sum_{x \in \cX} |p(x) - p'(x)|$ to denote the total variation distance between two discrete distributions over a context set $\cX$. 

\begin{theorem}\label{thm:unknown-p} Let $\cB = (\cX, \cA, \phi, p, \nu, \thetastar)$ be an SCLB, $\lambda > 0$, $\delta \in (0, 1)$, and $T > 0$ be a sample budget. Suppose that $p$ is unknown but that one has access to an arbitrary approximation $\hat{p} \in \Delta^\cX$. Let $\hat{\cB} = (\cX, \cA, \phi, \hat{p}, \nu, \thetastar)$ be the corresponding approximate bandit instance and $\beta$ be as defined in \eqref{eq:beta-lambda}. There exists $T_0 = \tilde{O}(d^2)$ so that whenever $T \geq T_0$, with probability $1-\delta$, Algorithm~\ref{alg:active-lcb} outputs $\hat{\pi}$ such that the regret evaluated on the true bandit instance ${\cB}$ satisfies 
\begin{align*}
    R(\hat{\pi}) = \E_{x \sim p} [\max_{a \in \cA} r(x,a) - r(x, \hat{\pi}(x))] \leq \tilde{O}\paren{\sqrt{\beta \cdot \frac{\cC_\cB + d \cdot \mathrm{tv}(p,p')}{T}}}. 
\end{align*}
Moroever, the algorithm runs in polynomial time. 
\end{theorem}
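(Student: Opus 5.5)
The plan is to rerun the proof of Theorem~\ref{theorem:main}, changing only the final step where the uncertainty measure $\Gamma(\cS)$ (taken under the \emph{true} $p$) is related to the SDP objective (taken under $\hat{p}$). Below, $\mathrm{tv}(p,p')$ in the statement is read as $\mathrm{tv}(p,\hat{p})$. Throughout, Algorithm~\ref{alg:active-lcb} is run with $\hat{p}$ in place of $p$ and with $\alpha = 1/2$.

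\textbf{Step 1: parts of the argument that do not depend on $p$.} Line~\ref{line:smooth} produces $q$ without using $p$. Line~\ref{line:approximate} produces $w$ with $w \geq q/2$, so $\Sigma_w \succeq \tfrac12 \Sigma_q$ (the $\lambda/T \cdot I$ term only helps). The matrix-concentration step in the proof of Theorem~\ref{theorem:main} uses only $\max_{(x,a)} \normInline{\phi(x,a)}^2_{\SigmaOInv{q}} \leq 2d$ and $w \geq q/2$. Hence, for $T \geq T_0 = \tilde O(d^2)$, with high probability $T\SigmaOSInv{\cS} \preceq 2\SigmaOInv{w}$. Define
\[
f_v(x) \defeq \max_{a} \normInline{\phi(x,a)}^2_{\SigmaOInv{v}}.
\]
The concentration bound gives
\[
T\,\Gamma(\cS) \leq 2\,\E_{x\sim p} f_w(x).
\]

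\textbf{Step 2: a uniform bound on the objective for smoothed designs (the key new ingredient).} For any $v \geq q/2$ we have $\SigmaOInv{v} \preceq 2\SigmaOInv{q}$, so $0 \leq f_v(x) \leq 4d$ for every $x$. For a bounded function $g$ with $0 \leq g \leq M$, we have $\abs{\E_p g - \E_{\hat p} g} \leq M\,\mathrm{tv}(p,\hat p)$. Therefore, for every feasible $v$ of \eqref{eq:smooth-opt},
\[
\abs{\E_p f_v - \E_{\hat p} f_v} \leq 4d\,\mathrm{tv}(p,\hat p).
\]
This uniform bound is exactly why the smoothing constraint is needed: without it, $f_v$ could be unbounded on contexts that are rare under $\hat p$, and no total-variation comparison would hold. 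I expect this to be the main point to get right.

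\textbf{Step 3: chaining the comparisons.} Let $w^*_{\hat p}$ denote the solution of \eqref{eq:smooth-opt} with $\hat p$, and $w^*_p$ the solution with $p$. Both programs share the same feasible set $\{v : v \geq q/2\}$, because $q$ does not depend on $p$. Then
\begin{align*}
\E_p f_w &\leq \E_{\hat p} f_w + 4d\,\mathrm{tv} \leq 2\E_{\hat p} f_{w^*_{\hat p}} + 4d\,\mathrm{tv} \leq 2\E_{\hat p} f_{w^*_{p}} + 4d\,\mathrm{tv} \\
&\leq 2\E_{p} f_{w^*_{p}} + 12d\,\mathrm{tv} \leq 4\cC_\cB + 12d\,\mathrm{tv}.
\end{align*}
The justifications, in order, are:
\begin{itemize}[leftmargin=*, nosep]
\item Step~2 applied to $w$;
\item the 2-approximation guarantee of Line~\ref{line:approximate}, which holds under $\hat p$;
\item optimality of $w^*_{\hat p}$ over the shared feasible set;
\item Step~2 applied to $w^*_p$;
\item Lemma~\ref{lemma:no-loss} with $\alpha = 1/2$, which bounds $\E_p f_{w^*_p}$ by $2\cC_\cB$.
\end{itemize}

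\textbf{Step 4: conclusion.} Combining Steps~1 and~3 gives $\Gamma(\cS) \leq O\big((\cC_\cB + d\,\mathrm{tv}(p,\hat p))/T\big)$ with high probability. Lemma~\ref{lemma:simple-regret-bound-general}, evaluated with the true $p$ (its proof only needs the self-normalized confidence bound, which is independent of how $\cS$ was chosen since $\cS$ is drawn before the rewards), then gives the stated regret bound. A union bound over the concentration event and the ridge-regression event, each assigned failure probability $\delta/2$, yields overall probability $1-\delta$.

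Polynomial running time is unchanged from Theorem~\ref{theorem:main}, since only the input distribution to the SDP differs.
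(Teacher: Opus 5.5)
Your proposal is correct and follows the paper's argument essentially step for step. The paper's Lemma~\ref{lemma:helpmeout} uses the same uniform bound $\max_a \normInline{\phi(x,a)}^2_{\Sigma_v^{-1}} \le 4d$ for every $v \ge q/2$, the same chain through the $\hat p$-optimal and $p$-optimal smoothed designs over their shared feasible set, and Lemma~\ref{lemma:no-loss}, before rerunning the concentration and ridge-regression steps of Theorem~\ref{theorem:main} unchanged; your only deviation is the sharper estimate $\abs{\E_p g - \E_{\hat p} g} \le M\,\mathrm{tv}(p,\hat p)$ for $0 \le g \le M$ in place of the paper's $M\sum_x \abs{p(x)-\hat p(x)} = 2M\,\mathrm{tv}(p,\hat p)$, which merely improves the constant on the $d\,\mathrm{tv}(p,\hat p)$ term.
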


To prove the theorem, we first prove the following helper lemma. 
\begin{lemma}\label{lemma:helpmeout} Consider the setting of Theorem~\ref{thm:unknown-p}. Let $q$ be as in Line~\ref{line:smooth}. Let $\hat{w}$ be the choice that Line~\ref{line:approximate} of Algorithm~\ref{alg:active-lcb} would select when invoked on $\hat{\cB}$ with $\alpha \gets 1/2$. Then, 
\begin{align*}
      \E_{x \sim p} \max_{a \in \cA} \normInline{\phi(x,a)}_{\Sigma_{\hat{w}}^{-1}}^2 \leq 4\cC_{\cB} + 32 d \cdot \mathrm{tv}(p, \hat{p}). 
\end{align*}
\end{lemma}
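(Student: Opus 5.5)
We compare the objective under $p$ and under $\hat p$ through two uniform bounds: one on the feature norms under $\hat w$, and one under the smoothed optimizer for the true instance. Write $f_w(x) \defeq \max_{a \in \cA} \normInline{\phi(x,a)}_{\Sigma_w^{-1}}^2$, so the quantity to bound is $\E_{x\sim p} f_{\hat w}(x)$.

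\textbf{Step 1 (uniform bound on $f_{\hat w}$).} Line~\ref{line:approximate} is run on $\hat{\cB}$ with $\alpha = 1/2$, so $\hat w \geq q/2$ pointwise. Hence $\Sigma_{\hat w} \succeq \tfrac12 \Sigma_q$; the $\lambda/T$ term only helps, since $\lambda/T \cdot I \succeq \tfrac12 \cdot \lambda/T \cdot I$. Inverting and using Line~\ref{line:smooth}, we get $f_{\hat w}(x) \leq 2 \cdot 2d = 4d$ for every $x$. Since $\hat p$ and $p$ are both probability distributions, $\E_p g - \E_{\hat p} g \leq 2\,\mathrm{tv}(p,\hat p)\,\|g\|_\infty$ for nonnegative $g$. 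Therefore
\[
\E_{p} f_{\hat w} \leq \E_{\hat p} f_{\hat w} + 8d\,\mathrm{tv}(p,\hat p).
\]

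\textbf{Step 2 (comparison to the smoothed optimum for $\hat p$).} Let $w^*_{p}$ and $w^*_{\hat p}$ denote the solutions of \eqref{eq:smooth-opt} under $p$ and $\hat p$, both with the same $q$ and $\alpha = 1/2$. By the $2$-approximation guarantee of Line~\ref{line:approximate} on $\hat{\cB}$, and since $w^*_{p}$ is feasible for the $\hat p$ problem,
\[
\E_{\hat p} f_{\hat w} \leq 2\,\E_{\hat p} f_{w^*_{\hat p}} \leq 2\,\E_{\hat p} f_{w^*_p}.
\]

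\textbf{Step 3 (back to $p$).} The distribution $w^*_p$ also dominates $q/2$, so the Step~1 argument gives $f_{w^*_p} \leq 4d$ uniformly. Hence $\E_{\hat p} f_{w^*_p} \leq \E_p f_{w^*_p} + 8d\,\mathrm{tv}(p,\hat p)$. By the second inequality in Lemma~\ref{lemma:no-loss} (that argument concerns only $w^*$ versus $w^\star$), $\E_p f_{w^*_p} \leq \cC_\cB/(1-\alpha) = 2\cC_\cB$.

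Chaining Steps~1--3 gives
\[
\E_p f_{\hat w} \leq 8d\,\mathrm{tv} + 2\bigl(2\cC_\cB + 8d\,\mathrm{tv}\bigr) = 4\cC_\cB + 24 d\,\mathrm{tv}(p,\hat p),
\]
which is within the claimed $32 d\,\mathrm{tv}$. The claimed constant would even survive the cruder TV estimate $\|g\|_\infty \sum_x |p(x)-\hat p(x)|$ without centering.

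\textbf{Main obstacle.} The key point is the uniform $O(d)$ bound on $f$ for every $\alpha$-smoothed design. Without it, changing the context distribution could change the objective by an unbounded amount. This is exactly why the smoothing constraint with $q$ is needed.
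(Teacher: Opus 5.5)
Your proof is correct and matches the paper's argument essentially step for step. Both use the uniform bound $\max_a \normInline{\phi(x,a)}_{\Sigma_w^{-1}}^2 \le 4d$ for any $w \ge q/2$ to pay $8d\,\mathrm{tv}(p,\hat p)$ per change of context distribution, and chain the transfer under $\hat w$, the $2$-approximation, optimality of $\hat w^*$ against the $p$-optimizer, the transfer back, and Lemma~\ref{lemma:no-loss} to reach $4\cC_\cB + 24 d\,\mathrm{tv}(p,\hat p)$, exactly as the paper does.
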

\begin{proof} For notational convenience, for any bandit $\cB' = (\cX, \cA, \phi, p', \nu, \thetastar)$ and for any distribution $w \in \Delta^{\cX, \cA}$, we use the following shorthand in the remainder of the proof.
\begin{align*}
    \kappa(\cB', w) \defeq \E_{x \sim p'} \max_{a \in \cA} \normInline{\phi(x,a)}_{\Sigma_{w}^{-1}}^2. 
\end{align*}

Consider any $w$ which is feasible for \eqref{eq:smooth-opt} for $\alpha = 1/2$. Then, by the constraint that $w$ dominates $\alpha \cdot q = 1/2 q$ we have that $\Sigma_w \succeq 1/2 \cdot \Sigma_{q}$, and consequently, 
\begin{align*}
     \abs{\kappa(\cB, w) - \kappa(\hat{\cB}, w)} &= \abs{\mathbb{E}_{x \sim p} [\max_{a \in \mathcal{A}} \Vert \phi(x,a) \Vert_{\Sigma_w^{-1}}^{2}] - \mathbb{E}_{x \sim \hat{p}} [\max_{a \in \mathcal{A}} \Vert \phi(x,a) \Vert_{\Sigma_w^{-1}}^{2}]} \\
     &= \abs{\sum_{x \in \mathcal{X}} [p(x) - \hat{p}(x)] \cdot \max_{a \in \mathcal{A}} \Vert \phi(x,a) \Vert_{\Sigma_w^{-1}}^{2}} \\
     &\leq \sum_{x \in \mathcal{X}} |p(x) - \hat{p}(x)| \cdot \max_{a \in \mathcal{A}} \Vert \phi(x,a) \Vert_{\Sigma_w^{-1}}^{2} \\
    &\leq 4d \sum_{x \in \mathcal{X}} |p(x) - \hat{p}(x)| \\
    &\leq 8d \cdot \mathrm{tv}(p, \hat{p})
\end{align*}
where the second-to-last inequality holds because $\Sigma_w \succeq 1/2 \cdot \Sigma_{{q}}$ ensures that for any $(x, a) \in \cX \times \cA$
\begin{align*}
\Vert \phi(x,a) \Vert_{\Sigma_w^{-1}}^{2} \leq 2 \Vert \phi(x,a) \Vert_{\Sigma_q^{-1}}^{2} \leq 2 \cdot 2d = 4d,
\end{align*}
and the last inequality holds by definition of the total-variation distance $\mathrm{tv}(p, \hat{p})$. 

Now, let $\hat{w}^*$ and $w^*$ be the choices of distributions that \eqref{eq:smooth-opt} would select when invoked on $\hat{\cB}$ and $\cB$ respectively. That is, 
\begin{align*}
    w^* &= \argmin_{w \in \Delta^{\cX \times \cA}} \E_{x \sim p} \max_{a \in \cA} \normInline{\phi(x,a)}_{\Sigma_w^{-1}}^2, ~~~\text{ subject to } w(x,a) \geq \alpha \cdot {q}(x,a), ~\forall (x,a) \in \cX \times \cA \\
    \hat{w}^* &= \argmin_{w \in \Delta^{\cX \times \cA}} \E_{x \sim \hat{p}} \max_{a \in \cA} \normInline{\phi(x,a)}_{\Sigma_w^{-1}}^2, ~~~\text{ subject to } w(x,a) \geq \alpha \cdot {q}(x,a), ~\forall (x,a) \in \cX \times \cA. 
\end{align*}
Then, by the argument above, we have that 
\begin{align}\label{eq:bounder}
    \kappa({\cB}, \hat{w}^*) \leq \kappa(\hat{\cB}, \hat{w}^*) + 8d\mathrm{tv}(p, \hat{p}) \leq \kappa(\hat{\cB}, w^*) + 8d\mathrm{tv}(p, \hat{p}) \leq \kappa(\cB, w^*) + 16 d \mathrm{tv}(p, \hat{p}), 
\end{align}
where the first step follows because $|{\kappa(\cB, \hat{w}^*) - \kappa(\hat{\cB}, \hat{w}^*)}| \leq 8d \mathrm{tv}(p, p')$, the second step follows because of the optimality conditions for $\hat{w}^*$, and the third step follows because $|\kappa(\cB, {w}^*) - \kappa(\hat{\cB}, {w}^*)| \leq 8d \mathrm{tv}(p, p')$. 

By the definition of $\hat{w}$, note that
\begin{align}\label{eq:approximationhere}
    \kappa(\hat{\cB}, \hat{w}) \leq 2 \kappa(\hat{\cB}, \hat{w}^*). 
\end{align}
Consequently, 
\begin{align*}
    \kappa(\cB, \hat{w}) \leq \kappa(\hat{\cB}, \hat{w}) + 8 d\mathrm{tv}(p, p') 
    \leq 2 \kappa(\hat{\cB}, \hat{w}^*) + 8 d \mathrm{tv}(p, p') 
    \leq 2 \kappa(\cB, w^*) + 24 d \mathrm{tv}(p, \hat{p}), 
\end{align*}
where the first inequality holds because $|\kappa(\cB, \hat{w}) - \kappa(\hat{\cB}, \hat{w})| \leq 8d \mathrm{tv}(p, p')$; the second inequality holds by \eqref{eq:approximationhere}; and the last inequality holds due to \eqref{eq:bounder}. The result now follows by Lemma~\ref{lemma:no-loss}, which ensures that $\kappa(\cB, w^*) \leq 2 \cC_{\cB}$. 


\end{proof}

\begin{proof}[Proof of Theorem~\ref{thm:unknown-p}] The proof follows identically as that of Theorem~\ref{theorem:main}, except that the second-to-last display instead holds (by Lemma~\ref{lemma:helpmeout}) with
\begin{align*}
    \E_{x\sim p} \max_{a \in \cA} \normInline{\phi(x,a)}_{\Sigma_{\cS}^{-1}} \leq \frac{2}{T} \E_{x \sim p} \max_{a \in \cA} \normInline{\phi(x,a)}_{\Sigma_w^{-1}}^2 \leq \frac{8}{T} \cC_{\cB} + 48 d \cdot \mathrm{tv}(p, p'). 
\end{align*}
\end{proof}

We also obtain the following useful corollary. 
\corrolaryapproxp*
\begin{proof} The proof follows by applying Theorem~\ref{thm:unknown-p} and noting that $M = \tilde{O}(|\cX| d^2 \epsilon^{-2})$ is sufficient samples such that with high probability, $\mathrm{tv}(p, p') \leq \epsilon/d$ \citep{canonne2020short}. 


\end{proof}

\subsubsection{Sustained improvement on the bandit instance from Definition~\ref{def:hard-instance}}\label{sec:subsample}

A natural and important question is whether the gains to active context sampling over passive sampling strategies still persist when we use an empirical estimate of the context distribution. 

We expect this to be true in light of the results in the previous sub-section, and we now show on the example from Definition~\ref{def:hard-instance}, that optimizing with respect to an empirical context distribution (instead of the true context distribution) still enables our approach to recover our $\sqrt{d}$-factor improvement over passive context sampling. 

For the example instance in Definition~\ref{def:hard-instance}, suppose that we build an empirical estimate $\hat{p}$ using $M$ samples of \textit{only contexts}. We believe it is very common for there to be prior large datasets of contexts -- consider recommendation systems, etc. Recall the multiplicative Chernoff bounds: for $X_{1\cdots M}$ independent Bernoulli random variables in (0,1) with mean $p_k$, and $\epsilon < 1$ then
\begin{equation*}
\mathbb{P}\left(\frac{1}{M} \sum_i X_i \geq (1+\epsilon) p_k\right) \leq e^{- \epsilon^2 M p_k / (2 + \epsilon)}
\end{equation*}
and 
\begin{equation*}
\mathbb{P}\left(\frac{1}{M} \sum_i X_i \leq (1-\epsilon) p_k\right) \leq e^{- \epsilon^2 M p_k / 2}
\end{equation*}
Set $\epsilon=.25$. Thus, $M_k = \frac{36}{p_k} \log (2 / \delta)$  is sufficient to ensure that for the resulting estimate $\mathbb{P}(\hat{p}_k \geq (1+\epsilon) p_k) + \mathbb{P}(\hat{p}_k \leq (1-\epsilon) p_k) \leq \delta$, where $\hat{p}_k$ is the empirical estimate $\frac{1}{M} \sum_i X_i$. 
We use a union bound to ensure this holds for all $d$ contexts, and chose the minimum probability $p_k = \frac{1}{d^2}$, $M = 36 d^2 \log (2 d / \delta)$. This ensures that with probability at least $1-\delta$, 
\begin{align*}
    \mathbb{P}\left[\max_{x \in \mathcal{X}} |p(x) - \hat{p}(x)| \leq p(x)/4 \right]. 
\end{align*}
Condition on the above event in the remainder of this argument. Note that under the above event, we have that for all $x > 1$, 
\begin{align*}
    \hat{p}(1) \geq \frac{3}{4} \left( 1 - \frac{d-1}{d^2}\right) > \frac{5}{4} \cdot \frac{1}{d^2} \geq \hat{p}(x). 
\end{align*}

Now, consider the SDP problem we solve in Algorithm 1 if we use $\hat{p}$ in place of $p$. Note that every context $(x,a)$ where $a \neq 1$ has the feature vector $\phi(x,a) = e_1$, and $\Sigma_w$ depends \emph{only} on the mass that $w$ places on each $e_i$ for $i \in [d]$. Thus, without loss of generality, we may assume that under $w$, the feature $e_1$ is only accessed at the context-action pair $(x=1, a=1)$. That is, we may assume without loss of generality that $w(x,a)$ is only supported at $a=1$. Consequently, we have
\begin{align*}
    \Sigma_w &= \sum_{x,a}w(x,a)\phi(x,a)\phi(x,a)^\top= \sum_{i=1}^d z_i e_ie_i^\top
\end{align*}
where $z_i \defeq w(i,1)$ for each $i \in [d]$. Then
\begin{align*}
    \Sigma_w^{-1}&=\sum_{i=1}^d z_i^{-1} e_ie_i^\top, 
\end{align*}
and consequently,
\begin{align*}
    \max_{a}\|\phi(x,a)\|_{\Sigma_w^{-1}}^2 = \max_{a}\phi(x,a)^\top \Sigma_w^{-1}\phi(x,a) =  \sum_{i=1}^d \max_a z_i^{-1} \normInline{\phi(x,a)}_2^2.
\end{align*}
Recall the definition that $\phi(x, a)= e_x$ if $a = 1$ and $e_1$ otherwise. Hence, the max becomes $\min(z_1, z_x)^{-1}$. Plugging this in the formulation of SDP gives 
\[\E_{x \sim \hat{p}} \max_{a \in \cA} \normInline{\phi(x,a)}_{\Sigma_w^{-1}}^2=\sum_x \hat{p}(x)\cdot \frac{1}{\min(z_1, z_x)}.\]

Next, we prove that $z_1 \geq z_i$ for all $i \neq 1$. 
Indeed, suppose for the sake of contradiction that $z_1 < z_i$ for some $i \neq 1$. Then consider $z'\in\Delta^{\mathcal{X}}$ such that $z'_x=\frac{1}{2}(z_1+z_i)$ if $x=1$ or $x=i$, and $z'_x=z_x$ for $x\neq 1$ and $x\neq i$. Note that $z'_1=z'_i>z_1$ by this construction. Let $w$ and $w'$ be the weights corresponding to $z$ and $z'$. Then 
\begin{align*}
    \E_{x \sim \hat{p}} \max_{a \in \cA} \normInline{\phi(x,a)}_{\Sigma_w^{-1}}^2&=\frac{\hat{p}(1)}{z_1}+\frac{\hat{p}(i)}{z_1}+\sum_{x\neq 1,x\neq i} \hat{p}(x)\cdot \frac{1}{\min(z_1, z_x)}\\
    &> \frac{\hat{p}(1)}{\frac{1}{2}(z_1+z_i)}+\frac{\hat{p}(i)}{\frac{1}{2}(z_1+z_i)}+\sum_{x\neq 1,x\neq i} \hat{p}(x)\cdot \frac{1}{\min(z_1, z_x)}\\
    &= \frac{\hat{p}(1)}{z'_1}+\frac{\hat{p}(i)}{z'_i}+\sum_{x\neq 1,x\neq i} \hat{p}(x)\cdot \frac{1}{\min(z'_1, z'_x)}\\
    &=\E_{x \sim \hat{p}} \max_{a \in \cA} \normInline{\phi(x,a)}_{\Sigma_{w'}^{-1}}^2.
\end{align*}
Therefore, such $w$ (and thus such $z$) cannot be a minimizer of the expectation. Hence, the SDP reduces to 
\begin{align*}
    \min_{z \in \Delta^{n}} \sum_{x \in \mathcal{X}} \hat{p}(x) \cdot \frac{1}{z_x}. 
\end{align*}
Using Lagrange multipliers we get the closed form solution, 
$z_x = \frac{\sqrt{ \hat{p}(x)}}{\sum_x \sqrt{ \hat{p}(x)} }$, and optimal value 
\begin{align*}
\paren{\sum_{x \in \mathcal{X}} \sqrt{\hat{p}(x)}}^2. 
\end{align*}
Moreover, using the guarantee above on the accuracy of the learned $\hat{p}(x)$ parameters given $M$ prior contexts, we prove 
\begin{align*}
\paren{\sum_{x \in \mathcal{X}} \sqrt{\hat{p}(x)}}^2 \leq \paren{\sum_{x \in \mathcal{X}} \sqrt{1.25 \cdot p(x)}}^2 = \paren{\sqrt{1.25 \cdot \frac{d^2-d+1}{d^2}} + 1.25 \cdot \sum_{x =2}^d \frac{1}{d}  }^2 = O(1).
\end{align*}
Therefore, given $M = 36d^2 \log(2d/\delta)$ samples, with high probability the active context sample algorithm using the empirically-derived context probabilities will also improve over passive context sampling by a $\sqrt{d}$ factor. This shows that at least in some settings, using the empirical distribution to estimate the context distribution will yield the same $\sqrt{d}$ improvement in the resulting regret bounds, compared to passive context sampling methods. 

\subsection{Using the G-optimal distribution directly is insufficient}\label{sec:insufficient}

Another, perhaps related, question is the following: 

\begin{center}
    \emph{When constructing $w$, would using G-optimal design $(w \gets {q}^\star)$ \\ obtain a similarly strong rate as our Active-SCLB?
    }
\end{center}

In this subsection, we provide a \emph{negative} answer to this question. Indeed, Theorem~\ref{thm:kiefer-wolofitz} shows that the G-optimal design $\hat{q}$ satisfies $\max_{(x,a)} \Vert \phi(x,a) \Vert_{\Sigma_{\hat{q}}^{-1}}^2 = d$. By Lemma~\ref{lemma:simple-regret-bound-general}, this in general yields a bound of $\sqrt{d\beta/T}$. This gives \emph{no improvement} over passive context sampling. 

As an illustrative example, consider again the bandit instance from Definition~\ref{def:hard-instance}. A valid $G$-optimal design is the one that selects $(x,1)$ with probability $1/d$ for each $x \in [d]$. To see why, note that the $G$-optimal design tries to minimize $\max_{x,a} \normInline{\phi(x,a)}_{\Sigma_q^{-1}}$ over all $q \in \Delta^{\cX \times \cA}$. In this setting, because each $\phi(x,a)$ is a standard basis vector, by symmetry, the $G$-optimal design must place equal mass on each of the $d$ features. One way to achieve this is to select the context-action pair $(x,1)$ with probability $1/d$ for each $x \in [d]$.

The $G$-optimal design upsamples the rare contexts too much, causing a $d$-dependence in the bound. Indeed, we find that
\begin{align*}
    \mathbb{E}_{x \sim p} \Vert e_x \Vert_{\Sigma_{{q}}^{-1}}^2 =  \left(\frac{d^2-d+1}{d^2}\right) \cdot d + \sum_{i = 2}^d \frac{1}{d^2} \cdot d = \Omega(d),
\end{align*}
and hence, Lemma~\ref{lemma:simple-regret-bound-general} suggests that even on this simple bandit instance, if one were to modify  Algorithm~\ref{alg:active-lcb} to use $w \gets \hat{q}$ in Line~\ref{line:approximate}, then one would again obtain a worse $d$-dependence than using the active context sampling distribution $w = \alpha \hat{q} + (1-\alpha) \hat{w}$ as in our original Algorithm~\ref{alg:active-lcb}. 

\subsection{Our framework naturally enables certain nonlinear reward models} 

Our method is immediately amenable to some non-linear reward models in the sense that one could always lift the feature vector into a higher-dimensional space which captures non-linear relationships between the original feature entries. To briefly explain why this is the case, recall that one can always use linear regression algorithms to fit a quadratic regression model  just by modifying the feature vector to include the transformed variable $x^2$ in addition to $x$ as a feature. What is important is just that the reward is linear (or approximately so) in the lifted space. See, for example, the empirical section of \citet{kong2020sublinear}. 

Of course, the difficulty with capturing \emph{arbitrariy complex} nonlinear reward models is that in order to represent a complex family of reward models, we may need to lift the features into a \emph{very high dimensional space} (increase $d$) and such a transofmation is associated with higher sample complexity and computational costs.  

To circumvent this curse of dimensionality, in principle, the idea of active context sampling could also apply to kernelized contextual bandits. However, the main challenge is that in these more complex reward models, it is not clear whether the optimization problem to solve for the optimal sampling distribution is computationally tractable (e.g., solvable in polynomial time.) Some prior works have discussed greedily sampling contexts for the kernelized settings, but this does not result in tight guarantees that avoid a dimension dependence (see the final paragraph of Section~\ref{sec:setup}.) Thus, while developing theoretically-grounded techniques for very general reward models might be challenging, we hope that our work provides potentially useful insights towards tackling more sophisticated reward models as future work. Moreover, as discussed in Section~\ref{sec:conclusion}, our empirical results on real-world data seem robust to model misspecification, which indicates that our methods could outperform their theoretical guarantees even if the true reward model is not linear. 

 \subsection{Pure exploration setting and choice of simple regret as the objective function}\label{sec:regret-setting}

 In this section, we discuss the motivation for our pure exploration setting as well as our choice of regret function. 
 
 \paragraph{Pure exploration setting vs cumulative regret minimization.} Recall that our work focuses on the pure exploration setting, where a learner first collects some reward observations during an exploration phase and then deploys a learned policy. The quality of the policy is the expected regret \emph{during deployment}; the learner is not penalized for regret incurred during the exploration phase \citep{krishnamurthy2023proportional, zanette2021design, li2022instance, deshmukh2018simple}. 
 
 However, we acknowledge that if one's goal is to minimize cumulative regret during exploration, the techniques in our paper may not be ideal and alternative algorithms may be preferable, such as the LinUCB algorithm of \citet{abbasi2011improved}. This is because, similar to \citet{li2022instance, zanette2021design}, we do not provide bounds on the \emph{cumulative} regret.  

 The reason we focus on pure exploration is that it is better aligned with some important settings where a fixed experimental period is common. For example:
 \begin{itemize}[leftmargin=*, nosep]
     \item Social media companies often run experimental studies on users. A company might be willing to test out several different strategies (knowing that it may temporarily lose on engagement) in order to quickly learn a high-quality strategy to deploy in production long-term.
     \item In medical settings, one might design a monitored clinical trial in which patients are actively monitored in case of any adverse consequences to the administered treatment. Such monitoring is not possible long-term or at-scale, however, the willingness to take a measured risk to quickly explore new treatments during a short-term trial can be valuable for eventually identifying good treatments to prescribe to the general public.
     \item Another natural setting where the pure exploration setting is well-suited is applications where we have the ability to simulate the reward of a particular context-action pair by running an (expensive) simulation. This may arise in scientific decision making where physical or medical simulations are available. In such cases, one is unconcerned with the regret during exploration.
 \end{itemize}

To summarize, during the exploration period, our regret may be high, however, this freedom to explore actions freely enables us to quickly (i.e., with few samples) converge upon a near-optimal policy which can be deployed at scale after the exploration stage. In contrast, if one tries to balance exploration and exploitation as in the online bandit setting, convergence to an optimal policy could be slower. This is precisely what motivates the extensive prior work on the pure exploration model \cite{zanette2021design, deshmukh2018simple, krishnamurthy2023proportional, li2022instance}. 

As a very important related note, note that there is a sense in which active context sampling \emph{only} makes sense in the pure exploration setting and does not make sense in the cumulative regret minimization setting. As a thought experiment, suppose we modify the cumulative regret minimization setting to allow the learner to actively select the context at each round $t$. Then, there is a trivial way to achieve low cumulative regret: the algorithm could just select the \emph{same} context $x'$ in every single round and learn an optimal action for context $x'$. This would have low cumulative regret, since the learner only needs to learn a good action in context $x'$. However, evidently, this is not a very interesting setting, since the learner would never learn to perform well on the real-world distribution $p$. Thus, active context sampling is, in a sense, trivial in the cumulative regret setting. 

In contrast, our work allows the learner to actively sample contexts freely during the exploration phase; however, it must learn a policy that will perform well in the eventual real-world distribution $p$. This is diagrammed in Figure~\ref{fig:learning-paradigms}. 

\paragraph{Simple regret vs best arm identification.}

In PAC-learning for best-arm identification, the goal is to find the best arm in every context \cite{soare2014best}. That is, the goal would be to identify an approximately optimal action in \emph{every} context.

In contrast, in simple regret, we are okay with using a suboptimal arm in some rare contexts as long as the learned policy performs well \emph{on average} over the distribution of contexts. In this sense, best-arm-identification is a very strict “solution concept” which requires strong performance uniformly over every context. Meanwhile, simple regret is more relaxed and often more realistic in that it only asks for learning context-to-action policy which performs well (on average) over the context distribution. In practical scenarios, we are often satisfied with a policy which works very well on average (even if it is suboptimal in some rare contexts). However, to truly find the best arm in every context as in best-arm identification, we might need to spend many more samples.

This is why prior works have also looked at simple regret under the SCLB model to better capture real-world scenarios where average-performance of a policy is the key performance indicator \cite{zanette2021design, deshmukh2018simple, krishnamurthy2023proportional, li2022instance}. 

%% file: appendix/compare_settings.tex
\newpage

\section{Comparison of active learning settings}\label{sec:comparing_settings}

In this section, we include a helpful visualization (Figure~\ref{fig:learning-paradigms}) to compare linear bandits, active learning (e.g., for regression), passive learning for SCLBs, and active learning for SCLBs. 

\begin{figure}[H]
  \centering
  \begin{subfigure}[b]{\textwidth}
    \centering
    \includegraphics[width=\linewidth]{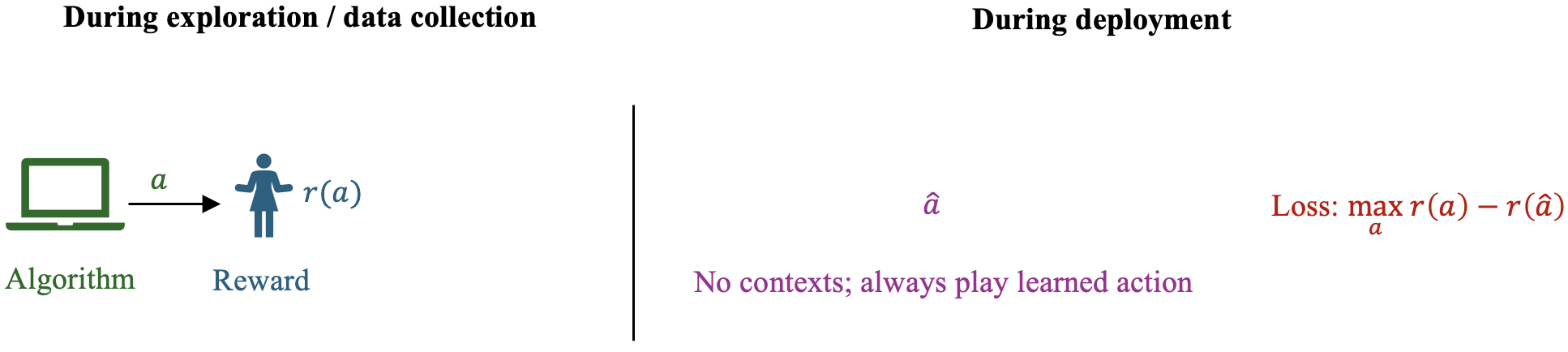}
    \caption{Linear bandits}
    \label{fig:bandit}
  \end{subfigure}
  \vspace{.5em}
  \begin{subfigure}[b]{\textwidth}
    \centering
    \includegraphics[width=\linewidth]{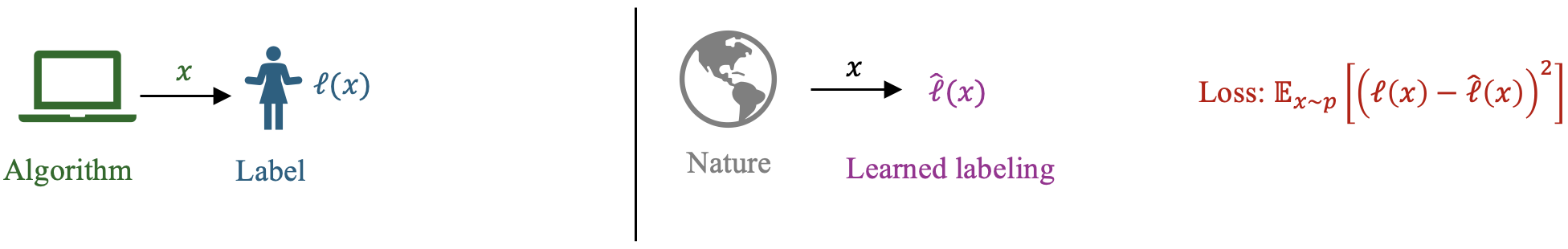}
    \caption{Active learning (e.g., for regression)}
    \label{fig:active}
  \end{subfigure}
  \vspace{.5em}
  \begin{subfigure}[b]{\textwidth}
    \centering
    \includegraphics[width=\linewidth]{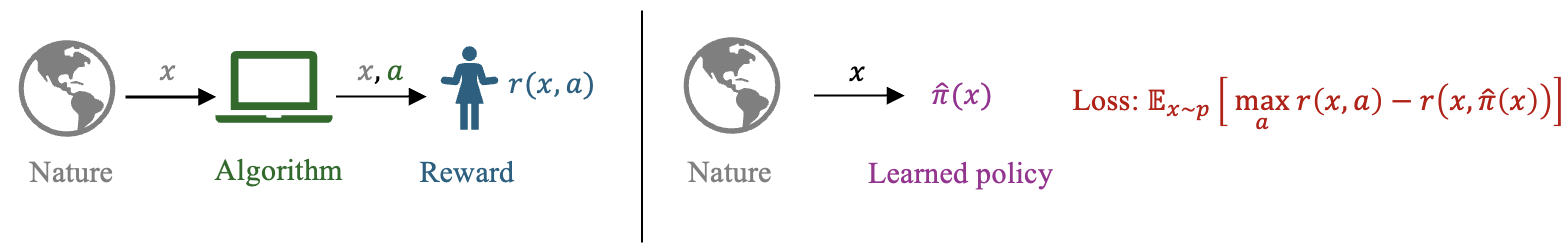}
    \caption{Passive context sampling for SCLBs 
    }
    \label{fig:contextual-passive}
  \end{subfigure}
  \vspace{.5em}
  \begin{subfigure}[b]{\textwidth}
    \centering
    \includegraphics[width=\linewidth]{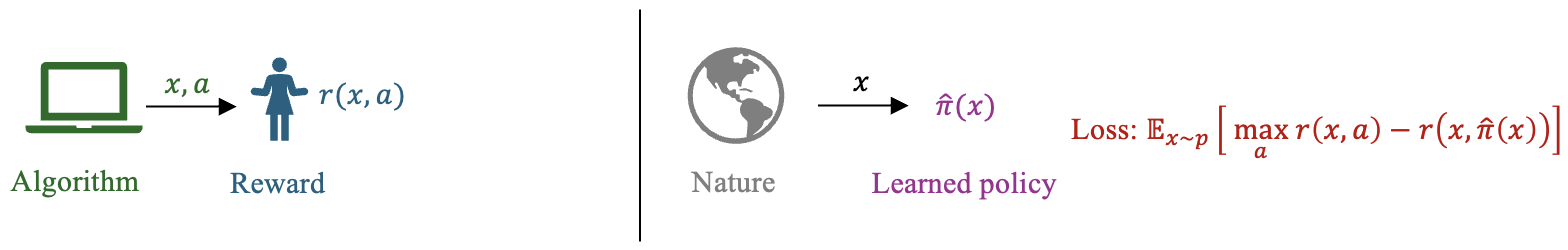}
    \caption{Active context sampling for SCLBs }
    \label{fig:contextual-active}
  \end{subfigure}

  \caption{Four learning paradigms. Figure~\ref{fig:bandit} describes the typical linear bandit setting, where there are no \emph{contexts} and the learner's goal is to learn a good universal action. Figure~\ref{fig:active} shows active learning for regression, with the mean-squared loss. During exploration, the learner actively samples data $x$ to learn a label function $\hat{\ell}(x)$; but during deployment, inputs arrive according to the distribution $p$. In other settings, such as active learning for classification, this mean-squared loss might be replaced with another \emph{smooth, convex} loss function. In contrast, in contextual bandits (Figures~\ref{fig:contextual-passive} and ~\ref{fig:contextual-active}), the learner needs to learn a policy-to-action mapping. In this case, the loss function function is the sub-optimality of the policy---which is discontinuous---and consequently, traditional techniques from continuous optimization do not immediately apply.}
  \label{fig:learning-paradigms}
\end{figure}

%% file: appendix/additional_theory.tex
\newpage
\section{Additional theoretical results}\label{apx:additional_results}

In this appendix, we expand on additional results which were omitted from the main body. 

\subsection{Lower bound in active context sampling setting}\label{sec:lower-bound-discussion}

In this sectionl, we discuss why the minimax optimal lower bound remains valid in our setting, despite our use of active context sampling. Indeed, the claimed lower bound actually follows directly from lower bounds in Gaussian linear bandits because SCLBs reduce to Gaussian linear bandits in the case of a single context. Moreover, in the case of a single context, the knowledge of the context distribution $p$ and ability to actively sample contexts adds no new statistical power, so we inherit lower bounds from Gaussian linear bandits. 

As in \citep{zanette2021design}, we consider the large and small context-action regime separately. First take the large context-action space regime where the second term in $\beta$ \eqref{eq:beta-lambda} dominates. Theorem 24.1 in Lattimore and Szepesvari (as cited in our paper) guarantees the regret must be at least on the order of $d / \sqrt{T}$ and therefore our result is minimax optimal, up to polylog factors. 

Consider the small context-action space regime, where the first term in $\beta$ dominates and $\sqrt{\beta}=  \tilde{O}(\sqrt{\log(|\mathcal{X}|\mathcal{|A|}})$ and our resulting regret bound in Theorem 4.3 is, up to polylog factors, $\sqrt{d/T}$. From Theorem 2 in \citep{chu2011contextual} the regret for $d^2 \leq T$ is at least of the order $\sqrt{d/T}$. Thus our result is minimax optimal, up to polylog factors. 

\subsection{Probably-approximately-correct guarantee}

Here, we state a probably-approximately-correct (PAC)-style version of our main result (Theorem~\ref{theorem:main}). The following theorem is an immediate corollary of Theorem~\ref{theorem:main}, and makes an assumption on $\thetastar, \lambda$ in order to more directly compare to the PAC-learning versions reported in \cite{zanette2021design, abbasi2011improved}. 

\begin{corollary}[PAC-learning version of Theorem~\ref{theorem:main}]\label{corr:pac} Let $\cB = (\cX, \cA, \phi, p, \nu, \thetastar)$ be an SCLB and $\epsilon, \delta \in (0,1)$. Assume $\normInline{\thetastar} \leq \Tilde{O}(1)$ and $\lambda \leq \Tilde{O}(1)$. When initialized with a sample budget of $T \geq T_1 = \Tilde{O}(\cC_{\cB} \cdot d\epsilon^{-2} + d^2)$ and $\alpha = 1/2$, Algorithm~\ref{alg:active-lcb} returns a policy $\hat{\pi}$ such that with probability $1-\delta$, $R(\hat{\pi}) \leq \epsilon$. 
\end{corollary}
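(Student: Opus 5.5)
The corollary follows from Theorem~\ref{theorem:main}: I will make the constant in $\beta$ explicit, then choose $T$ so that the regret bound $\tilde{O}(\sqrt{\beta\cC_\cB/T})$ falls below $\epsilon$.

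\textbf{Step 1 (control $\beta$).} Bound $\beta$ using \eqref{eq:beta-lambda}. The minimum there is at most its second argument, so
\begin{align*}
\sqrt{\beta} \le 2\sqrt{d\log(2\delta^{-1}(1+TL^2/\lambda))} + 2\sqrt{\lambda}\,\|\thetastar\|.
\end{align*}
The assumptions $\|\thetastar\| \le \tilde{O}(1)$ and $\lambda \le \tilde{O}(1)$ make the second term $\tilde{O}(1)$. The logarithm is polylogarithmic in $T$, $L$, $1/\lambda$ and $1/\delta$, all of which $\tilde{O}(\cdot)$ hides. Hence $\beta = \tilde{O}(d)$.

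\textbf{Step 2 (apply the theorem).} Invoke Theorem~\ref{theorem:main} with $\alpha=1/2$. There is $T_0=\tilde{O}(d^2)$ such that for every $T\ge T_0$, with probability $1-\delta$,
\begin{align*}
R(\hat{\pi}) \le c\,\mathrm{polylog}\cdot\sqrt{\beta\,\cC_\cB/T} \le c'\,\mathrm{polylog}\cdot\sqrt{d\,\cC_\cB/T},
\end{align*}
where the second inequality uses Step 1. The explicit constant is available from the proof sketch: $\Gamma(\cS)\le 8\cC_\cB/T$, combined with Lemma~\ref{lemma:simple-regret-bound-general}, gives $R\le\sqrt{8\beta\cC_\cB/T}$.

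\textbf{Step 3 (choose $T$).} The right-hand side is at most $\epsilon$ as soon as $T \ge \tilde{O}(\cC_\cB d\epsilon^{-2})$. Both requirements on $T$ hold once $T \ge T_1 = \tilde{O}(\cC_\cB d\epsilon^{-2}) + T_0 = \tilde{O}(\cC_\cB d\epsilon^{-2}+d^2)$, which is the claim.

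\textbf{Main obstacle.} The only delicate point is that $\beta$ depends on $T$ through $\log(1+TL^2/\lambda)$, so the choice of $T$ is implicit. I would resolve this in the standard way: pick $T_1$ equal to $C\,\cC_\cB d\epsilon^{-2}\log^k(\cdot)$ with a large enough polylog factor, and check that $T\ge T_1$ dominates $\log T$. Alternatively, the first branch of the minimum in \eqref{eq:beta-lambda} can be used when it is smaller, but this is not needed.
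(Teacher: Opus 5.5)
Your proposal is correct and follows the paper's proof. The paper likewise bounds $\beta \le \tilde{O}(d)$ via the second branch of \eqref{eq:beta-lambda} using $\lambda, \|\thetastar\| \le \tilde{O}(1)$, then solves $\tilde{O}(\sqrt{\beta\,\cC_\cB/T}) \le \epsilon$ together with $T \ge T_0 = \tilde{O}(d^2)$ from Theorem~\ref{theorem:main}; your explicit handling of the $\log T$ (and $\log(1/\lambda)$) dependence inside $\beta$ is a detail the paper simply absorbs into the $\tilde{O}(\cdot)$ without comment.
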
 
\begin{proof} From the definition of $\beta$ \eqref{eq:beta-lambda}, it is clear that so long as $\lambda, \normInline{\thetastar} \leq \tilde{O}(1)$, $\beta \leq \tilde{O}(d)$. The corollary now follows immediately Theorem~\ref{theorem:main}. 
\end{proof}

In comparison, the methods from \citet{zanette2021design} and \citet{abbasi2011improved} (RFLinUCB and Planner-Sampler) require $\Tilde{O}(d^2\epsilon^{-2})$ samples to achieve $\epsilon$ regret under the same assumptions. 

Recalling that $\cC_{\cB}\leq d$, we see that our sample-complexity guarantee of $\Tilde{O}(\cC_{\cB} d\epsilon^{-2})$ is always \emph{at least} as strong as that of Planner-Sampler and RFLinUCB \citet{zanette2021design, abbasi2011improved} and may be as low as $\tilde{O}(d + d/\epsilon^2)$ when $\cC_{\cB} = O(1)$ (recall Section~\ref{sec:comparison} which gives an example where this occurs.) Thus, in the PAC-learning setting, we improve over Planner-Sampler and RFLinUCB by up to a dimension factor. 

\begin{remark}Note that our restriction to $\epsilon \in (0, 1)$ is without loss of generality. Because we assume that rewards are bounded between $[0,1]$ in expectation, if $\epsilon > 1$, then \emph{any} policy has regret at most $\epsilon$, and consequently, the problem is trivial. 
\end{remark}

\subsection{Instance-dependent guarantees and comparison to \cite{li2022instance}}

In this section, we compare in more detail against the instance-dependent rates of \cite{li2022instance} for SCLBs. We did not discuss this in detail in the main body, because the instance-dependent sample complexity rates of \cite{li2022instance} for SCLBs are obtained using a computationally intractable algorithm (see, e.g., discussion around Theorem 2.14 of and conclusion of \cite{li2022instance}.) In contrast, our goal in this work is to focus on practical applications where active context sampling may be helpful; hence, our goal was to design polynomial-time implementable algorithms which obtain instance-dependent rates for SCLBs. Because  it is not known how to implement or even approximately implement the guarantees of \cite{li2022instance} in polynomial time, note that the result of Theorem 2.14 in \cite{li2022instance} may be an unfair comparison to our result Theorem~\ref{theorem:main}.

Correspondingly, our goal of this section, is to discuss the result of \cite{li2022instance} in greater detail and show that their rates can also be improved with active context learning---if we disregard the concerns over computational tractability. In Section~\ref{sec:result}, we first state the main result of \cite{li2022instance}, which uses passive context sampling to design an instance-dependent algorithm (ContextualRAGE) for SCLBs. In Section~\ref{sec:extend}, we will show that using active context sampling, we can generalize (ContextualRAGE) to design a new active context sampling algorithm, which we call Active-ContextualRAGE. Finally, in Section~\ref{sec:separation} we show that our family of SCLBs from Definition~\ref{def:hard-instance} remains an instance where Active-ContextualRAGE outperforms ContextualRage by a $\sqrt{d}$ factor in its regret bound (which corresponds to a $d$ factor in the PAC-learning sample complexity). 

The main takeaway for the results in this section is (1) theoretically, we can prove that active context sampling improves  over the rates of \citet{li2022instance}, however, the algorithm achieving this improved rate is not computationally intractable; and (2) even if we consider computationally intractable algorithms as in \citep{li2022instance}, active context sampling improves over the rates achieved by passive context sampling on the family of SCLBs proposed in Section~\ref{sec:comparison}. 

\subsubsection{Restating the main result of \cite{li2022instance} for SCLBs}\label{sec:result}

To aid in the statement of the main result of \cite{li2022instance}, we first introduce some additional notation. For any policy $\pi \in \Pi$, we denote $\phi_\pi \defeq \E_{x \sim p} \phi(x, \pi(x))$ . 

\begin{theorem}[Theorem 2.14 of \cite{li2022instance}, restated]\label{thm:instance-optimal} Let $\cB = (\cX, \cA, \phi, p, \nu, \thetastar)$ be an SCLB, $\normInline{\thetastar} \leq 1$, and $\epsilon, \delta \in (0,1)$. Define 
\begin{align*}
    \cF &\defeq \left\{w \in \Delta^{\cX \times \cA} : \forall x\in\cX, \sum_{a\in\cA} w(x,a) = p(x)\right\}.
\end{align*}
Let $\pi^\star: \cX \to \cA$ be the optimal policy given by 
\begin{align}\label{eq:optimal-policy}
    \pi^\star(x) \defeq \argmax_{a \in \cA} \phi(x,a)^\top \thetastar. 
\end{align}
Define
\begin{align*}
    \rho_{\cB, \epsilon}^{\mathrm{pas}} \defeq \min_{w \in \cF} \max_{\pi: \pi \neq \pi^\star} \frac{\normInline{\phi_\pi - \phi_\pi^\star}_{\Sigma_w^{-1}}^2}{\max(\epsilon, \langle \phi_\pi- \phi_{\pi^\star}, \thetastar \rangle)^2}. 
\end{align*}
Moroever, let 
\begin{align}\label{eq:Delta}
    \Delta_{\epsilon} = \max\paren{\epsilon, \min_{\pi \in \Pi : \pi \neq \pi^\star} \langle \phi_{\pi^\star} - \phi_{\pi} , \thetastar \rangle}. 
\end{align}
Then, with probability at least $1-\delta$, ContextualRAGE (Algorithm 1 of \cite{li2022instance}) returns a policy $\hat{\pi}$ such that $R(\hat{\pi}) \leq \epsilon$  after making at most
\begin{align}\label{eq:rho}
    O\paren{\rho_{\cB, \epsilon}^{\mathrm{pas}} \log(\min\{d\log(1/\epsilon),\log|\Pi|\}+\log(1/\delta))\log(\Delta_\epsilon^{-1})}
\end{align}
reward observations. Moreover, it is \emph{always} the case that \eqref{eq:rho} is upper bounded by $\tilde{O}(d^2/\epsilon^2).$
\end{theorem}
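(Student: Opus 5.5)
Since this result is quoted from \cite{li2022instance}, our plan is to reconstruct the ContextualRAGE analysis. It is a \emph{phased elimination over policies}. In round $k=1,2,\dots,\lceil\log_2(1/\Delta_\epsilon)\rceil$ we maintain an active set $\Pi_k\subseteq\Pi$ that, on a good event, contains $\pi^\star$ and only policies with gap $\langle\phi_{\pi^\star}-\phi_\pi,\thetastar\rangle\le 4\cdot 2^{-k}$. In each round we:
\begin{enumerate}[leftmargin=*, nosep]
\item choose a design $w_k\in\cF$ minimizing $\max_{\pi,\pi'\in\Pi_k}\normInline{\phi_\pi-\phi_{\pi'}}^2_{\Sigma_w^{-1}}$;
\item draw $n_k\approx 2^{2k}\max_{\pi,\pi'\in\Pi_k}\normInline{\phi_\pi-\phi_{\pi'}}^2_{\Sigma_{w_k}^{-1}}\log(N_k/\delta_k)$ samples, with contexts $x\sim p$ and actions $a\sim w_k(x,\cdot)/p(x)$ (this is possible because $w_k\in\cF$);
\item estimate every difference $\langle\phi_\pi-\phi_{\pi'},\thetastar\rangle$ to accuracy $2^{-k}/4$;
\item eliminate any $\pi$ that is dominated by some $\pi'$ by more than $2^{-k}/2$.
\end{enumerate}
Here $\delta_k=\delta/(2k^2)$, and $N_k$ is the effective number of pairwise comparisons.

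\textbf{Concentration and union bound.} This is the step I expect to be the main obstacle. Contexts are sampled passively, so the empirical covariance is random and only concentrates around $\Sigma_{w_k}$. Since $p$ and $w_k$ are known, $\Sigma_{w_k}$ is known exactly, and I would use the unbiased estimator $\hat\theta=\Sigma_{w_k}^{-1}\frac1{n_k}\sum_t\phi(x_t,a_t)r_t$. Because $\phi(x_t,a_t)$ is bounded, the directional error $(\phi_\pi-\phi_{\pi'})^\top(\hat\theta-\thetastar)$ is sub-exponential with variance proxy $\normInline{\phi_\pi-\phi_{\pi'}}^2_{\Sigma_{w_k}^{-1}}/n_k$. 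To get the clean sub-Gaussian rate without extra range terms, I would first try a Catoni/median-of-means robust mean in place of the plain empirical average.

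For the union bound over pairs of policies, I would take $\log N_k\le\min\{2\log|\Pi|,\;O(d\log(1/\epsilon))\}$. The second term comes from a covering argument: every $\phi_\pi$ lies in a $d$-dimensional set, and it suffices to control $\hat\theta-\thetastar$ uniformly over an $\epsilon$-net of directions.

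\textbf{Sample count and correctness.} Next we relate $n_k$ to $\rho^{\mathrm{pas}}_{\cB,\epsilon}$. For every surviving $\pi\ne\pi^\star$, the gap is at most $O(2^{-k})$, and the triangle inequality reduces pairs to comparisons with $\pi^\star$. Together these give
\[
2^{2k}\max_{\pi,\pi'\in\Pi_k}\normInline{\phi_\pi-\phi_{\pi'}}^2_{\Sigma_w^{-1}}\lesssim \max_{\pi\ne\pi^\star}\frac{\normInline{\phi_\pi-\phi_{\pi^\star}}^2_{\Sigma_w^{-1}}}{\max(\epsilon,\text{gap}(\pi))^2},
\]
and minimizing over $w\in\cF$ turns the right-hand side into $\rho^{\mathrm{pas}}_{\cB,\epsilon}$. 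Summing over the $\log(1/\Delta_\epsilon)$ rounds yields \eqref{eq:rho}. Correctness then follows on the good event: $\pi^\star$ is never eliminated, and after the final round every survivor has gap at most $\epsilon$.

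\textbf{Worst-case bound $\tilde O(d^2/\epsilon^2)$.} The denominator is at least $\epsilon^2$, so it suffices to exhibit $w\in\cF$ with $\normInline{\phi_\pi-\phi_{\pi^\star}}^2_{\Sigma_w^{-1}}\le 4d$ for all $\pi$. By Jensen's inequality,
\[
\normInline{\E_x v_x}^2_{\Sigma^{-1}}\le \E_x\normInline{v_x}^2_{\Sigma^{-1}}\le 4\,\E_x\max_a\normInline{\phi(x,a)}^2_{\Sigma^{-1}}.
\]
It therefore suffices to show that $\min_{w\in\cF}\E_{x\sim p}\max_a\normInline{\phi(x,a)}^2_{\Sigma_w^{-1}}\le d$. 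This is a contextual Kiefer--Wolfowitz equivalence statement, the analogue of Theorem~\ref{thm:kiefer-wolofitz} over $\cF$. I would prove it by writing down the first-order optimality conditions of the concave objective $\log\det\Sigma_w$ over $\cF$ and taking the trace. This bounds $\rho^{\mathrm{pas}}$ by $O(d/\epsilon^2)$. Multiplying by the $d\log(1/\epsilon)$ covering factor gives $\tilde O(d^2/\epsilon^2)$.
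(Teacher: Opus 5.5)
Your skeleton matches the argument the paper gives for its active analogue, Theorem~\ref{thm:active-contextual_rage}: phased elimination over $\Pi_k$, a per-pair Catoni-type estimate built from $\Sigma_{w_k}^{-1}\phi(x_t,a_t)r_t$, reducing pairs to comparisons with $\pi^\star$, and bounding each round's cost by $\rho^{\mathrm{pas}}_{\cB,\epsilon}$ up to constants and logs. The paper itself gives no proof of this restated passive theorem. Bounding each round by $\rho$ and summing over $O(\log(1/\Delta_\epsilon))$ rounds is a slightly more direct version of the paper's ``max lower bounded by average'' step, and gives the same total.

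Where you genuinely differ is the worst-case claim. The paper only cites \cite{li2022instance} for $\tilde{O}(d^2/\epsilon^2)$ (and, for the active version, uses $\rho^{\mathrm{act}}_{\cB,\epsilon}\le\rho^{\mathrm{pas}}_{\cB,\epsilon}$). You instead prove $\rho^{\mathrm{pas}}_{\cB,\epsilon}\le 4d/\epsilon^2$ directly, and that argument is correct. It does not even need explicit multipliers. Let $w^\circ$ maximize $\log\det\Sigma_w$ over $\cF$, and test the first-order condition against the element of $\cF$ that puts mass $p(x)$ on $\argmax_a\normInline{\phi(x,a)}_{\Sigma_{w^\circ}^{-1}}$. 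This gives $\E_{x\sim p}\max_a\normInline{\phi(x,a)}^2_{\Sigma_{w^\circ}^{-1}}\le\tr(\Sigma_{w^\circ}^{-1}\Sigma_{w^\circ})=d$, provided $\{\phi(x,a):p(x)>0\}$ spans $\R^d$ so the maximizer is nonsingular. This makes the worst-case bound self-contained rather than imported.

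\textbf{Gap: the $d\log(1/\epsilon)$ branch of the union bound.} This step does not work as written, and your final $\tilde{O}(d^2/\epsilon^2)$ conclusion depends on it, since $\log|\Pi|=|\cX|\log|\cA|$ can be far larger than $d$.
\begin{itemize}
\item To avoid range terms, you estimate each $\langle\phi_\pi-\phi_{\pi'},\thetastar\rangle$ with a robust scalar estimator. Such estimates are not linear in the direction $v=\phi_\pi-\phi_{\pi'}$, so control on an $\epsilon$-net of directions does not extend to off-net directions.
\item With the plain linear $\hat\theta$, net control does extend. But then you are back to the Bernstein range term of order $\max_{x,a}\abs{v^\top\Sigma_{w_k}^{-1}\phi(x,a)}\log(N_k/\delta_k)/n_k$, which $\rho^{\mathrm{pas}}_{\cB,\epsilon}$ does not control.
\end{itemize}

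The paper's proof of the active analogue sidesteps this by discretizing the \emph{policy class} instead of directions. You run the elimination on a finite set $\Pi_{\mathcal{T}}\subseteq\Pi$ of representatives with $\log|\Pi_{\mathcal{T}}|=O(d\log(1/\epsilon))$. You then union bound only over pairs in $\Pi_{\mathcal{T}}$, and pay an additive $\epsilon$ because the best representative is $\epsilon$-optimal. Two details make this work:
\begin{itemize}
\item Pick representatives from a Euclidean net of $\{\phi_\pi:\pi\in\Pi\}$, which lies in the ball of radius $L$. A cover defined through $\thetastar$-values, as the paper writes it, is not available to the algorithm.
\item The complexity measured relative to the best representative $\pi^\star_{\mathcal{T}}$ stays within a constant of $\rho^{\mathrm{pas}}_{\cB,\epsilon}$. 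Its gap is at most $\epsilon$, so $\normInline{\phi_{\pi^\star_{\mathcal{T}}}-\phi_{\pi^\star}}^2_{\Sigma_w^{-1}}\le\epsilon^2\rho^{\mathrm{pas}}_{\cB,\epsilon}$ at the optimal $w$.
\end{itemize}
Alternatively, a genuinely vector-valued sub-Gaussian mean estimator for the whitened samples $\Sigma_{w_k}^{-1/2}\phi(x_t,a_t)r_t$ would give uniform control over all directions, at price $d+\log(1/\delta)$.

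\textbf{On the printed form of \eqref{eq:rho}.} Your argument, like the paper's \eqref{eq:varrho}, delivers the factor $\min\{d\log(1/\epsilon),\log|\Pi|\}+\log(1/\delta)$ itself, not its logarithm as printed in \eqref{eq:rho}. That outer $\log$ looks like a transcription slip in the restatement. Nothing in your reconstruction would produce it, so you should not claim to recover it.
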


This result gives a fine-grained PAC-learning guarantee, which may be much stronger than the minimax-rate of $\tilde{O}(d^2/\epsilon^2)$ when $\rho_{\cB, \epsilon}^{\mathrm{pas}} \ll d^2/\epsilon^2$. 

However, as discussed in Section 3.3 and Section 4 of \citet{li2022instance}, the ContextualRAGE algorithm presented in \citep{li2022instance} is computationally ineffcient because it requires maintaining a set of policies $\Pi_\ell$ from round to round. Because $|\Pi| = \cX^{\cA}$, even maintaining $\Pi_1$ at the start of the algorithm requires exponential-time. Thus, while Theorem~\ref{thm:instance-optimal} is very interesting information theoretically, it does not directly lend itself well to practical applications of contextual bandits.  

Nonetheless, in the following section, we show that if we disregard computational implementability, we can obtain a similar result to our Theorem~\ref{thm:instance-optimal}: in particular, we present a new active-learning variant of \cite{li2022instance}'s ContextualRAGE that achieves an \emph{even tighter} instance-dependent rate than Theorem~\ref{thm:instance-optimal}.

\subsubsection{Developing tighter instance-dependent rates using active context sampling}\label{sec:extend}

Here, we propose an algorithm that takes advantage of active context sampling to achieve a better instance-dependent PAC learning guarantee than the passive instance-dependent sample complexity bound stated in \cite{li2022instance}. The pseudocode is shown in Active-ContextualRAGE (Algorithm~\ref{alg:rage}). As with the ContextualRAGE algorithm (Theorem~\ref{thm:instance-optimal}), Active-ContextualRAGE is computationally infeasible for the same reason that ContextualRAGE is infeasible: it requires explicitly maintaining a set of policies which could be as large as $\Pi$ in every iteration.

\begin{algorithm2e}[!htb]
    \centering
    \caption{Active-ContextualRAGE}
    \begin{algorithmic}[1]\label{alg:rage}
    \REQUIRE  $\phi : \mathcal{X} \times \mathcal{A} \rightarrow \R^d$, $\delta\in (0,1)$
    \STATE \textbf{Initialize} $\Pi_{1}=\Pi$
    \FOR{$\ell=1,2,\cdots,\lceil\log_2(1/\epsilon)\rceil$}
    \STATE $\epsilon_\ell:=2^{-\ell}, \delta_{\ell} := \delta/(2\ell^2|\Pi|)$
    \STATE Let $n_{\ell}$ be the minimum value s.t.:
    \[\hspace{-10pt}\min_{w \in \triangle_{\mathcal{X} \times \mathcal{A}}}\max_{\pi,\pi' \in \Pi_{\ell}} \frac{\| \E_{x\sim p}[\phi(x,\pi(x))-\phi(x,\pi'(x))]\|_{\Sigma_w^{-1}}^2\log(1/\delta_{\ell})}{n_{\ell}} \leq \epsilon_{\ell}^2\]
    with argmin given by $w^{(\ell)}$.
    \STATE 
    For each $t\in [n_\ell]$, pull $(c_t,a_t)\sim w^{(\ell)}$, observe reward $r_t$ 
    \STATE Compute $O_t=\Sigma_{w^{(\ell)}}^{-1} \phi(c_t,a_t) r_t$
    \STATE For  $\pi,\pi' \in \Pi_\ell$  \[\hat{\Delta}_{\ell}(\pi,\pi') = {\sf Cat}( \{ \langle\E_{x\sim p}[\phi(x,\pi(x))-\phi(x,\pi'(x))],O_{i}\rangle \}_{i=1}^{n_\ell})\]
    \STATE 
    Update
    \[
    {\Pi}_{\ell+1}={\Pi}_\ell \setminus \{\pi' \in {\Pi}_l\mid \max_{\pi \in {\Pi}_\ell} \hat{\Delta}_{\ell}(\pi,\pi')>\epsilon_\ell\}\]
    \ENDFOR
    \RETURN $\Pi_{\ell+1}$
    \end{algorithmic}
\end{algorithm2e}

We first state a lemma that guarantees the optimal policy is inside the candidate policy set $\Pi_\ell$ after elimination, and all policies in $\Pi_\ell$ have small gaps. The following lemma is the same as Lemma 3.1 in \cite{li2022instance} and the proof follows the identical argument as in \cite{li2022instance}. 

\begin{lemma}[Lemma 3.1 of \cite{li2022instance}, restated]\label{lem:rage_correctness} In the execution of Algorithm~\ref{alg:rage}, with probability at least $1-\delta$, for all $\ell > 1$, 
$\pi_* \in \Pi_\ell$ and $\max_{\pi \in \Pi_\ell} V(\pi^*)-V(\pi) \leq 4 \epsilon_\ell$.
\end{lemma}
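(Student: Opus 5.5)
We prove the claim by induction over rounds $\ell$, on a single high-probability ``good event'' on which every Catoni estimate used to compare a policy with $\pi^\star$ is accurate to within $\epsilon_\ell$. Write $V(\pi)\defeq\langle\phi_\pi,\thetastar\rangle$ and $v_{\pi,\pi'}\defeq\phi_\pi-\phi_{\pi'}=\E_{x\sim p}[\phi(x,\pi(x))-\phi(x,\pi'(x))]$. Define
\begin{align*}
\cF_\ell \defeq \Big\{\forall \pi'\in\Pi_\ell:\ \abs{\hat{\Delta}_\ell(\pi^\star,\pi')-\langle v_{\pi^\star,\pi'},\thetastar\rangle}\le \epsilon_\ell \ \text{and}\ \abs{\hat{\Delta}_\ell(\pi',\pi^\star)-\langle v_{\pi',\pi^\star},\thetastar\rangle}\le \epsilon_\ell\Big\},
\end{align*}
and the good event $\cF\defeq\bigcap_\ell \cF_\ell$. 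The claim will only use estimates of pairs that involve $\pi^\star$. This is what makes $\delta_\ell=\delta/(2\ell^2\abs{\Pi})$ sufficient: there are at most $\abs{\Pi}$ such pairs per round, so a union bound gives total failure probability at most $\sum_\ell \delta/(2\ell^2)\cdot(\text{const})\le\delta$, after adjusting constants.

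\textbf{Step 1: concentration in a fixed round.} Condition on $\Pi_\ell$, which depends only on earlier rounds, so the round-$\ell$ samples are fresh and i.i.d. Here we take $\Sigma_{w^{(\ell)}}$ to be the unregularized covariance $\E_{(x,a)\sim w^{(\ell)}}\phi\phi^\top$.
\begin{itemize}[leftmargin=*, nosep]
\item \emph{Invertibility.} If $\Sigma_{w^{(\ell)}}$ were singular on a direction touched by some $v_{\pi,\pi'}$, the minimax objective would be infinite. Since $\phi$ spans $\R^d$, a full-support design has finite objective, so the minimizer $w^{(\ell)}$ has invertible covariance.
\item \emph{Unbiasedness.} For fixed $v$, the scalars $Z_i=\langle v,O_i\rangle$ are i.i.d. with $\E Z_i=v^\top\Sigma_{w}^{-1}\E[\phi\phi^\top]\thetastar=\langle v,\thetastar\rangle$.
\item \emph{Second moment.} $\E Z_i^2=\E[(v^\top\Sigma_w^{-1}\phi)^2 r^2]$. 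Since the mean reward lies in $[0,1]$ and the noise is $1$-subgaussian and independent of $(x,a)$, $\E[r^2\mid x,a]\le 2$. Hence $\E Z_i^2\le 2\,v^\top\Sigma_w^{-1}v=2\norm{v}_{\Sigma_w^{-1}}^2$.
\item \emph{Catoni bound.} Catoni's estimator with confidence parameter $\delta_\ell$ gives deviation at most $c\sqrt{\norm{v}^2_{\Sigma_w^{-1}}\log(1/\delta_\ell)/n_\ell}$ with probability $1-\delta_\ell$.
\item \emph{Choice of $n_\ell$.} By the definition of $n_\ell$, this is at most $\epsilon_\ell$ for every $v=v_{\pi,\pi'}$ with $\pi,\pi'\in\Pi_\ell$, after enlarging $n_\ell$ by the absolute constant $c^2$.
\end{itemize}
This presupposes $\pi^\star\in\Pi_\ell$, which the induction supplies.

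\textbf{Step 2: induction on $\cF$.} The base case is trivial since $\Pi_1=\Pi$.
\begin{itemize}[leftmargin=*, nosep]
\item \emph{$\pi^\star$ survives.} Suppose $\pi^\star\in\Pi_\ell$. For any $\pi\in\Pi_\ell$, $\hat\Delta_\ell(\pi,\pi^\star)\le V(\pi)-V(\pi^\star)+\epsilon_\ell\le\epsilon_\ell$. So the elimination test $\max_\pi\hat\Delta_\ell(\pi,\pi^\star)>\epsilon_\ell$ fails, and $\pi^\star\in\Pi_{\ell+1}$.
\item \emph{Survivors have small gap.} If $\pi'\in\Pi_{\ell+1}$, then in particular $\hat\Delta_\ell(\pi^\star,\pi')\le\epsilon_\ell$. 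Therefore
\begin{align*}
V(\pi^\star)-V(\pi')\le \hat\Delta_\ell(\pi^\star,\pi')+\epsilon_\ell\le 2\epsilon_\ell=4\epsilon_{\ell+1}.
\end{align*}
\end{itemize}
This is exactly the claim for index $\ell+1$, for all $\ell+1>1$.

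\textbf{Main obstacle.} The delicate point is Step 1: making the importance-weighted estimator $O_t$ exactly unbiased and showing its second moment is controlled by $\norm{v}_{\Sigma_w^{-1}}^2$. This requires two things. First, using the unregularized $\Sigma_w$ together with the invertibility argument above; if one insists on the ridge term $\lambda/T\cdot I$, the resulting bias $\lambda/T\cdot v^\top\Sigma_w^{-1}\thetastar$ must instead be absorbed into $\epsilon_\ell$. Second, bounding $\E[r^2\mid x,a]$ uniformly. The rest is the standard RAGE elimination bookkeeping of \cite{li2022instance}, which carries over verbatim because the only change from passive sampling is that $w^{(\ell)}$ ranges over all of $\Delta^{\cX\times\cA}$ rather than $\cF$. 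The concentration argument never used the marginal constraint.
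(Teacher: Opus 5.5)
Your proposal is correct and is essentially the argument the paper defers to, namely the proof of Lemma 3.1 in \cite{li2022instance}. It consists of a good event on the Catoni estimates for pairs involving $\pi^\star$, union-bounded through $\delta_\ell$ with an absolute constant folded into $n_\ell$ (the factor $4$ that appears in the paper's sample-count sum), followed by the induction that $\pi^\star$ is never eliminated and every survivor has gap at most $2\epsilon_\ell = 4\epsilon_{\ell+1}$.

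One small slip concerns invertibility. Finiteness of the minimax objective does not make $\Sigma_{w^{(\ell)}}$ invertible: if all $v_{\pi,\pi'}$ with $\pi,\pi'\in\Pi_\ell$ lie in a proper subspace, the optimal design can be singular. It does, however, force every such $v_{\pi,\pi'}$ into the range of $\Sigma_{w^{(\ell)}}$. With a pseudo-inverse in $O_t$, that is all your unbiasedness and second-moment computations need.
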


Next, we state a theorem that gives a PAC upper bound for Algorithm~\ref{alg:rage} using active context sampling. Note that the upper bound $\tilde{O}(d^2/\epsilon^2)$ in this theorem is quite loose, and we show in Section~\ref{sec:separation} that the active complexity bound can be much smaller than the passive one. 

\begin{theorem}[Active-ContextualRAGE]\label{thm:active-contextual_rage}
Let $\epsilon, \delta \in (0,1)$, $\phi: \mathcal{X} \times \mathcal{A} \rightarrow \R^d$, $\normInline{\thetastar} \leq 1$, and let $\Delta_\epsilon, \pi^\star$ be as in \eqref{eq:optimal-policy} and \eqref{eq:Delta}, respectively. Define 
\begin{align*}
    \rho^{\mathrm{act}}_{\cB, \epsilon} \defeq \min_{w \in \Delta^{\cX \times \cA}} \max_{\pi: \pi \neq \pi^\star} \frac{\normInline{\phi_\pi - \phi_\pi^\star}_{\Sigma_w^{-1}}^2}{\max(\epsilon, \langle \phi_\pi- \phi_{\pi^\star}, \thetastar \rangle)^2}. 
\end{align*}

Then, with probability at least $1-\delta$, Active-ContextualRAGE (Algorithm~\ref{alg:rage}) returns a policy $\hat{\pi}$ such that $R(\hat{\pi}) \leq \epsilon$  after making at most
\begin{align}\label{eq:varrho}
    O\paren{\rho^{\mathrm{act}}_{\cB, \epsilon} (\min\{d\log(1/\epsilon),\log|\Pi|\}+\log(1/\delta))\log(\Delta_\epsilon^{-1})}
\end{align}
reward observations,
which is itself upper bounded by $\tilde{O}(d^2/\epsilon^2)$.

\end{theorem}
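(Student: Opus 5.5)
The plan is to mirror the proof of Theorem~\ref{thm:instance-optimal} from \cite{li2022instance}, replacing the passive feasible set $\cF$ by the full simplex $\Delta^{\cX \times \cA}$, and to track how the per-round sample sizes $n_\ell$ sum.

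\textbf{Correctness.} Correctness comes from Lemma~\ref{lem:rage_correctness}. With probability $1-\delta$, $\pi^\star \in \Pi_\ell$ for every round. Every surviving policy satisfies $V(\pi^\star)-V(\pi) \leq 4\epsilon_\ell$. So after $\lceil \log_2(1/\epsilon)\rceil$ rounds (with constants adjusted, e.g.\ one or two extra rounds) any returned policy has regret at most $\epsilon$.

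The one point to check is that the concentration in that lemma does not rely on passive sampling. It only uses two facts:
\begin{itemize}
\item $O_t$ is an unbiased estimate of $\thetastar$ under $w^{(\ell)}$ (up to the ridge term, which we take negligible), since $\E[\Sigma_{w^{(\ell)}}^{-1}\phi\phi^\top\thetastar]=\thetastar$.
\item The Catoni estimator concentrates with variance proxy $\|\phi_\pi-\phi_{\pi'}\|^2_{\Sigma_{w^{(\ell)}}^{-1}}$.
\end{itemize}
Both hold for an arbitrary $w \in \Delta^{\cX\times\cA}$. Together with the choice of $n_\ell$ and $\delta_\ell$, this gives $|\hat\Delta_\ell(\pi,\pi') - \langle \phi_\pi-\phi_{\pi'},\thetastar\rangle| \le \epsilon_\ell$ uniformly over $\pi,\pi' \in \Pi_\ell$. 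The $\log|\Pi|$ in $\delta_\ell$ can be replaced by $d\log(1/\epsilon)$ via a covering argument over the directions $\phi_\pi$, as in \cite{li2022instance}.

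\textbf{Sample complexity.} The total number of samples is $\sum_\ell n_\ell$, where
\[
n_\ell \approx \epsilon_\ell^{-2}\log(1/\delta_\ell)\min_{w}\max_{\pi,\pi'\in\Pi_\ell}\|\phi_\pi-\phi_{\pi'}\|^2_{\Sigma_w^{-1}}.
\]
We bound each $n_\ell$ in three steps.
\begin{enumerate}
\item By the triangle inequality, $\|\phi_\pi-\phi_{\pi'}\|\le \|\phi_\pi-\phi_{\pi^\star}\|+\|\phi_{\pi'}-\phi_{\pi^\star}\|$. This reduces the max over pairs to $4\max_{\pi\in\Pi_\ell}\|\phi_\pi-\phi_{\pi^\star}\|^2_{\Sigma_w^{-1}}$.
\item For $\pi\in\Pi_\ell$ the gap is at most $4\epsilon_\ell$, so $\epsilon_\ell^{-2} \le 16\max(\epsilon,\text{gap}(\pi))^{-2}$, since $\epsilon_\ell\ge\epsilon$.
\item Plugging in the minimizer of $\rho^{\mathrm{act}}_{\cB,\epsilon}$ gives $n_\ell \lesssim \rho^{\mathrm{act}}_{\cB,\epsilon}\log(1/\delta_\ell)$.
\end{enumerate}
Rounds only continue while $\epsilon_\ell \gtrsim \Delta_\epsilon$, because once $\epsilon_\ell$ falls below the minimum gap only $\pi^\star$ survives. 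Summing over $O(\log\Delta_\epsilon^{-1})$ rounds therefore yields \eqref{eq:varrho}.

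\textbf{Crude bound and main obstacle.} For the $\tilde O(d^2/\epsilon^2)$ upper bound, note that $\cF\subseteq\Delta^{\cX\times\cA}$, so $\rho^{\mathrm{act}}_{\cB,\epsilon}\le\rho^{\mathrm{pas}}_{\cB,\epsilon}$. The claim then follows from the corresponding bound in Theorem~\ref{thm:instance-optimal}. Alternatively, take $w=q^\star$ (Theorem~\ref{thm:kiefer-wolofitz}) and use that $\|\phi_\pi\|_{\Sigma_{q^\star}^{-1}}^2\le d$ by convexity; the log factor supplies the remaining $d$.

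The main obstacle is the middle step: verifying that the elimination and concentration analysis of \cite{li2022instance} transfers verbatim when contexts are drawn from $w$ rather than $p$. In particular, the importance-weighted estimator must target $\E_{x\sim p}$ quantities correctly, and here that works because $\phi_\pi$ is computed exactly from the known $p$.
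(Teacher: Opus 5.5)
Your proposal is correct and follows essentially the same route as the paper: correctness via Lemma~\ref{lem:rage_correctness}, the factor-$4$ reduction from pairs to single policies, the $4\epsilon_\ell$ gap bound on survivors to turn $\epsilon_\ell^{-2}$ into $\max(\epsilon,\mathrm{gap})^{-2}$, summation over $O(\log \Delta_\epsilon^{-1})$ rounds, the covering argument for the $d\log(1/\epsilon)$ term, and $\rho^{\mathrm{act}}_{\cB,\epsilon}\le\rho^{\mathrm{pas}}_{\cB,\epsilon}$ for the crude bound. The one difference is cosmetic: you bound each $n_\ell$ directly at the minimizer of $\rho^{\mathrm{act}}_{\cB,\epsilon}$, while the paper lower-bounds $\rho^{\mathrm{act}}_{\cB,\epsilon}$ by an average over rounds, and both give the same total (you only have $\epsilon_\ell\ge\epsilon/2$ in the last round and need $4\epsilon_\ell$ below the minimum gap for a singleton $\Pi_\ell$, but the constants absorb both).
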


\begin{proof} For notational convenience, for any $\pi \in \Pi$, let
\begin{align*}
    V(\pi) = \E_{x \sim p} \phi(x, \pi(x))^\top \thetastar. 
\end{align*}
Define $S_\ell = \{ \pi \in \Pi : V(\pi^*)-V(\pi) \leq 4 \epsilon_\ell \}$.
Lemma~\ref{lem:rage_correctness} implies that with probability at least $1-\delta$ we have $\bigcap_{\ell=1}^\infty \{ \Pi_\ell \subseteq S_\ell \}$.
Observe that if for any $\mathcal{V} \subset \Pi$ we define $$\rho(w^{(\ell)},\mathcal{V}):=\min _{w \in \Delta_{\mathcal{X} \times \mathcal{A}}} \max _{\pi,\pi'\in \Pi_\ell} \left\|\E_{x\sim p}[\phi(x,\pi(x))-\phi(x,\pi'(x))]\right\|_{\Sigma_w^{-1}}^2,$$ then
\begin{align*}
\rho(w^{(\ell)}, \Pi_\ell) &= \min_{w \in \Delta_{\mathcal{X} \times \mathcal{A}}} \max_{\pi,\pi' \in \Pi_{\ell}} \|\E_{x\sim p}[\phi(x,\pi(x))-\phi(x,\pi'(x))]\|_{\Sigma_w^{-1}}^2 \\
&\leq \min_{w \in \triangle_{\mathcal{X}\times\mathcal{A}}} \max_{\pi,\pi' \in S_{\ell}} \|\E_{x\sim p}[\phi(x,\pi(x))-\phi(x,\pi'(x))]\|_{\Sigma_w^{-1}}^2 =: \rho(S_\ell).
\end{align*}
By line 4 of Algorithm~\ref{alg:rage}, we know that for each $\ell$, $n_\ell=\lceil\rho(w^{(\ell)}, \Pi_\ell)\log(1/\delta_\ell)\epsilon_\ell^{-2}\rceil$. Also, for $\ell \geq \lceil \log_2(4\Delta_\epsilon^{-1}) \rceil$ we have for all $\pi\in S_\ell$, $V(\pi^*)-V(\pi)\leq\epsilon$ (by Lemma~\ref{lem:rage_correctness} and Definition of $S_\ell$ and $\epsilon_\ell$), thus the sample complexity to identify any $\pi$ in $S_\ell$ is 
\begin{align*}
\sum_{\ell=1}^{\lceil \log_2(4\Delta_\epsilon^{-1}) \rceil} n_\ell 
&= \sum_{\ell=1}^{\lceil \log_2(4\Delta_\epsilon^{-1}) \rceil} \lceil 4 \epsilon_\ell^{-2} \rho( w^{(\ell)}, \Pi_\ell ) \log(2 \ell^2|\Pi|/\delta) \rceil \\
&\leq \sum_{\ell=1}^{\lceil \log_2(4\Delta_\epsilon^{-1}) \rceil} 4 \epsilon_\ell^{-2} \rho(S_\ell) \log(2 \ell^2 |\Pi| /\delta) +1\\
&\leq c \log( \log(\Delta_\epsilon^{-1}) |\Pi| /\delta) \sum_{\ell=1}^{\lceil \log_2(4\Delta_\epsilon^{-1}) \rceil}  \epsilon_\ell^{-2} \rho(S_\ell) 
\end{align*}
for some absolute constant $c > 0$.
We now note that
\begin{align*}
&\min _{w \in \Delta_{\mathcal{X} \times \mathcal{A}}} \max _{\pi\in \Pi\setminus\pi^*} \frac{\left\|\E_{x\sim p}[\phi(x,\pi(x))-\phi(x,\pi^*(x))]\right\|_{\Sigma_w^{-1}}^2}{\left\langle\E_{x\sim p}[\phi(x,\pi^*(x))-\phi(x,\pi(x))],\theta^*\right\rangle^2\vee \epsilon^2}\\
&= \min_{w \in\Delta_{\mathcal{X}\times\mathcal{A}}}  \max_{\ell \leq \lceil \log_2(4\Delta_\epsilon^{-1})\rceil} \max_{\pi \in S_\ell}\frac{\left\|\E_{x\sim p}[\phi(x,\pi(x))-\phi(x,\pi^*(x))]\right\|_{\Sigma_w^{-1}}^2}{\left\langle\E_{x\sim p}[\phi(x,\pi^*(x))-\phi(x,\pi(x))],\theta^*\right\rangle^2\vee \epsilon^2} \\
&\geq \frac{1}{\lceil \log_2(4\Delta_\epsilon^{-1})\rceil} \min_{w\in\triangle_{\mathcal{X}\times\mathcal{A}}} \sum_{\ell=1}^{\lceil \log_2(4\Delta_\epsilon^{-1}) \rceil} \max_{\pi \in S_\ell}\frac{\left\|\E_{x\sim p}[\phi(x,\pi(x))-\phi(x,\pi^*(x))]\right\|_{\Sigma_w^{-1}}^2}{\left\langle\E_{x\sim p}[\phi(x,\pi^*(x))-\phi(x,\pi(x))],\theta^*\right\rangle^2\vee \epsilon^2}\tag{max lower bounded by average} \\
&\geq \frac{1}{\lceil \log_2(4\Delta_\epsilon^{-1})\rceil}  \sum_{\ell=1}^{\lceil \log_2(4\Delta_\epsilon^{-1}) \rceil} (4\epsilon_\ell)^{-2} \min_{w\in\triangle_{\mathcal{X}\times\mathcal{A}}} \max_{\pi \in S_\ell}\|\E_{x\sim p}[\phi(x,\pi(x))-\phi(x,\pi^*(x))]\|_{\Sigma_w^{-1}}^2 \tag{$\pi\in S_\ell$ implies that gap less than $\epsilon_\ell$ and $4\epsilon_\ell\geq \epsilon$}\\
&\geq \frac{1}{64 \lceil \log_2(4\Delta_\epsilon^{-1})\rceil}  \sum_{\ell=1}^{\lceil \log_2(4\Delta_\epsilon^{-1}) \rceil} \epsilon_\ell^{-2} \min_{w\in\triangle_{\mathcal{X}\times\mathcal{A}}} \max_{\pi,\pi' \in S_\ell} \|\E_{x\sim p}[\phi(x,\pi(x))-\phi(x,\pi'(x))]\|_{\Sigma_w^{-1}}^2 \\
&= \frac{1}{64 \lceil \log_2(4\Delta_\epsilon^{-1})\rceil}  \sum_{\ell=1}^{\lceil \log_2(4\Delta_\epsilon^{-1}) \rceil} \epsilon_\ell^{-2} \rho(S_\ell) 
\end{align*}
where for the last inequality, we have used the fact that for any $\pi,\pi' \in S_\ell$, 
\begin{align*}
    &\|\E_{x\sim p}[\phi(x,\pi(x))-\phi(x,\pi'(x))]\|_{\Sigma_w^{-1}}^2=\|\phi_\pi - \phi_{\pi'}\|_{\Sigma_w^{-1}}^2\\
    &=(\phi_\pi - \phi_{\pi'})^\top \Sigma_w^{-1}(\phi_\pi - \phi_{\pi'})\\
    &=(\phi_\pi - \phi_{\pi^*}+\phi_{\pi^*}- \phi_{\pi'})^\top \Sigma_w^{-1}(\phi_\pi- \phi_{\pi^*}+\phi_{\pi^*} - \phi_{\pi'})\\
    &=(\phi_\pi - \phi_{\pi^*})^\top \Sigma_w^{-1}(\phi_\pi- \phi_{\pi^*})+(\phi_{\pi^*}- \phi_{\pi'})^\top\Sigma_w^{-1} (\phi_{\pi^*} - \phi_{\pi'})+2(\phi_\pi - \phi_{\pi^*})^\top \Sigma_w^{-1}(\phi_{\pi^*}-\phi_{\pi'})\\
    &\leq 2\max_{\pi''\in S_\ell}\|\phi_{\pi''} - \phi_{\pi^*}\|_{\Sigma_w^{-1}}^2+2(\phi_\pi - \phi_{\pi^*})^\top \Sigma_w^{-1/2}\Sigma_w^{-1/2}(\phi_{\pi^*}-\phi_{\pi'})\\
    &\leq 2\max_{\pi''\in S_\ell}\|\phi_{\pi''} - \phi_{\pi^*}\|_{\Sigma_w^{-1}}^2+2\|\phi_\pi - \phi_{\pi^*}\|_{\Sigma_w^{-1}}\|\phi_{\pi^*}-\phi_{\pi'}\|_{\Sigma_w^{-1}}\tag{Cauchy-Schwarz}\\
    &\leq 4 \max_{\pi \in S_\ell} \|\E_{x\sim p}[\phi(x,\pi(x))-\phi(x,\pi^*(x))]\|_{\Sigma_w^{-1}}^2.
\end{align*}
Therefore, 
\begin{equation}\label{eqn:total_num_1}
    \sum_{\ell=1}^{\lceil \log_2(4\Delta_\epsilon^{-1}) \rceil} n_\ell\leq 64c \log( \log(\Delta_\epsilon^{-1}) |\Pi| /\delta)\lceil \log_2(4\Delta_\epsilon^{-1})\rceil\rho^{\mathrm{act}}_{\cB, \epsilon}.
\end{equation}
We now use a discretization argument to show the bound in \eqref{eq:rho}. Define $\epsilon$-ball $\mathcal{T}_\epsilon:=\{\pi: \forall \pi,\pi', \left\langle\phi_\pi-\phi_{\pi'},\theta^*\right\rangle\leq \epsilon\}$ and let $\mathcal{T}$ be a cover for $\Pi$ using those balls, i.e. $\mathcal{T}:=\{\mathcal{T}_{\epsilon,i}\}_{i=1}^{|\mathcal{T}|}$ 
Since $\theta^*\in\R^d$, the covering number $|\mathcal{T}|\leq O((1/\epsilon)^d)$. Let $\Pi_{\mathcal{T}}:=\{\pi_i:\pi_i\in \mathcal{T}_{\epsilon,i}\}_{i=1}^{|\mathcal{T}|}$ be a collection of policies where we take one policy from each $\epsilon$-ball in the cover $\mathcal{T}$. Then $|\Pi_{\mathcal{T}}|=|\mathcal{T}|\leq O((1/\epsilon)^d)$. The exact argument holds for identifying the optimal policy in $\Pi_{\mathcal{T}}$, i.e. it takes at most 
\begin{equation}\label{eqn:total_num_2}
    64c \log( \log(\Delta_\epsilon^{-1}) |\Pi_{\mathcal{T}}| /\delta)\lceil \log_2(4\Delta_\epsilon^{-1})\rceil\rho^{\mathrm{act}}_{\cB, \epsilon}=O((d\log(1/\epsilon)+\log(1/\delta)) \log_2(\Delta_\epsilon^{-1})\rho^{\mathrm{act}}_{\cB, \epsilon})
\end{equation}
samples to identify $\pi_{\mathcal{T}}^*\in \Pi_{\mathcal{T}}$. However, note that the global optimal policy $\pi^*$ must lie in some ball $\mathcal{T}_{\epsilon,i_0}$, so $V(\pi^*)-V(\pi_{\mathcal{T}}^*)\leq V(\pi^*)-V(\pi_{i_0})\leq \epsilon$, so $\pi_{\mathcal{T}}^*$ is an $\epsilon$-optimal policy. The first part of the statement follows by taking the minimum of \eqref{eqn:total_num_1} and \eqref{eqn:total_num_2}. As for the inequality bounding the sample complexity by $\tilde{O}(d^2/\epsilon^2)$, note that $\rho_{\cB,\epsilon}^{\mathrm{act}}$ is upper bounded by $\rho_{\cB, \epsilon}^{\mathrm{pas}}$, so this upper bound follows directly from the second part of Theorem 2.14 of \cite{li2022instance}. 
\end{proof}


Theorem~\ref{thm:active-contextual_rage} improves over Theorem~\ref{thm:instance-optimal} in the following sense. Note that because the minimization in $\varrho_{\cB, \epsilon}$ is over a \emph{strictly larger} distribution class ($\Delta^{\cX \times \cA}$)than the minimization in $\rho_{\cB, \epsilon}^{\mathrm{pas}}$ (which is only minimized over $\cF$), we have that
\begin{align*}
    \rho_{\cB, \epsilon}^{\mathrm{act}} \leq \rho_{\cB, \epsilon}^{\mathrm{pas}}, 
\end{align*}
In the following section, we demonstrate that that on the hard instance for passive context sampling described in Section~\ref{sec:comparison} (Definition~\ref{def:hard-instance}), we have that $\rho_{\cB, \epsilon}^{\mathrm{act}}$ is indeed a $\Theta(d)$-factor smaller than $\rho_{\cB, \epsilon}^{\mathrm{pas}}. $ 

\subsubsection{Demonstrating the power of active context sampling }\label{sec:separation}

The main result of this section is the following.  

\begin{lemma} Let $A,d \in \mathbb{Z}_{>1}$. Let $\cB^\star_{d, A}$ be the SCLB instance from Definition~\ref{def:hard-instance}. Then, for any $\epsilon \in (0, 1)$ we have that 
\begin{align*}
    \rho_{\cB^\star_{d,A}, \epsilon}^{\mathrm{act}} \leq \frac{8}{d} \cdot \rho_{\cB^\star_{d,A}, \epsilon}^{\mathrm{pas}}. 
\end{align*}
\end{lemma}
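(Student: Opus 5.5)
The plan is to use the diagonal structure of $\cB^\star_{d,A}$. First I compute $\phi_\pi-\phi_{\pi^\star}$ and the gap for every policy. Then I write down one explicit active design $w$ whose numerator is uniformly small. Finally I lower-bound the numerator for \emph{every} passive design $w\in\cF$. Both quantities share the same denominator $\max(\epsilon,\text{gap})^2$, so a pointwise (per-policy) comparison of numerators is enough.

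\emph{Step 1 (structure of policies).} Every action at $x=1$ has feature $e_1$. For $x\ge 2$, action $1$ gives $e_x$ with reward $1$, and any other action gives $e_1$ with reward $0$. Hence $\pi^\star(x)=1$ for $x\ge2$, and $\pi^\star(1)$ is irrelevant. For a policy $\pi$, let $B_\pi=\{x\ge 2:\pi(x)\neq 1\}$. Then
\begin{align*}
\phi_\pi-\phi_{\pi^\star}=\frac{1}{d^2}\sum_{x\in B_\pi}(e_1-e_x),\qquad \langle\phi_{\pi^\star}-\phi_\pi,\thetastar\rangle=\frac{|B_\pi|}{d^2}.
\end{align*}
Policies with $B_\pi=\emptyset$ contribute $0$ to both maxima, so only $1\le |B_\pi|\le d-1$ matters. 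For any design, $\Sigma_w=\tfrac{\lambda}{T}I+\sum_i z_i e_ie_i^\top$ is diagonal. Here $z_1$ is the total mass $w$ places on pairs with feature $e_1$, and $z_x=w(x,1)$ for $x\ge2$. Therefore
\begin{align*}
\|\phi_\pi-\phi_{\pi^\star}\|^2_{\Sigma_w^{-1}}=\frac{1}{d^4}\Big(\frac{|B_\pi|^2}{z_1+\lambda/T}+\sum_{x\in B_\pi}\frac{1}{z_x+\lambda/T}\Big).
\end{align*}

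\emph{Step 2 (active upper bound).} Take $z_1=1/2$ and $z_x=1/(2(d-1))$ for $x\ge 2$, and drop the nonnegative regularizer. This gives the numerator bound $\frac{1}{d^4}\big(2|B|^2+2(d-1)|B|\big)$. Since $|B|\le d-1$, this is at most $\frac{4(d-1)|B|}{d^4}\le \frac{4|B|}{d^3}$. Because $\rho^{\mathrm{act}}$ is a minimum over $w$, we get
\begin{align*}
\rho^{\mathrm{act}}\le \max_{\pi}\frac{4|B_\pi|/d^3}{\max(\epsilon,|B_\pi|/d^2)^2}.
\end{align*}

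\emph{Step 3 (passive lower bound).} Any $w\in\cF$ satisfies $w(x,1)\le p(x)=1/d^2$ for $x\ge2$, so $z_x+\lambda/T\le 1/d^2+\lambda/T$. Dropping the first term of the numerator gives, for every $w\in\cF$ and every $\pi$,
\begin{align*}
\|\phi_\pi-\phi_{\pi^\star}\|^2_{\Sigma_w^{-1}}\ge \frac{|B_\pi|}{d^4(1/d^2+\lambda/T)}.
\end{align*}
This is at least $\frac{|B_\pi|}{2d^2}$ provided $\lambda/T\le 1/d^2$. So for every $w\in\cF$ and every $\pi$, the active numerator from Step 2 is at most $\frac{8}{d}$ times the passive numerator. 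Dividing by the common denominator, taking $\max_\pi$, and then $\min_{w\in\cF}$ yields $\rho^{\mathrm{act}}\le \frac{8}{d}\rho^{\mathrm{pas}}$.

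The main obstacle is the ridge term $\lambda/T$ inside $\Sigma_w$. It is what limits the passive lower bound, and it is why the constant is $8$ rather than $4$. I would handle it by working in the regime $\lambda/T\le 1/d^2$, which holds whenever $T\ge\lambda d^2$ or in the limit $\lambda\to0$, consistent with the $T\ge T_0=\tilde O(d^2)$ regime used throughout the paper. I would state this assumption explicitly. The other point to check is that the diagonal reduction is valid for every $w$: all features are standard basis vectors, so $\Sigma_w$ is always diagonal and no off-diagonal terms arise.
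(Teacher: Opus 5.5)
Your proof is correct. It uses the same active design as the paper: your $z_1=1/2$, $z_x=1/(2(d-1))$ is exactly the paper's $w'$. The comparison itself, however, is organized differently.

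Write $N(w,\pi)=\|\phi_\pi-\phi_{\pi^\star}\|^2_{\Sigma_w^{-1}}$. The paper proceeds in four moves:
\begin{enumerate}
\item it drops the denominators;
\item it restricts to the single policy $\pi\equiv 2$ (your $B_\pi=\{2,\dots,d\}$);
\item it identifies the passive minimizer explicitly (all mass on action $1$, with $x$-marginal $p$);
\item it compares $\min_{w\in\cF}N(w,\pi)\ge 1/(2d)$ against $N(w',\pi)\le 4/d^2$.
\end{enumerate}
This is shorter, but the reduction is not justified as written. Take the intended gap $\langle\phi_{\pi^\star}-\phi_\pi,\thetastar\rangle$ in the denominator. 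Then the ratio under $w'$ equals $2+2(d-1)/|B_\pi|$ whenever $|B_\pi|/d^2\ge\epsilon$. So it is maximized at small $|B_\pi|$, not at $\pi\equiv 2$, and comparing min-max values of the numerators alone does not transfer to the ratios.

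Your inequality $N(w',\pi)\le\frac{8}{d}N(w,\pi)$ holds simultaneously for every $\pi$ and every $w\in\cF$. That is exactly what legitimizes passing from numerators to ratios, and it works under either sign convention, since the denominator never depends on $w$. Bounding the passive side only through the marginal constraint $w(x,1)\le p(x)$ also spares you from certifying which passive design is optimal.

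On the regularizer: the paper's computation treats $\Sigma_w$ as unregularized, as in the original ContextualRAGE setting, so your condition $\lambda/T\le 1/d^2$ holds trivially there. Under the regularized reading some such condition is genuinely necessary. As $\lambda/T\to\infty$, both $\rho^{\mathrm{act}}$ and $\rho^{\mathrm{pas}}$ behave like $T/\lambda$ times the same quantity, so their ratio tends to $1$, which exceeds $8/d$ once $d>8$.
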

\begin{proof} 
For notational convenience, we let $\cB = \cB^\star_{d,A}$ inside this proof. The optimal policy $\pi^\star$ is given by $\pi^\star(x) = 1, \forall x \in \cX$. From the definitions of $\rho_{\cB,\epsilon}, \varrho_{\cB, \epsilon}$, and $\cB$, it is enough to show that
\begin{align}\label{eq:comparison-equation-here}
    \min_{w \in \Delta^{\cX \times \cA}} \max_{\pi: \pi \neq \pi^\star} {\normInline{\phi_\pi - \phi_{\pi^\star}}_{\Sigma_w^{-1}}}^2 \leq \frac{8}{d} \cdot \min_{w \in \cF} \max_{\pi: \pi \neq \pi^\star} {\normInline{\phi_\pi - \phi_\pi^\star}_{\Sigma_w^{-1}}}^2. 
\end{align}

Note that due to the structure of the SCLB---wherein all actions besides $a=1$ reveal the same feature $e_1$---one maximizing policy in the inner maximization will be given by $\pi(x) = 2, \forall x \in \cA$. Thus, without loss of generality, we can fix 
\begin{align}
    {\phi_\pi - \phi_{\pi^\star}} = e_1 - \paren{\paren{1 - \frac{d-1}{d^2}} e_1 + \frac{1}{d^2} \sum_{i =2}^d e_i} = \frac{d - 1}{d^2} e_1 - \sum_{i=2}^d \frac{1}{d^2}e_i. 
\end{align}
and show that 
\begin{align}\label{eq:comparison-equation-here-again}
    \min_{w \in \Delta^{\cX \times \cA}} {\normInline{\phi_\pi - \phi_\pi^\star}_{\Sigma_w^{-1}}}^2 \leq \frac{8}{d} \cdot \min_{w \in \cF} {\normInline{\phi_\pi - \phi_\pi^\star}_{\Sigma_w^{-1}}}^2. 
\end{align}
The remainder of the proof is devoted to proving \eqref{eq:comparison-equation-here-again}.

Now, consider the right-hand-side of \eqref{eq:comparison-equation-here-again}. We need to reason about the optimal choice of $w$. Note that all actions for $a \neq 1$ reveal the same feature vector---which corresponds to 0 reward; so, without loss of generality we can restrict the minimizing sampling distribution's support to $a = 1$. Moreover, because $w \in \cF$ constrains the marginal of $w$ with respect to $x$ to be equal to $p$, this indicates that one minimizing sampling distribution for the right-hand-side of \eqref{eq:comparison-equation-here-again} is given by 
\begin{align*}
    w(x,a) = \begin{cases}
        1 - (d-1)/d^2, & x = 1, a = 1\\
        1/d^2 & x \neq 1, a = 1 \\
        0 & \text{otherwise}
    \end{cases}. 
\end{align*}

In this case, $\Sigma_{w}^{-1}$ is a diagonal matrix with $\frac{d^2 - d + 1}{d^2}$ in the first entry, and $d^2$ on the remainder of the diagonal. Hence, we have
\begin{align*}
    \min_{w \in \cF} {\normInline{\phi_\pi - \phi_\pi^\star}^2_{\Sigma_w^{-1}}} &= \frac{d^2-d+1}{d^2} \cdot \frac{(d-1)^2}{d^4} + d^2 \sum_{i=2}^d \frac{1}{d^4} \\
    &= \frac{d^2-d+1}{d^2} \cdot \frac{(d-1)^2}{d^4} + \frac{d-1}{d^2} \\
    &\geq \frac{1}{2d}, 
\end{align*}
where the last inequality uses that $d > 1$. 

Next, we turn to the left-hand-side of \eqref{eq:comparison-equation-here-again}. Consider the (active) sampling distribution 
\begin{align*}
    w'(x,a) = \begin{cases}
        \frac{1}2, & x = 1, a = 1\\
        \frac{1}{2(d-1)}& x \neq 1, a = 1 \\
        0 & \text{otherwise}
    \end{cases}.
\end{align*}
In this case, $\Sigma_{w'}^{-1}$ is a diagonal matrix with $2$ in the first entry, and $2(d-1)$ on the remainder of the diagonal.
This distribution is actively sampling contexts in the sense that its marginal with respect to $x$ is \emph{not} $p$. A similar calculation as above shows that 
\begin{align*}
    \min_{w \in \Delta^{\cX \times \cA}} {\normInline{\phi_\pi - \phi_\pi^\star}^2_{\Sigma_w^{-1}}} &\leq {\normInline{\phi_\pi - \phi_\pi^\star}^2_{\Sigma_{w'}^{-1}}} = 2 \frac{(d-1)^2}{d^4} + 2(d-1) \sum_{i = 2}^d \frac{1}{d^4} \\
    &= 2 \frac{(d-1)^2}{d^4} + \frac{2(d-1)^2}{d^4} \leq \frac{4}{d^2}. 
\end{align*}

Thus, we conclude that 
\begin{align*}
    \min_{w \in \Delta^{\cX \times \cA}} {\normInline{\phi_\pi - \phi_\pi^\star}_{\Sigma_w^{-1}}}^2 \leq \frac{4}{d} = 8 \cdot \frac{1}{2d} \leq 8 \min_{w \in \cF} {\normInline{\phi_\pi - \phi_\pi^\star}_{\Sigma_w^{-1}}}^2, 
\end{align*}
as desired. 
\end{proof}

This result shows that from the PAC-learning perspective, Active-ContextualRAGE (Algorithm~\ref{alg:rage}) once again improves the dimension dependence from \cite{li2022instance} on an instance-dependent basis. However, as neither is known to be implementable in polynomial time, it remains a future work to explore possible implementations or heuristics to approximately simulate these algorithms. We hope our work provides insight on how to incorporate active context sampling, if a polynomial-time implementation or approximation of Contextual-RAGE is eventually developed in future research. 

%% file: appendix/additional_experiments.tex
\newpage
\section{Additional experimental details}\label{apx:additional-experiments}

This appendix contains additional experimental details to aid in reproducing our empirical results.

\subsection{Overview of passive context sampling baselines}\label{sec:passive-learning-baseline}

Here, we briefly describe the two baselines against which we compare our empirical results. 

\paragraph{Reward-free LinUCB (RFLinUCB).} RFLinUCB is a slight modification (proposed in \cite{zanette2021design}) of the LinUCB algorithms (proposed by \cite{abbasi2011improved}) in order to adapt it for the pure exploration setting. The RFLinUCB algorithm implements the standard LinUCB algorithm of \citet{abbasi2011improved} with the reward function set to 0. The original LinUCB algorithm contains the reward function in its exploration policy because it is designed to minimize the online regret (rather than the simple regret, which is of interest in the pure exploration setting), and this is an unncessary in the pure exploration setting \citep{zanette2021design}.  

\paragraph{Planner-Sampler} \citet{zanette2021design} point out that one limitation of the RFLinUCB algorithm is that the algorithm is \emph{adaptive} to the observed context-action pairs, in the sense that the $t$-th observed context depends on the history $\{x_1, a_1\},..., \{x_t, a_t\}$. This can be challenging to implement in certain real-world scenarios, since it requires \emph{sequentially} querying context-action pairs. To address this, the planner-sampler algorithm of \citet{zanette2021design} is designed to match the regret bound of RFLinUCB with a \emph{fixed, non-adaptive} policy. The Planner-Sampler algorithm proceeds in two stages. 

First, a ``Planner'' algorithm observes the features (but not the rewards) of $T_0$ contexts $\{\phi(x_t, a_t)\}_{t \in [T_0], a\in\cA}$ for an offline set of contexts drawn independently $x_t \sim p$. The Planner uses these observed features to design a stochastic exploration policy $\pi': \cX \to \cA$. 

In the second stage, a ``Sampler'' algorithm samples the rewards of $T$ context-action pairs drawn according to the Planner's policy $\pi'$. That is, the Sampler observes $r(x'_1, a'_1), ..., r(x'_{T}, a'_{T})$ where each $x'_t \sim p$ and $a'_t \sim \pi'(x'_t)$ independently. \citet{zanette2021design} study various tradeoffs of $T_0$ and $T$. 

In our experiments, we generously set $T_0 = |\cX|$ in order to ensure fair comparison with RF-LinUCB and our active context sampling approach, which use knowledge of the full feature mapping. We then measure the regret as a function of the number of reward observations, $T$ required by the ``Sampler''. 

\paragraph{Active-SCLB and Passive-SCLB.} Recall that in our experiments, we run Algorithm~\ref{alg:active-lcb} with $\alpha \gets 0$ and solve the SDP in Line~\ref{line:approximate} of Algorithm~\ref{alg:active-lcb} to convergence. In this setting, there is a natural passive context sampling analogue of our Algorithm: instead of (approximately) solving 
\begin{align*}
    \hat{w} \approx \argmin_{w \in \Delta^{\cS \times \cA}} \normInline{\phi(x,a)}_{\Sigma_{\hat{w}}^{-1}}^2, 
\end{align*}
we can solve the same problem with the additional constraint that the marginal distribution of $w$ with respect to $x$ must match $p$. That is, we can define 
\begin{align*}
    \cF = \left\{w \in \Delta^{\cX \times \cA} : \forall x \in \cX, \sum_{a \in \cA} w(x,a) = p(x) \right\} 
\end{align*}
and solve
\begin{align*}
    \hat{w} \approx \argmin_{w \in \cF} \normInline{\phi(x,a)}_{\Sigma_{\hat{w}}^{-1}}^2. 
\end{align*}
Note that restricting the feasible space to $\cF$ only requires some additional linear constraints, and consequently it is easy to see that the problem can still be formulated as an SDP. For completeness, we include a full pseudocode of the versions used in our numerical experiments in Algorithm~\ref{alg:active-sclb-experiments} and Algorithm~\ref{alg:passive-sclb-experiments}.

\begin{algorithm2e}[htbp]
\DontPrintSemicolon            
\SetAlgoLined                  
\KwIn{SCLB $\cB = (\cX, \cA, \phi, p, \nu, \theta^\star)$, regularization parameter $\lambda > 0$, sample budget $T \in Z_{>0}$.}
\KwOut{A policy $\hat{\pi}: \cX \to \cA$}
\BlankLine
\caption{Active-SCLB}\label{alg:active-sclb-experiments}
\tcp{Approximately solve the following using an SDP solver.}
 Compute $\hat{w} = \argmin_{w \in \Delta^{\cX \times \cA}}\E_{x \sim p} \max_{a \in \cA} \normInline{\phi(x,a)}_{\SigmaOInv{w}}^2$\; 
  \tcp{Draw samples and apply ridge regression to compute a policy $\hat{\pi}$. Recall in synthetic experiments, we sampled with replacement, while in experiments on real-world datasets, we sampled without replacement using rejection sampling.}
 $\cS \gets \smash{\{(x_1, a_1), ..., (x_T, a_T) \}}$ where each $(x_t,a_t) {\sim} \hat{w}$ \iid\; 
 $\thetahat \gets \SigmaLambdaSInv{\cS}{\lambda} \sum_{(x_t, a_t) \in \calS} \phi(x_t, a_t) r_{t} $\; 
 \Return{$\hat{\pi}(x) \gets x \mapsto \argmax_{a \in \cA} \phi(x,a)^\top \thetahat$}
\end{algorithm2e}

\begin{algorithm2e}[htbp]
\DontPrintSemicolon            
\SetAlgoLined                  
\KwIn{SCLB $\cB = (\cX, \cA, \phi, p, \nu, \theta^\star)$, regularization parameter $\lambda > 0$, sample budget $T \in Z_{>0}$.}
\KwOut{A policy $\hat{\pi}: \cX \to \cA$}
\BlankLine
\caption{Passive-SCLB}\label{alg:passive-sclb-experiments}
\tcp{Approximately solve the following using an SDP solver.}
 Compute $\hat{w} = \argmin_{w \in \cF}\E_{x \sim p} \max_{a \in \cA} \normInline{\phi(x,a)}_{\SigmaOInv{{w}}}^2$\; 
  \tcp{Draw samples and apply ridge regression to compute a policy $\hat{\pi}$. Recall in synthetic experiments, we sampled with replacement, while in experiments on real-world datasets, we sampled without replacement using rejection sampling.}
 $\cS \gets \smash{\{(x_1, a_1), ..., (x_T, a_T) \}}$ where each $(x_t,a_t) {\sim} \hat{w}$ \iid\; 
 $\thetahat \gets \SigmaLambdaSInv{\cS}{\lambda} \sum_{(x_t, a_t) \in \calS} \phi(x_t, a_t) r_{t} $\; 
 \Return{$\hat{\pi}(x) \gets x \mapsto \argmax_{a \in \cA} \phi(x,a)^\top \thetahat$}
\end{algorithm2e}

\subsection{Implementation details for SDP solving}\label{sec:implementational-details} In all of our experiments, we used CVXPY (an open source Python-embedded modeling language for convex optimization problems) to model the SDP variables, objectives, and constraints. Within CVXPY, we used the Mosek (Version 10) SDP solver for solving the SDPs (described further in the following paragraph.) All solver hyperparameters were fixed to their default values in CVXPY.

MOSEK is a software package for solving structured optimizations such as SDPs. Although it is not open-sourced, MOSEK is available \href{https://www.mosek.com/products/academic-licenses/}{to academic users for free} Additionally, for non-academic users, MOSEK offers a \href{https://www.mosek.com/products/trial/}{free trial period}.

To ensure numerical stability, when applying the SDP solver, we (1) included a small amount of numerical regularization for the SDP constraints, on the order of $1\text{e-}6$, to avoid numerical stability errors and (2) normalized all features such that $\normInline{\phi(x,a)}_2 \leq 1$. For consistency, the feature normalization was applied to \emph{all} baselines as well as Active-SCLB in our experiments.  

\subsection{Runtime.}\label{sec:runtime}

Active-SCLB and Passive-SCLB have a runtime overhead to initialize a sampling distribution, but after this initial runtime overhead, sampling $T$ values from this distribution is fast even if $T$ is large and in particular, the reward sampling can be done fully in parallel \citep{zanette2021design}. 

On the other hand, RFLinUCB has no initialization overhead because it sequentially selects the next action, depending on the past. Each sample must be implemented sequentially, so its complexity scales like $T \times R$ where $R$ is the time to collect a single reward observation. 

Thus, while RFLinUCB has low overhead and is computationally cheap, its practical application can be slow in settings where each reward observation takes time (e.g., clinical trials, educational interventions, world simulations etc.), i.e., when $R$ is large. As \citep{zanette2021design} explain, this is a key consideration which is not captured in pure numerical simulations, where $R$ is artifically very small. 

In Table~\ref{tab:warfarin_results_time}, we show runtime (minutes) on Warfarin in the same setup as Section 6. An entry of 0 indicates a value $<1$ minute. We show the initialization time as well as the additional time required to draw $T$ samples for a few values of $T$. Initialization overhead of Active-SCLB is slightly higher than Passive-SCLB and Planner-Sampler due to its added complexity. RFLinUCB has no initialization overhead and is the fastest; however, Active-SCLB consistently achieves lower regret (see Figure 1 of our paper.) As discussed above, RFLinUCB sequentially samples rewards which can be slow in some practical applications.

\begin{table}[h!]
\centering
\begin{tabular}{llccc}
\toprule
\textbf{Method} & \textbf{Initialization} & \textbf{T=1,000} & \textbf{T=5,000} & \textbf{T=10,000} \\
\midrule
RFLinUCB        & 0  & 1 & 2 & 4 \\
Planner-Sampler & 22 & 0 & 0 & 0 \\
Passive-SCLB    & 17 & 0 & 0 & 0 \\
Active-SCLB     & 24 & 0 & 0 & 0 \\
\bottomrule
\end{tabular}
\caption{Runtime comparison on the Warfarin dataset.}
\label{tab:warfarin_results_time}
\end{table}

\subsection{Experiments with approximate $p$.}\label{sec:approximate_p}

We modified Active-SCLB to replace $p$ with $p’$, where $p'$ is the empirical distribution constructed from $|\mathcal{X}|/4$ random contexts (denoted Active-SCLB-Empirical) in Figure~\ref{fig:subsampled-p}. The setup is otherwise identical to Section~\ref{sec:experiments}. 

Active-SCLB-Empirical is sometimes slightly worse than Active-SCLB but performs similarly overall and consistently outperforms baselines. Baselines require significantly more samples for the average regret to match that of Active-SCLB-Empirical. 

\begin{figure}
    \centering
    \includegraphics[width=0.23\linewidth]{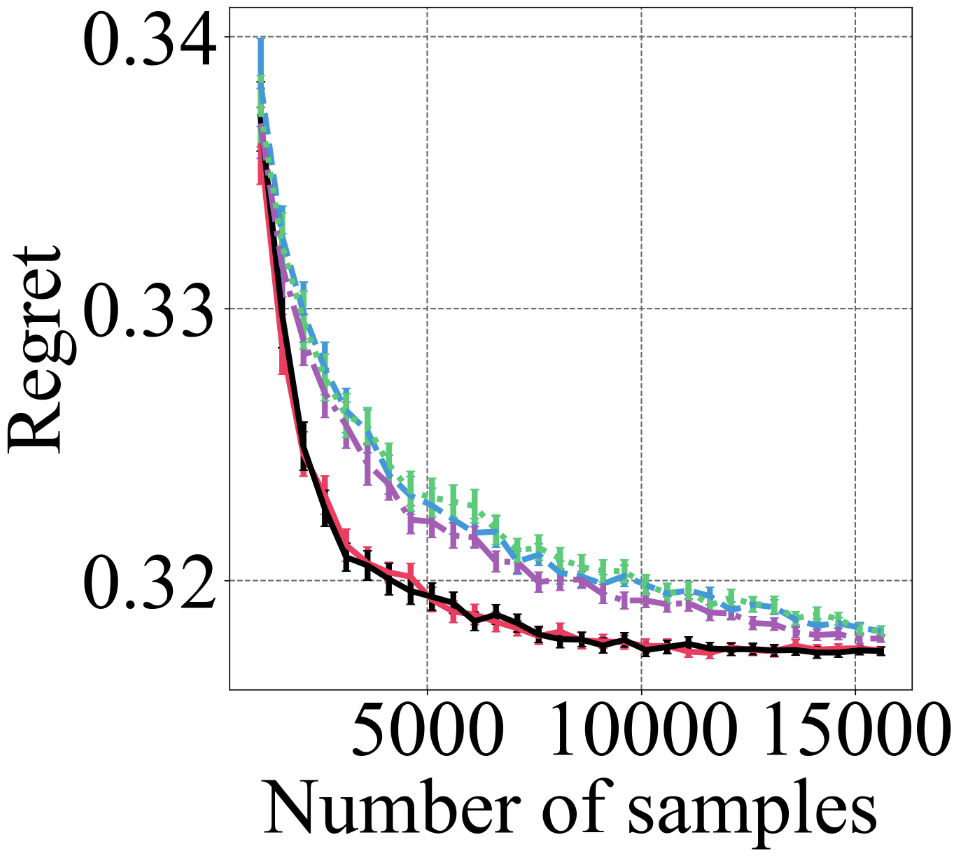}
    \includegraphics[width=0.23\linewidth] {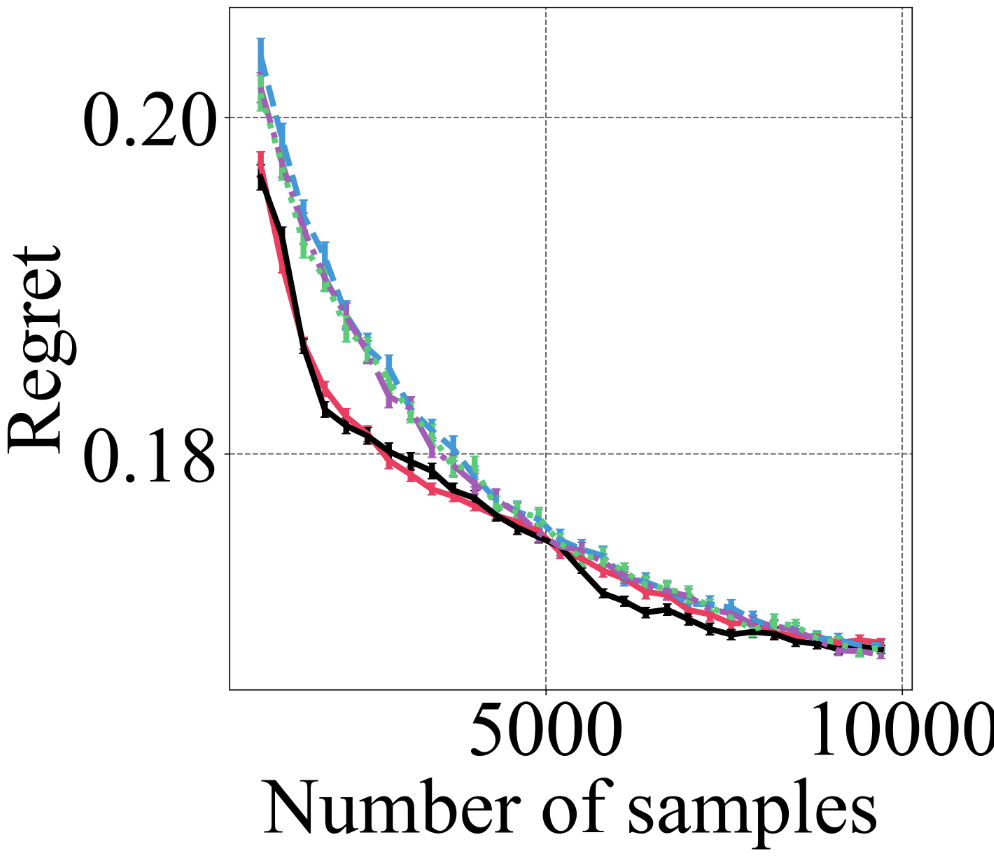} \\
    \vspace{1em}
    \includegraphics[width=0.35\linewidth]{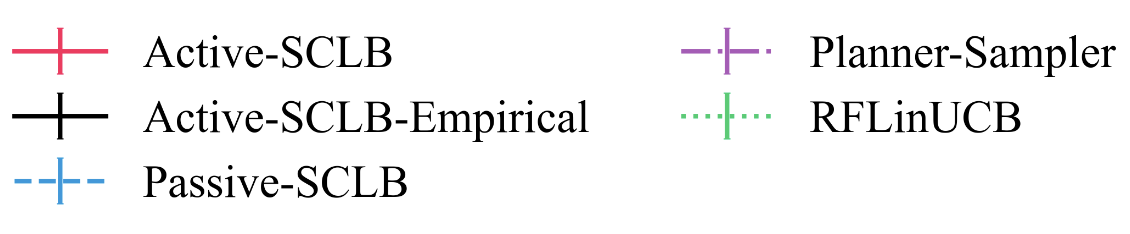}
    \caption{Experiments on Jester and Warfarin with subsampled $p$. Overall, we see that Active-SCLB-Empirical performs competitively wuth Active-SCLB, sometimes even outperforming it (possibly due to some regularization affects associated with using an empirical $p$).}
    \label{fig:subsampled-p}
\end{figure}

\subsection{Experiments with different regularization levels.}\label{sec:regularizer}

In this section, we include results on our real-world datasets for differing values of the regularizer $\lambda$. The choice of $\lambda$ is typically a design choice, which practitioners use to balance bias-variance trade-offs (see also, the discussion in \citep{zanette2021design}). Smaller values of $\lambda$ reduce bias at the risk of increased variance; larger values of $\lambda$ increase bias (often at the benefit of reduced variance). 

Figures~\ref{fig:warfarin} and~\ref{fig:joke} show that our improvements remain largely consistent as we vary $\lambda \in \{1\text{e-}6, 1\text{e-}4, 1\text{e-}2\}$. These results were omitted in the main body for brevity, because the results are largely consistent across different values of $\lambda$. 

\begin{figure}[tb]
  \centering
  \captionsetup[subfigure]{font=small,justification=centering}

  \begin{tabular}{ccc}
    \subcaptionbox{$\lambda=1\mathrm{e-}6$}[0.28\linewidth]{%
      \begin{minipage}{\linewidth}
        \centering
        \includegraphics[width=\linewidth]{figures/real-world/warfarin/warfarin_1e-6_regret.png}\\[1ex]
        \includegraphics[width=\linewidth]{figures/real-world/warfarin/warfarin_1e-6_match.png}
      \end{minipage}
    } &
    \subcaptionbox{$\lambda=1\mathrm{e-}4$}[0.28\linewidth]{%
      \begin{minipage}{\linewidth}
        \centering
        \includegraphics[width=\linewidth]{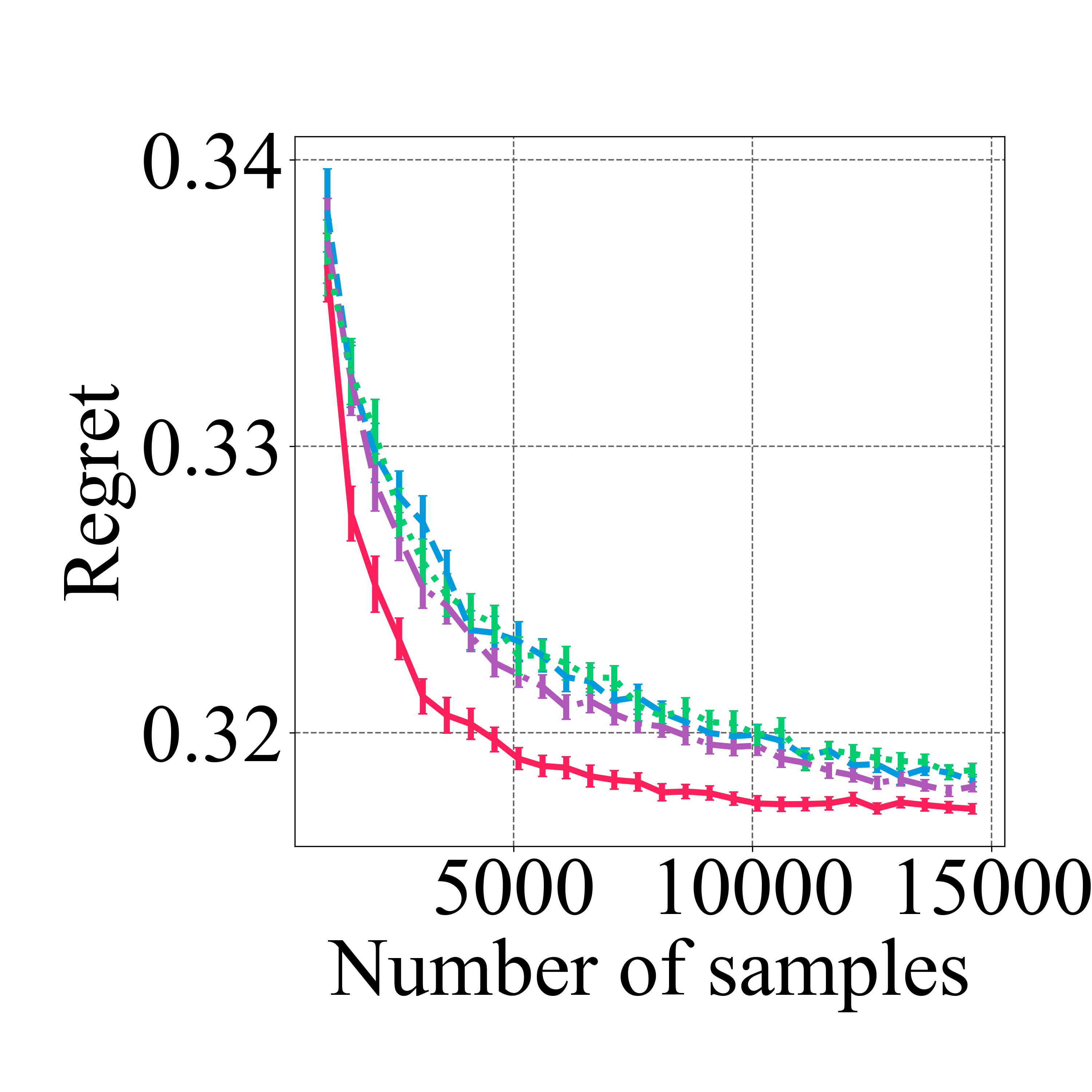}\\[1ex]
        \includegraphics[width=\linewidth]{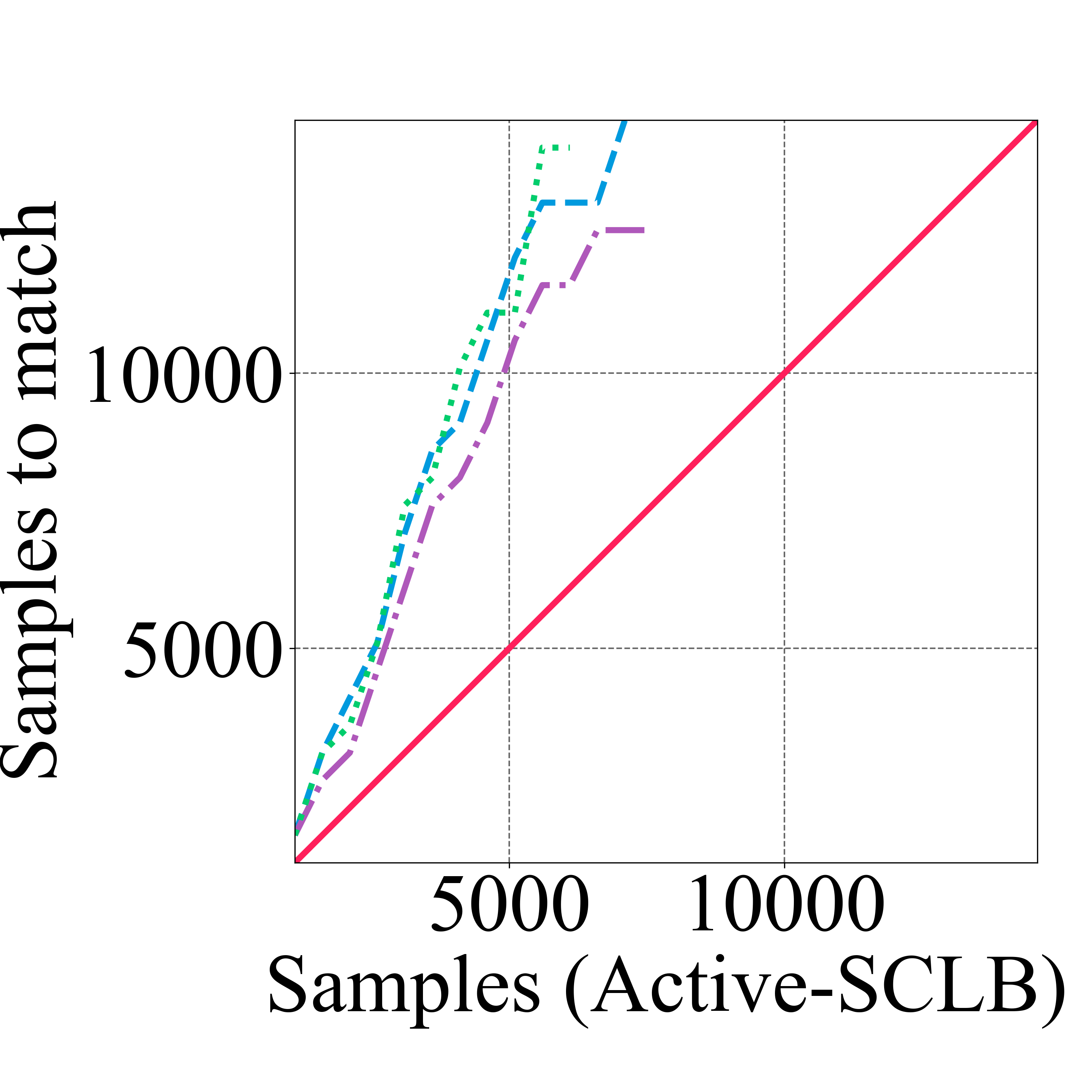}
      \end{minipage}
    } &
    \subcaptionbox{$\lambda=1\mathrm{e-}2$}[0.28\linewidth]{%
      \begin{minipage}{\linewidth}
        \centering
        \includegraphics[width=\linewidth]{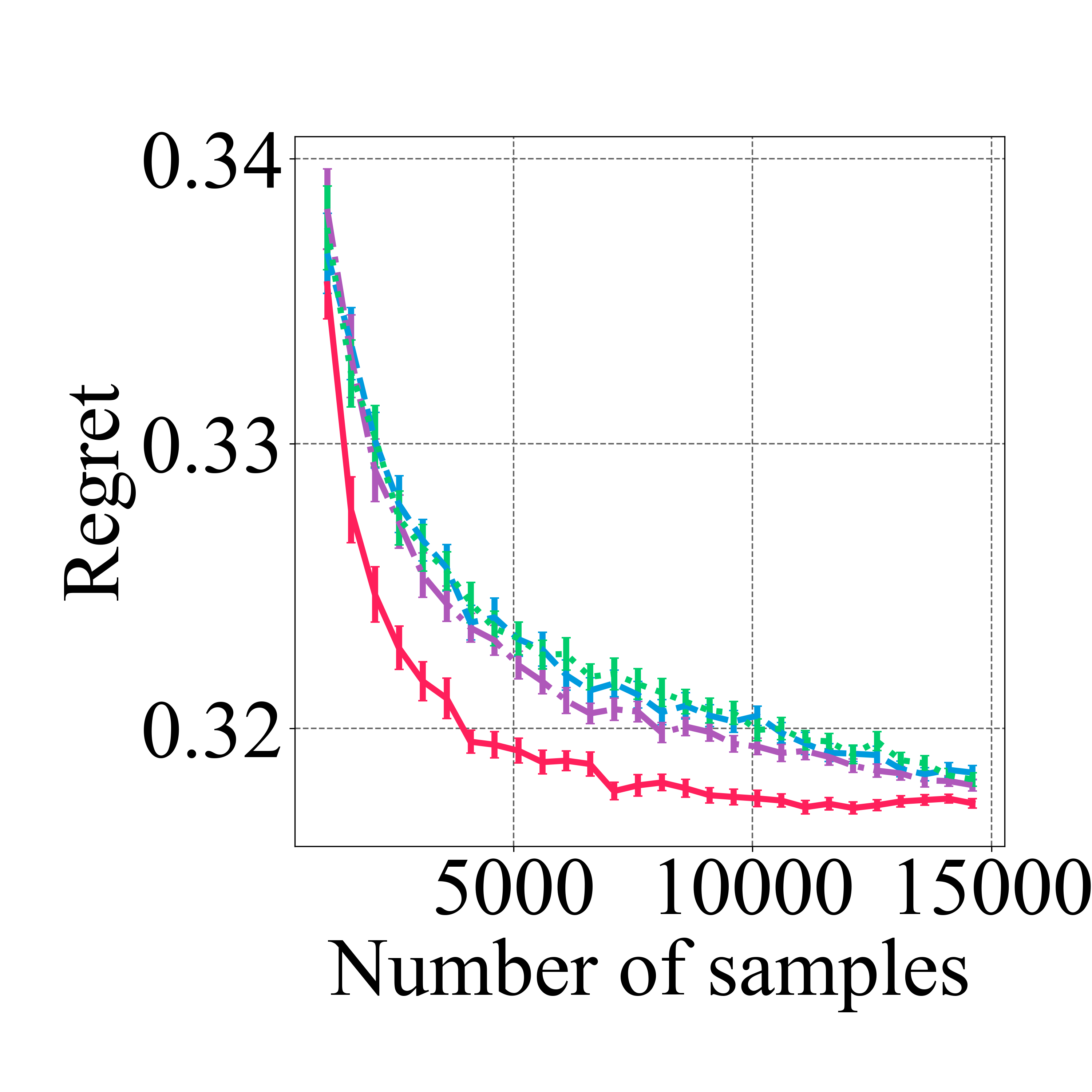}\\[1ex]
        \includegraphics[width=\linewidth]{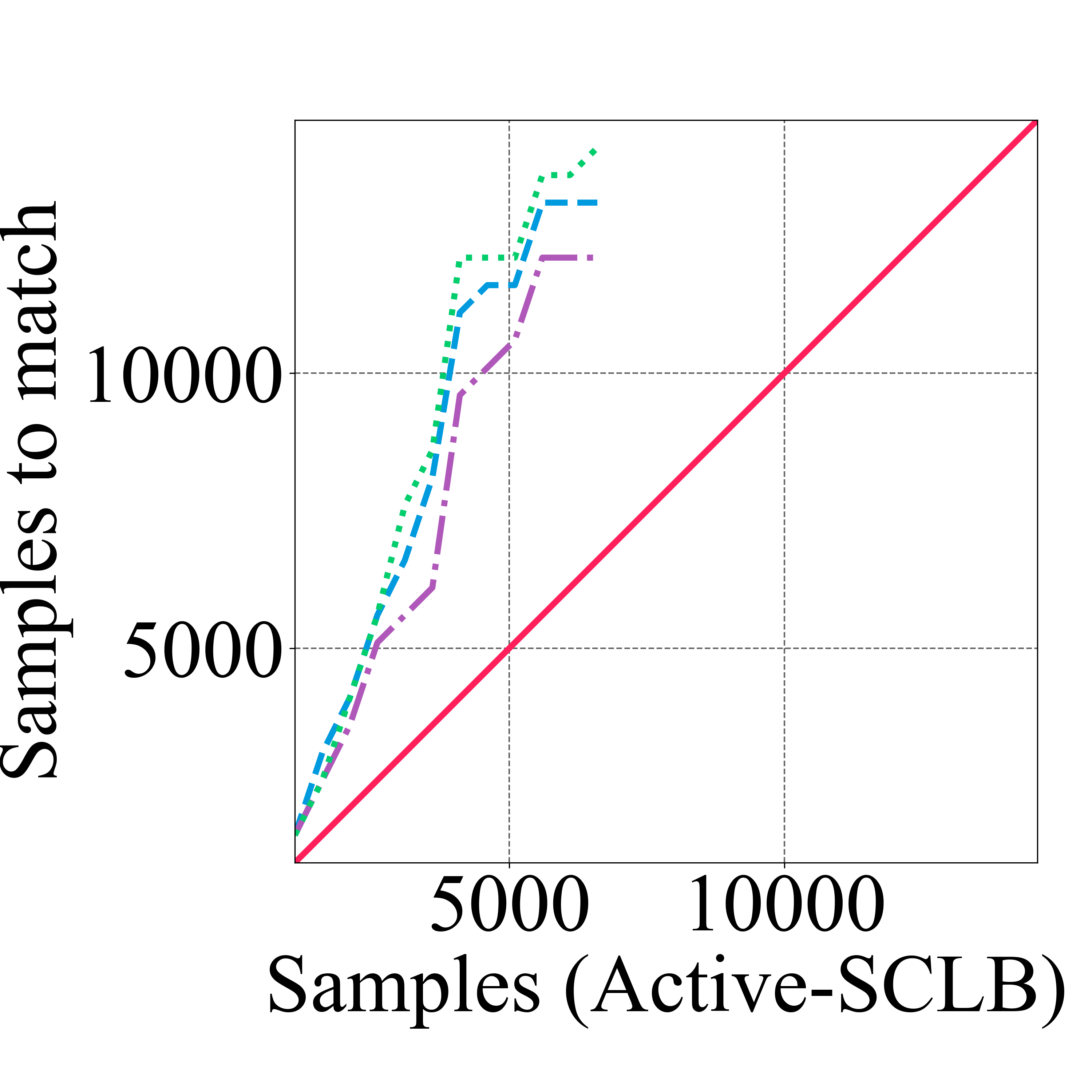}
      \end{minipage}
    }
  \end{tabular}

  \centering
  \begin{minipage}[c]{0.45\linewidth}
    \centering
    \includegraphics[width=\linewidth]{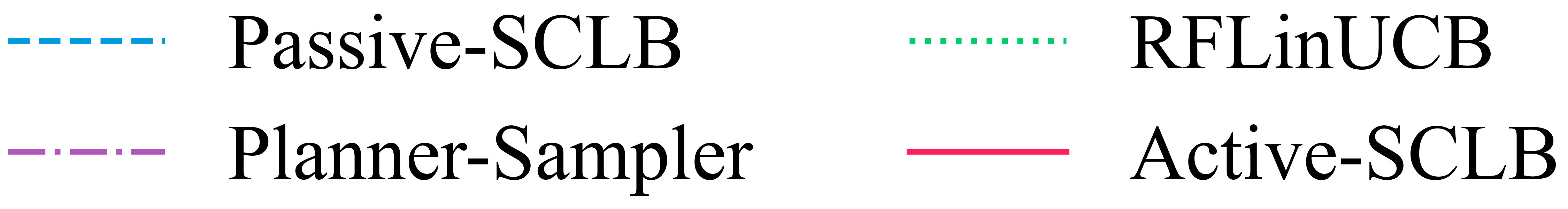}
  \end{minipage}%
  \hfill
  \begin{minipage}[c]{0.45\linewidth}
    \centering
    {\small \textbf{Regret of naive baseline:} 0.382}
  \end{minipage}

  \caption{%
    Warfarin dataset. \emph{Top:} Regret vs.\ number of samples (mean $\pm$ 2 standard errors over 100 trials). \emph{Bottom:}  Minimum number of samples required for baselines' mean regret (over 100 trials) to match Active-SCLB's mean regret for a given sample budget.
  }
  \label{fig:warfarin}
\end{figure}

\begin{figure}[tb]
  \centering
  \captionsetup[subfigure]{font=small,justification=centering}

  \begin{tabular}{ccc}
    \subcaptionbox{$\lambda=1\mathrm{e-}6$}[0.29\linewidth]{%
      \begin{minipage}{\linewidth}
        \centering
        \includegraphics[width=\linewidth]{figures/real-world/joke/joke_1e-6_regret.png}\\[1ex]
        \includegraphics[width=\linewidth]{figures/real-world/joke/joke_1e-6_match.png}
      \end{minipage}
    } &
    \subcaptionbox{$\lambda=1\mathrm{e-}4$}[0.29\linewidth]{%
      \begin{minipage}{\linewidth}
        \centering
        \includegraphics[width=\linewidth]{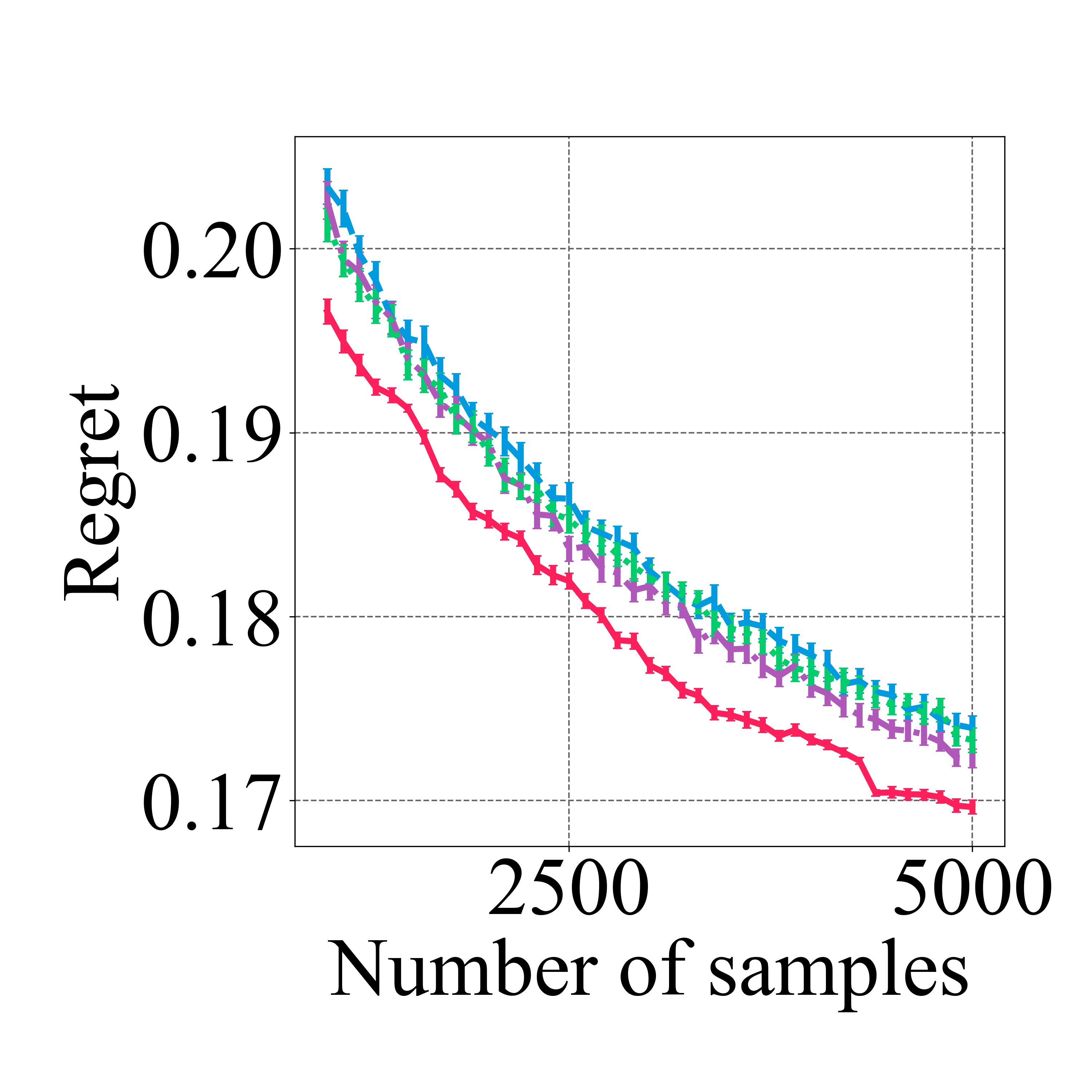}\\[1ex]
        \includegraphics[width=\linewidth]{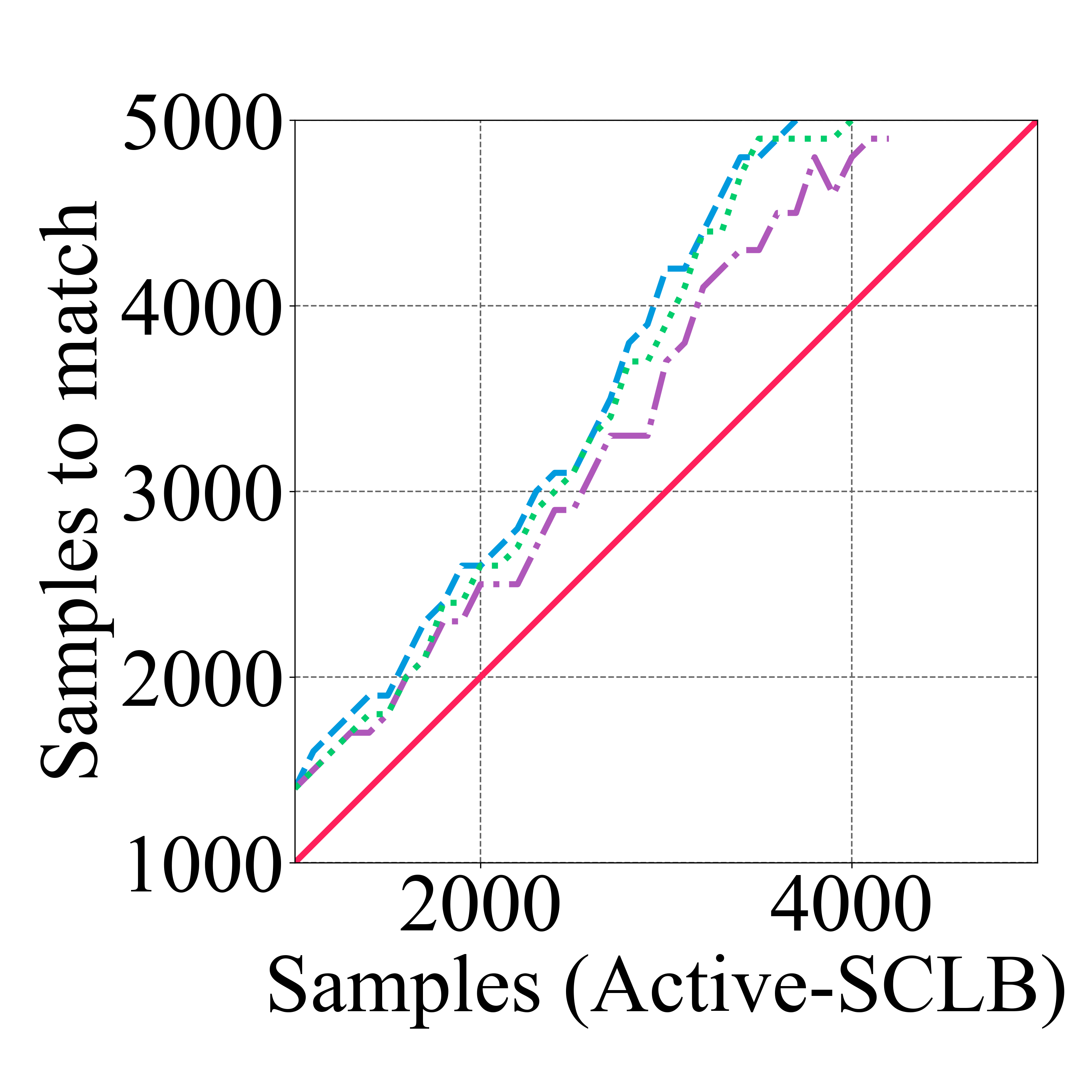}
      \end{minipage}
    } &
    \subcaptionbox{$\lambda=1\mathrm{e-}2$}[0.29\linewidth]{%
      \begin{minipage}{\linewidth}
        \centering
        \includegraphics[width=\linewidth]{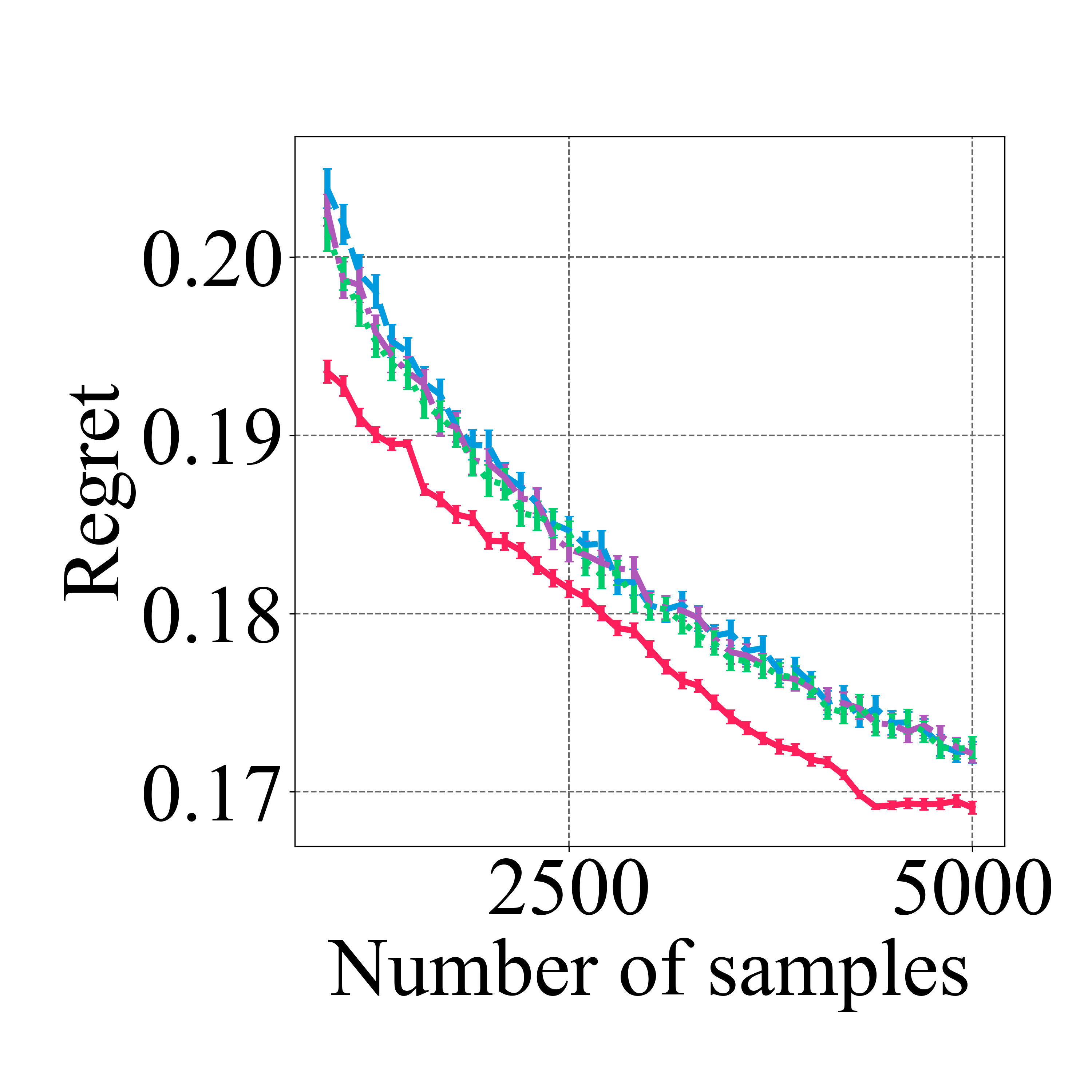}\\[1ex]
        \includegraphics[width=\linewidth]{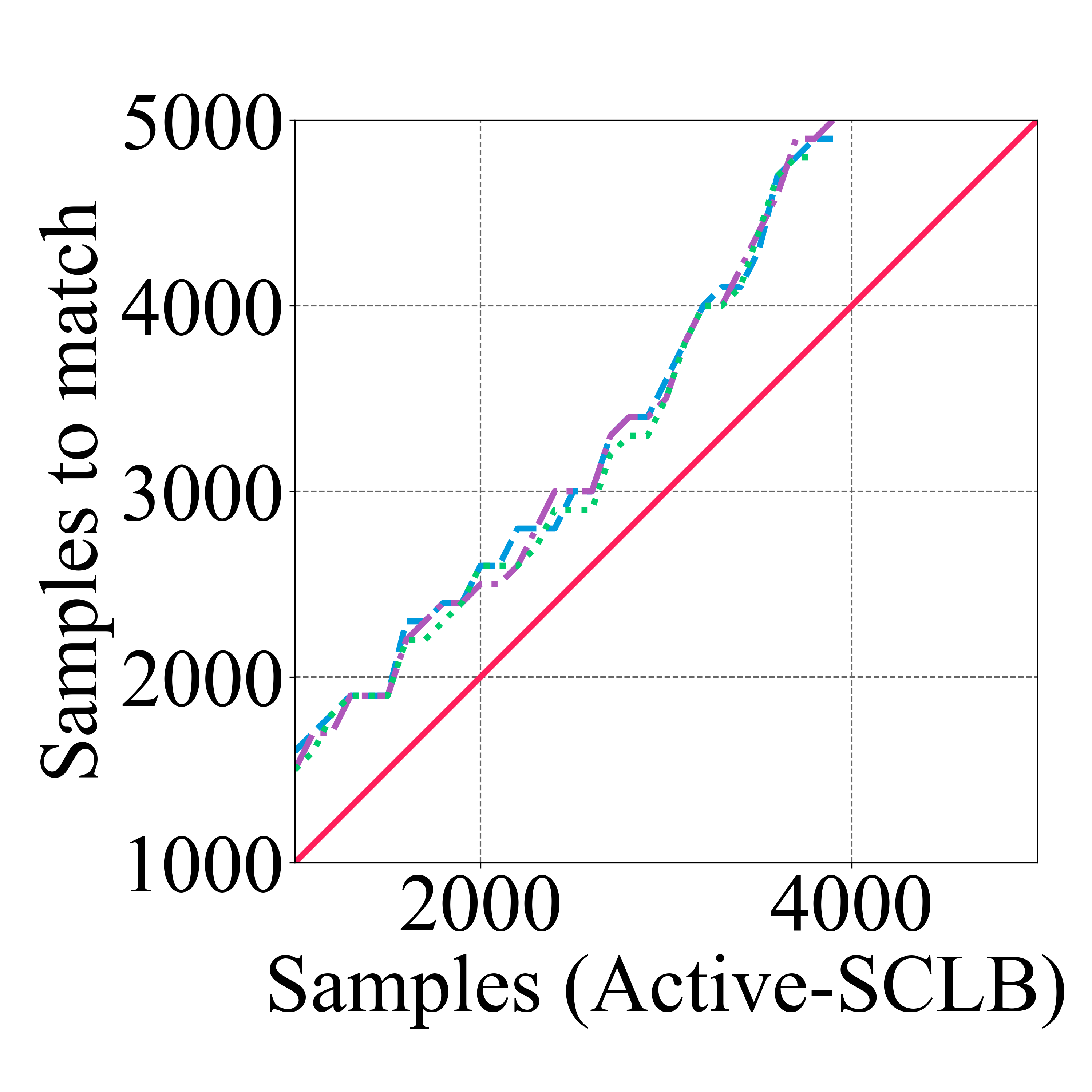}
      \end{minipage}
    }
  \end{tabular}

  \centering
  \begin{minipage}[c]{0.45\linewidth}
    \centering
    \includegraphics[width=\linewidth]{figures/legend.png}
  \end{minipage}%
  \hfill
  \begin{minipage}[c]{0.45\linewidth}
    \centering
    {\small \textbf{Regret of naive baseline:} 0.219}
  \end{minipage}

  \caption{%
    Jester dataset. \emph{Top:} Regret vs.\ number of samples (mean $\pm$ 2 standard errors over 100 trials). \emph{Bottom:}  Minimum number of samples required for baselines' mean regret (over 100 trials) to match Active-SCLB's mean regret for a given sample budget.
  }
  \label{fig:joke}
  \vspace{-.5em}
\end{figure}

%% file: appendix/omitted_proofs.tex
\newpage
\section{Omitted proofs}\label{apx:omitted-proofs}

This section contains proofs which are omitted in the main body. Recall that when applied to matrices, $\normInline{\cdot}$ always denotes the spectral norm. When applied to vectors, $\normInline{\cdot}$ denotes the Euclidean norm. 

In Section~\ref{apx:established} we collect some established results in SCLBs and PSD matrix theory, which we use in our analysis. In Section~\ref{apx:proof-main} we prove Theorem~\ref{theorem:main}. In Section~\ref{apx:comparison} we include omitted proofs from Section~\ref{sec:comparison}. 

\subsection{Omitted proofs of standard results in SCLBs and PSD matrix theory.}\label{apx:established}

We first prove some standard facts about the Lowener order. We believe the facts in the following Lemma~\ref{lemma:lowener} are well-known; however, we collect them here for the sake of completeness.

\begin{lemma}[Lowener order facts]\label{lemma:lowener} The following facts hold whenever $A, B, C \in \R^{d \times d}$ are positive semidefinite. 
\begin{enumerate}[label=(\roman*)]
    \item $A \succeq B$ if and only if for all $x \in \R^d$, $x^\top A x \geq x^\top B x$.    
    \item If $A \succ 0$, $A^{-1} \succ 0$. 
    \item If $A \succ (\succeq) B$, then for any symmetric $F \in \R^{d \times d}$, we have $FAF \succ (\succeq) FBF$.  
    \item If $A \succeq B \succ 0$, then $A^{-1} \preceq B^{-1}$.
    \item If $A = B +C$ then $A \succeq B$. 
    \item If $\normInline{ B^{-1/2} (A-B) B^{-1/2}} \leq \epsilon$ for some $\epsilon \in (0, 1)$, then $(1-\epsilon) B \preceq A \preceq (1+\epsilon) B$. 
\end{enumerate}
\end{lemma}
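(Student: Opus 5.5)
We prove the six items in order, each using only the definitions and the earlier items. Item (i) is the definition of the Loewner order: $A \succeq B$ means $A - B \succeq 0$, i.e.\ $x^\top (A-B) x \ge 0$ for all $x$, which rearranges to $x^\top A x \ge x^\top B x$. The strict case is identical with strict inequality for $x \neq 0$.

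For (ii), we use the spectral theorem. Write $A = U \Lambda U^\top$ with all eigenvalues positive; then $A^{-1} = U \Lambda^{-1} U^\top$ has positive eigenvalues and is therefore positive definite.

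For (iii), fix $x$ and set $y = Fx$. By symmetry of $F$, $x^\top F(A-B)F x = y^\top (A-B) y \ge 0$, so (i) gives the non-strict case. For the strict case we additionally need $y \neq 0$ whenever $x \neq 0$, so we will assume $F$ is invertible there. This holds in every application below, where $F$ is a symmetric square root of a positive definite matrix.

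For (v), we note that $A - B = C \succeq 0$.

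For (vi), put $M = B^{-1/2} A B^{-1/2}$, assuming $B \succ 0$ so that $B^{-1/2}$ is defined. The hypothesis says $\|M - I\| \le \epsilon$. Since $M - I$ is symmetric, its eigenvalues lie in $[-\epsilon, \epsilon]$, hence $(1-\epsilon) I \preceq M \preceq (1+\epsilon) I$. Conjugating by $F = B^{1/2}$ via (iii) yields $(1-\epsilon) B \preceq A \preceq (1+\epsilon) B$.

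Item (iv) is the only step needing a real argument. First, $A \succeq B \succ 0$ gives $A \succ 0$, so $A^{-1}$ exists. Apply (iii) with $F = B^{-1/2}$ to get $C \defeq B^{-1/2} A B^{-1/2} \succeq I$. Thus every eigenvalue of the symmetric matrix $C$ is at least $1$, so every eigenvalue of $C^{-1}$ is at most $1$, i.e.\ $C^{-1} = B^{1/2} A^{-1} B^{1/2} \preceq I$. Conjugating once more with $F = B^{-1/2}$ via (iii) gives $A^{-1} \preceq B^{-1}$.

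The care points are the invertibility of $F$ in the strict version of (iii) and the positive definiteness of $B$ in (vi); both are implicit in the statement's use of inverses and inverse square roots.
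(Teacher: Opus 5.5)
Your proof is correct and follows essentially the same route as the paper's: direct quadratic-form arguments for (i), (iii) and (v), and conjugation by $B^{\pm 1/2}$ plus an eigenvalue comparison for (iv) and (vi), with only cosmetic differences such as using the spectral theorem in (ii) where the paper instead substitutes $A^{-1}x$ into the quadratic form of $A$. Your caveat that the strict case of (iii) needs $F$ invertible is well taken, since the paper's claim that the strict case is ``identical'' fails for singular $F$ (e.g.\ $F=0$); only the non-strict version is ever used downstream, so nothing else is affected.
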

\begin{proof} We prove the facts one-by-one. 
\begin{enumerate}[label=(\roman*)]

    \item If $A \succeq B$, then $A - B \succeq 0$. For any $x \in \R^d$, we have
    \[
    x^\top (A - B) x \geq 0 \quad \text{ implies } \quad x^\top A x \geq x^\top B x.
    \]
    Conversely, if $x^\top A x \geq x^\top B x$ for all $x \in \R^d$, then for any $x \in \R^d$
    \begin{align*}
        x^\top (A-B) x = x^\top A x - x^\top B x \geq 0, 
    \end{align*}
    and hence $A - B \succeq 0$---thus, we have $A \succeq B$. 
    
    \item Since $A \succ 0$ (i.e., $A$ is positive definite), it is invertible and its inverse is also positive definite. This follows because for any $x \neq 0$, 
    \[
    x^\top A^{-1} x = (A^{-1} x)^\top A (A^{-1} x) > 0.
    \]
    Thus, $A^{-1} \succ 0$. 

    \item  Suppose $A \succeq B$. Then for any $x \in \R^d$, 
    \begin{align*}
        x^\top A x \geq x^\top B x. 
    \end{align*}
    Now, for any $y \in \R^d$ let $x = Fy$. We have that 
    \begin{align*}
        x^\top A x \geq x^\top B x,   
    \end{align*}
    and consequently using that $F$ is symmetric, we have
    \begin{align*}
        y^\top F A F y \geq y^\top F B F y,   
    \end{align*}
    The proof if $A \succ B$ is identical, replacing the $\geq$ with $>$. 

    \item Suppose $A \succeq B \succ 0$. Since $B \succ 0$, $B$ is invertible. For any nonzero vector $x \in \R^d$, define $y = B^{-1/2} x$. Then, note that
    \[
    x^\top A^{-1} x \leq x^\top B^{-1} x \quad \text{if and only if} \quad y^\top B^{1/2} A^{-1} B^{1/2} y \leq \|y\|^2.
    \]
    Define the matrix \( M = B^{1/2} A^{-1} B^{1/2} \). We aim to show that \( M \preceq I \), which, by the above argument, implies \( A^{-1} \preceq B^{-1} \).
    
    Since \( A \succeq B \succ 0\), then \( B^{-1/2} A B^{-1/2} \succeq I \), by (iii). This ensures that the smallest eigenvalue of \( B^{-1/2} A B^{-1/2} \succeq I \) is at least 1. Consequently, $(B^{-1/2} A B^{-1/2})^{-1} \preceq I$. Now, 
    \[
    (B^{-1/2} A B^{-1/2})^{-1} \preceq I \quad \text{implies} \quad B^{1/2} A^{-1} B^{1/2} \preceq I.
    \]
    Therefore, by (i), we have that for all \( x \in \R^d \),
    \[
    x^\top A^{-1} x = y^\top M y \leq y^\top y = x^\top B^{-1} x,
    \]
    and hence \( A^{-1} \preceq B^{-1} \), as desired.

    \item If $A = B + C$ and $C \succeq 0$, then $A - B = C \succeq 0$, so $A \succeq B$.

    \item If $\normInline{ B^{-1/2} (A-B) B^{-1/2}} \leq \epsilon$, then for any $x \in \R^d$, 
    \begin{align*}
        -\epsilon \cdot x^\top x \leq x^\top B^{-1/2} (A-B) B^{-1/2} x \leq \epsilon \cdot x^\top x
    \end{align*}
    Consequently, by (i), we have that 
    \begin{align*}
       -\epsilon I \preceq B^{-1/2} (A-B) B^{-1/2} \preceq \epsilon I
    \end{align*}
    Now, applying (iii) we have 
    \begin{align*}
         -\epsilon B \preceq (A-B) \preceq \epsilon B 
    \end{align*}
    Rearranging, we obtain
    \begin{align*}
         (1-\epsilon) B \preceq A \preceq (1+\epsilon) B. 
    \end{align*}
\end{enumerate}
\end{proof}

Next, we provide a proof of Lemma~\ref{lemma:simple-regret-bound-general}. The proof follows the exposition in Section 3 of \cite{zanette2021design} closely, however, we re-prove the result here for the sake of completeness. 

\ridgebound*

\begin{proof} Recall that by \eqref{eq:simple-regret-sclb}, 
\begin{align*}
    R(\hat{\pi}) &= \E_{x \sim p} [\max_{a \in \cA} \phi(x,a)^\top \thetastar - \phi(x,\hat{\pi}(x))^\top \thetastar]. 
\end{align*}
Let $\pi^\star : x \to \argmax_{a \in \cA} \phi(x,a)^\top \thetastar$. Now, for any $x \in \cX$, 
\begin{align*}
    \phi(x,\pi^\star(x))^\top \thetastar - \phi(x,\hat{\pi}(x))^\top \thetastar &= \phi(x,\pi^\star(x))^\top \thetahat - \phi(x,\hat{\pi}(x))^\top \thetastar + \phi(x,\pi^\star(x))^\top \thetastar - \phi(x,{\pi}^\star(x))^\top \thetahat \\
    &\leq \phi(x,\hat{\pi}(x))^\top \thetahat - \phi(x,\hat{\pi}(x))^\top \thetastar + \phi(x,\pi^\star(x))^\top \thetastar - \phi(x,{\pi}^\star(x))^\top \thetahat \\
    &= \phi(x, \hat{\pi}(x))^\top (\thetahat - \thetastar) + \phi(x, \pi^\star(x))^\top (\theta^\star - \thetahat),
\end{align*}
where the inequality follows because by construction of $\hat{\pi}$, we know that 
\begin{align*}
    \phi(x,\hat{\pi}(x))^\top \thetahat \geq \phi(x,{\pi}^\star(x))^\top \thetahat. 
\end{align*}
By Proposition 1 and 2 of \cite{soare2014best} (or Theorem 19.2 of \cite{lattimore2020bandit}, derived originally from \cite{abbasi2011improved}), we have that with probability $1-\delta/2$, for all $(x,a) \in \cX \times \cA$, 
\begin{align*}
    |\phi(x, a)^\top (\thetahat-\thetastar)| \leq \sqrt{\betamind}/2 \cdot \max_{a \in \cA} \normInline{\phi(x,a)}_{{\SigmaLambdaS{\cS}{\lambda}}^{-1}}. 
\end{align*}
Condition on this event in the remainder of the proof. 

Now, combining the preceding two displays and applying a triangle inequality, we can conclude 
\begin{align*}
    R(\hat{\pi}) &\leq \E_{x \sim p} \phi(x, \hat{\pi}(x))^\top (\thetahat - \thetastar) + \phi(x, \pi^\star(x))^\top (\theta^\star - \thetahat) \\
    &\leq 2 \E_{x \sim p} [\max_{a \in \cA} |\phi(x, a)^\top (\thetahat-\thetastar)|] \leq \sqrt{\betamind} \cdot \E_{x \sim p} \max_{a \in \cA} \normInline{\phi(x,a)}_{{\SigmaLambdaSInv{\cS}{\lambda}}}. 
\end{align*}
Now, note that Jensen's inequality ensures 
\begin{align*}
    \paren{\E_{x \sim p} \max_{a \in \cA} \normInline{\phi(x,a)}_{{\SigmaLambdaSInv{\cS}{\lambda}}}}^2 \leq \E_{x \sim p} \max_{a \in \cA} \normInline{\phi(x,a)}_{{\SigmaLambdaSInv{\cS}{\lambda}}}^2, 
\end{align*}
or equivalently, 
\begin{align*}
    \E_{x \sim p} \max_{a \in \cA} \normInline{\phi(x,a)}_{{\SigmaLambdaSInv{\cS}{\lambda}}} \leq \sqrt{\E_{x \sim p} \max_{a \in \cA} \normInline{\phi(x,a)}_{{\SigmaLambdaSInv{\cS}{\lambda}}}^2}. 
\end{align*}
Thus, 
\begin{align}\label{eq:intermediate-with-lambda}
    R(\hat{\pi}) \leq \sqrt{\beta} \cdot \sqrt{\E_{x \sim p} \max_{a \in \cA} \normInline{\phi(x,a)}_{{\SigmaLambdaSInv{\cS}{\lambda}}}^2}.
\end{align}
\end{proof}

The following lemma about \emph{Schur complements} will later be helpful for justifying the SDP formulation of Line~\ref{line:approximate} in Algorithm~\ref{alg:active-lcb}. We believe the following Lemma~\ref{lemma:schur-complement} is well-known, but we include the proof for completeness.

\begin{lemma}[Schur complements]\label{lemma:schur-complement} Let $M \in \R^{(d+1) \times (d+1)}$ be a symmetric matrix of the form
\begin{align*}
    M = \begin{pmatrix}
        a & b^\top \\
        b & C
    \end{pmatrix}
\end{align*}
where $a \in \R, b \in \R^{d}, C \in \R^{d \times d}$ and $C \succ 0$. Then, $M \succeq 0$ if and only if $a - b^\top C^{-1} b \geq 0$. The quantity $a - b^\top C^{-1} b$ is called the \emph{Schur complement} of block $C$. 
\end{lemma}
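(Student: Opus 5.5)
The plan is to prove both directions by completing the square with the invertible block $C$. Since $C \succ 0$, $C^{-1}$ exists and is symmetric, by Lemma~\ref{lemma:lowener}(ii). Write a generic test vector in $\R^{d+1}$ as $(s, y)$ with $s \in \R$ and $y \in \R^d$. Expanding the quadratic form gives
\begin{align*}
(s, y^\top) M (s, y^\top)^\top = a s^2 + 2 s\, b^\top y + y^\top C y .
\end{align*}
Completing the square in $y$ rewrites this as
\begin{align*}
(y + s C^{-1} b)^\top C (y + s C^{-1} b) + s^2 (a - b^\top C^{-1} b).
\end{align*}
This identity is the key step. It can be checked by direct expansion, using $C^\top = C$ and $(C^{-1})^\top = C^{-1}$.

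\emph{Sufficiency.} Suppose $a - b^\top C^{-1} b \geq 0$. Both terms of the completed-square form are then nonnegative for every $(s,y)$: the first because $C \succ 0$, the second because $s^2 \geq 0$. Hence $M \succeq 0$ by Lemma~\ref{lemma:lowener}(i).

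\emph{Necessity.} Suppose $M \succeq 0$. Take the test vector with $s = 1$ and $y = -C^{-1} b$. The first term vanishes, so the quadratic form equals $a - b^\top C^{-1} b$, and positive semidefiniteness forces this to be $\geq 0$.

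There is no real obstacle here. The only point needing care is the algebra of the completed-square identity: the cross terms $2 s\, y^\top b$ and the term $s^2 b^\top C^{-1} b$ must cancel correctly, and this is where symmetry of $C^{-1}$ is used. It also matters that $C \succ 0$ is assumed rather than merely $C \succeq 0$, because invertibility of $C$ is exactly what allows the choice $y = -C^{-1} b$ in the necessity direction.
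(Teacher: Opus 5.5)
Your proof is correct and is essentially the paper's argument: your completed-square identity is the quadratic-form version of the paper's congruence $G M G^\top = \begin{pmatrix} a - b^\top C^{-1} b & 0 \\ 0 & C \end{pmatrix}$ with $G = \begin{pmatrix} 1 & -b^\top C^{-1} \\ 0 & I \end{pmatrix}$, under the change of variables $(s, y) \mapsto (s, y + sC^{-1}b)$. Your necessity step with the test vector $(1, -C^{-1}b)$ amounts to reading off the $(1,1)$ block of that block-diagonal form.
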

\begin{proof} First, note that $C \succ 0$ is invertible, and so let 
\begin{align*}
    G = \begin{pmatrix}
        1 & -b^\top C^{-1} \\
        0 & I 
    \end{pmatrix}. 
\end{align*}
Note that $G$ is invertible, and in particular, it is easy to verify that
\begin{align*}
    G^{-1} = \begin{pmatrix}
        1 & b^\top C^{-1} \\
        0 & I 
    \end{pmatrix}. 
\end{align*}
Next, observe that 
\begin{align*}
    G M G^\top = \begin{pmatrix}
        a - b^\top C^{-1} b & 0 \\
        0 & C
    \end{pmatrix}. 
\end{align*}
Now, to prove the claim, first, suppose that $M \succeq 0$. Then, consider any $z \in \R^d$ and note that 
\begin{align*}
    z^\top G M G^\top z =  (G^\top z)^\top M (G^\top z) \geq 0, 
\end{align*}
and hence $G M G^\top \succeq 0$. However, $G M G^\top$ is also block-diagonal, so we can conclude that each block must be positive semi-definite. Thus, $a - b^\top C^{-1} b \geq 0$. 

On the other hand, suppose that $a - b^\top C^{-1} b \geq 0$. Then, because $G M G^\top$ is block-diagonal and $C \succ 0$, we have that $G M G^\top \succeq 0$. Then, consider any $z \in \R^{d+1}$ and note that 
\begin{align*}
    0 \leq ((G^{-1})^\top z)^\top G M G^\top ((G^{-1})^\top z) = z^\top M z. 
\end{align*}
Thus, we conclude that $M \succeq 0$. 
\end{proof}

\subsection{Proof of Theorem~\ref{theorem:main}}\label{apx:proof-main}

In this section, we prove Theorem~\ref{theorem:main}. Throughout this section, we fix $\cB = (\cX, \cA, \phi, p, \nu, \thetastar)$ to be any SCLB instance. 

We divide our proof of Theorem~\ref{theorem:main} into two sub-sections. First, in Section~\ref{apx:implement} we explain how to formulate \eqref{eq:optimal-sampling-distribution} as an SDP in order to implement Line~\ref{line:approximate} of Algorithm~\ref{alg:active-lcb}. This, along with the discussion of Theorem~\ref{thm:kiefer-wolofitz} in Section~\ref{sec:theoretical-analysis} ensures that the algorithm is implementable in polynomial time. 

Second, in Section~\ref{apx:regret} we formally carry out the regret analysis, which was sketched in Section~\ref{sec:theoretical-analysis}. 

\subsubsection{Expressing \eqref{eq:optimal-sampling-distribution} as an SDP to implement Line~\ref{line:approximate}}\label{apx:implement}

In this section, we show how to express the optimization problem \eqref{eq:optimal-sampling-distribution} as an SDP. 

\begin{theorem}[SDP formulation]\label{theorem:SDP-formulation-hat} Let $h \in \R_{\geq 0}^{\cX \times \cA}$ be such that $\sum_{(x,a) \in \cX \times \cA} h(x,a) \leq 1$. Then, the optimization problem 
\begin{equation}\label{eq:target}
\begin{split}
  \textnormal{minimize:}\quad   & \E_{x\sim p} \max_{a \in \cA} \normInline{\phi(x,a)}_{\Sigma_w^{-1}}^2, \\[3pt]
  \textnormal{subject to:}\quad & w \in \Delta^{\cX \times \cA},
  \\[3pt] \quad & w(x,a) \geq h(x,a) \quad\forall\,(x,a)\in\cX\times\cA. 
\end{split}
\end{equation}
is equivalent to the following semi-definite program (SDP):  
\begin{equation}\label{eq:sdp-formulation}
\begin{split}
  \textnormal{minimize:}\quad   & \sum_{x \in \cX} t(x)\,p(x), \\[3pt]
  \textnormal{subject to:}\quad & w \in \Delta^{\cX \times \cA}, \\[3pt]
                          & 
  \begin{pmatrix}
    t(x)            & \phi(x,a)^\top \\[3pt]
    \phi(x,a)       & \Sigma_w
  \end{pmatrix}
  \succeq 0
  \quad\forall\,(x,a)\in\cX\times\cA,
  \\[3pt] \quad & w(x,a) \geq h(x,a) \quad\forall\,(x,a)\in\cX\times\cA. 
\end{split}
\end{equation}
\end{theorem}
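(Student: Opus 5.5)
The plan is to introduce epigraph variables for the inner maxima and then apply the Schur complement characterization of Lemma~\ref{lemma:schur-complement} to turn each scalar inequality into a linear matrix inequality. The first step is to observe that for any feasible $w$ the matrix $\Sigma_w = \lambda/T \cdot I + \sum_{(x,a)} w(x,a)\phi(x,a)\phi(x,a)^\top$ is affine in $w$ and satisfies $\Sigma_w \succeq (\lambda/T) I \succ 0$. This strict positivity holds because $\lambda>0$; for $\lambda \to 0$ one would instead use full rank of the features, but here $\lambda>0$ suffices. The two problems then share exactly the same $w$-constraints, namely $w\in\Delta^{\cX\times\cA}$ and $w\ge h$. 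Feasibility of the $h$-constraint is guaranteed by $\sum h\le 1$, since $h$ plus the leftover mass placed anywhere lies in the simplex.

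The second step applies Lemma~\ref{lemma:schur-complement} with $a=t(x)$, $b=\phi(x,a)$ and $C=\Sigma_w\succ0$. This shows that the $(d+1)\times(d+1)$ block constraint in \eqref{eq:sdp-formulation} holds if and only if $t(x)\ge \normInline{\phi(x,a)}_{\Sigma_w^{-1}}^2$. Imposing this for every $a\in\cA$ is therefore equivalent to $t(x)\ge \max_{a\in\cA}\normInline{\phi(x,a)}_{\Sigma_w^{-1}}^2$. Note that the constraint is jointly linear in $(t,w)$ because $\Sigma_w$ is affine in $w$, so \eqref{eq:sdp-formulation} is indeed an SDP with a linear objective.

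The third step is the standard epigraph argument, proving equality of optimal values and correspondence of solutions in both directions. Given $w$ feasible for \eqref{eq:target}, set $t(x)=\max_a\normInline{\phi(x,a)}^2_{\Sigma_w^{-1}}$. Then $(t,w)$ is feasible for the SDP and has the same objective value, since $\sum_x p(x)t(x)=\E_{x\sim p}\max_a\normInline{\phi(x,a)}^2_{\Sigma_w^{-1}}$. Conversely, given $(t,w)$ feasible for the SDP, step two gives $t(x)\ge\max_a\normInline{\phi(x,a)}^2_{\Sigma_w^{-1}}$. Because $p\ge0$, the SDP objective is at least the objective of \eqref{eq:target} at $w$, and $w$ is feasible there. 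Hence the optimal values coincide. Moreover, any (approximate) minimizer $(t,w)$ of the SDP yields a $w$ that is an (equally approximate) minimizer of \eqref{eq:target}. This is exactly what Line~\ref{line:approximate} needs.

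There is no real obstacle here. The only point needing care is ensuring $C=\Sigma_w$ is strictly positive definite so that Lemma~\ref{lemma:schur-complement} applies, which the $\lambda/T\cdot I$ term provides. A secondary point concerns contexts with $p(x)=0$: for these, $t(x)$ is unconstrained in the objective but still bounded below, so they cause no issue. I would also remark that taking $h=\alpha q$ recovers \eqref{eq:smooth-opt} and taking $h=0$ recovers \eqref{eq:optimal-sampling-distribution}.
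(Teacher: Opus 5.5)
Your proposal is correct and follows essentially the same route as the paper: apply Lemma~\ref{lemma:schur-complement} to turn each block constraint into $t(x)\ge\normInline{\phi(x,a)}_{\Sigma_w^{-1}}^2$, take the maximum over $a$, and use the epigraph correspondence to identify the two problems. Your version is somewhat more careful than the paper's, since you explicitly check $\Sigma_w\succeq(\lambda/T)I\succ 0$ so that the lemma applies, and you spell out both directions of the equality of optimal values, including the transfer of approximate minimizers.
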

\begin{proof} By Lemma~\ref{lemma:schur-complement} the optimization problem \eqref{eq:sdp-formulation} is equivalent to 
\begin{equation*}
\begin{split}
  \text{minimize:}\quad   & \sum_{x \in \cX} t(x)\,p(x), \\[3pt]
  \text{subject to:}\quad & w \in \Delta^{\cX \times \cA}, \\[3pt]
                          & 
  t(x) - \phi(x,a)^\top \Sigma_w^{-1} \phi(x,a) \geq 0
  \quad\forall\,(x,a)\in\cX\times\cA,
  \\[3pt] \quad & w(x,a) \geq h(x,a) \quad\forall\,(x,a)\in\cX\times\cA.  
\end{split}
\end{equation*}

Next, note that $t(x) - \phi(x,a)^\top \Sigma_w^{-1} \phi(x,a) \geq 0$ if and only if $t(x) \geq \normInline{\phi(x,a)}_{\Sigma_w^{-1}}^2.$ Thus, \eqref{eq:sdp-formulation} is further equivalent to 
\begin{equation*}
\begin{split}
  \text{minimize:}\quad   & \sum_{x \in \cX} t(x)\,p(x), \\[3pt]
  \text{subject to:}\quad & w \in \Delta^{\cX \times \cA}, \\[3pt]
                          & 
  t(x) \geq \max_{a \in \cA} \normInline{\phi(x,a)}_{\Sigma_w^{-1}}^2 
  \quad\forall\,x \in\cX,
\\[3pt] \quad & w(x,a) \geq h(x,a) \quad\forall\,(x,a)\in\cX\times\cA. 
\end{split}
\end{equation*}

From the above display we conclude that \eqref{eq:sdp-formulation} is equivalent to 
\begin{equation*}
\begin{split}
  \text{minimize:}\quad   & \sum_{x \in \cX} p(x) \max_{a \in \cA} \normInline{\phi(x,a)}_{\Sigma_w^{-1}}^2, \\[3pt]
  \text{subject to:}\quad & w \in \Delta^{\cX \times \cA},   \\[3pt] \quad & w(x,a) \geq h(x,a) \quad\forall\,(x,a)\in\cX\times\cA. 
\end{split}
\end{equation*}
which is equivalent to \eqref{eq:optimal-sampling-distribution}. 
\end{proof}

\begin{remark} In the special case where $q \equiv 0$, \eqref{eq:target} is equivalent to \eqref{eq:optimal-sampling-distribution}. Meanwhile, it is also easy to see that \eqref{eq:target} generalizes \eqref{eq:smooth-opt}.
\end{remark}
\subsubsection{Regret analysis}\label{apx:regret}

In this section~\ref{apx:regret} we carry out the regret analysis outlined in Section~\ref{sec:theoretical-analysis}. 

\paragraph{Matrix concentration.} Our regret analysis relies on the following standard matrix version of Hoeffding's inequality. 

\begin{theorem}[Matrix Hoeffding (Theorem 1.3 of \cite{tropp2012user}, restated)]\label{thm:matrix-hoeffding} Consider a finite sequence of independent, symmetric, random matrices $X_1, ..., X_T \in \R^{d \times d}$ and $\gamma > 0$ such that 
\begin{align*}
    \E[X_t] = 0, \text{ and } X_t^2 \preceq \gamma^2 I \text{ almost surely. }
\end{align*}
Then, for all $\eta \geq 0$, 
\begin{align*}
    \prob{\lambda_{\max} \paren{\sum_{t \in [T]} X_t} \geq \eta} \leq d \exp\paren{-\frac{\eta^2}{8\gamma^2 T}}. 
\end{align*}
\end{theorem}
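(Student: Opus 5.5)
This is Theorem 1.3 of \cite{tropp2012user}, and in the paper it would simply be cited. If I were to reprove it, I would follow the matrix Laplace-transform method. Let $Y \defeq \sum_{t \in [T]} X_t$. Since $e^{\theta \lambda_{\max}(Y)} = \lambda_{\max}(e^{\theta Y}) \leq \tr\, e^{\theta Y}$, Markov's inequality gives, for every $\theta > 0$,
\begin{align*}
\prob{\lambda_{\max}(Y) \geq \eta} \leq e^{-\theta \eta}\, \E\, \tr \exp(\theta Y).
\end{align*}
The whole task is therefore to show $\E\,\tr\exp(\theta Y) \leq d\, \exp(2T\theta^2\gamma^2)$. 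The choice $\theta = \eta/(4T\gamma^2)$ then makes the exponent $-\eta\theta + 2T\theta^2\gamma^2 = -\eta^2/(8T\gamma^2)$, which is the stated bound.

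To control the trace moment generating function, I would peel off the summands one at a time, conditioning on the others.

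\emph{Symmetrization.} Fix a symmetric matrix $H$, which will stand for the contribution of the already-conditioned summands. Let $X'$ be an independent copy of $X_t$ and $\varepsilon$ an independent Rademacher sign. The map $Z \mapsto \tr\exp(H+Z)$ is convex, and $\E X' = 0$, so Jensen's inequality gives
\begin{align*}
\E\,\tr\exp(H+\theta X_t) \leq \E\,\tr\exp\big(H+\theta(X_t - X')\big) = \E\,\tr\exp\big(H+\theta\varepsilon(X_t - X')\big),
\end{align*}
where the equality uses that $X_t - X'$ is symmetrically distributed.

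\emph{Cosh bound.} Averaging over $\varepsilon$ replaces the exponential by $\cosh(\theta(X_t-X'))$. By the scalar inequality $\cosh u \leq e^{u^2/2}$ and the transfer rule for functions of a single matrix, $\cosh(\theta(X_t-X')) \preceq \exp(\theta^2 (X_t-X')^2/2)$. Finally, $(X_t-X')^2 \preceq 2(X_t^2 + X'^2) \preceq 4\gamma^2 I$.

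\emph{Lieb's concavity.} These per-summand bounds must be chained, and this is where Lieb's theorem enters: $A \mapsto \tr\exp(H + \log A)$ is concave. Iterating over $t$ with conditional expectations yields
\begin{align*}
\E\,\tr\exp(\theta Y) \leq \tr\exp\big(2T\theta^2\gamma^2 I\big) = d\,e^{2T\theta^2\gamma^2},
\end{align*}
which is exactly the bound needed above.

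The main obstacle is the symmetrization step. Operator Jensen fails for $\exp$, so I cannot simply bound $\E e^{\theta X_t} \preceq \E e^{\theta(X_t-X')}$ in the Loewner order. The inequality has to be proved inside the trace, with $H$ present, and interleaved carefully with the Lieb-concavity step so that the bound $2\theta^2\gamma^2 I$ on the per-summand cumulant can be added up over $t$. I would first try to organize the argument this way: for each $t$, replace $\theta X_t$ by $\log\big(\E_{X_t}\exp(\theta X_t)\big)$ via Lieb's theorem, and bound the latter by $2\theta^2\gamma^2 I$ through the trace-level symmetrization above. If that ordering fails, I would fall back on Tropp's exact argument.
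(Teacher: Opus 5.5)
The paper gives no proof of its own here; it simply cites Tropp. Your outline has the right ingredients: the Laplace transform, trace-level symmetrization with a frozen $H$, the Rademacher $\cosh$ bound, and Lieb's theorem. The final choice $\theta=\eta/(4T\gamma^2)$ is also correct. As you have assembled the pieces, however, there is a real gap.

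With $H$ in the exponent, averaging over $\varepsilon$ does \emph{not} replace the exponential by $\cosh(\theta W)$, where $W\defeq X_t-X'$. Since $H$ and $W$ need not commute, $\E_\varepsilon\tr\exp(H+\varepsilon\theta W)$ is not the trace of anything built from $\cosh(\theta W)$. This is exactly where Lieb's theorem is needed, not in the chaining over $t$. Conditionally on $(X_t,X')$, combine concavity of $A\mapsto\tr\exp(H+\log A)$ with Jensen in $\varepsilon$. Then use the transfer rule for $\log\cosh u\le u^2/2$, the bound $W^2\preceq 4\gamma^2 I$, and monotonicity of $\tr\exp$:
\begin{align*}
\E_{\varepsilon}\tr\exp\bigl(H+\log e^{\varepsilon\theta W}\bigr)
&\le \tr\exp\bigl(H+\log\cosh(\theta W)\bigr) \\
&\le \tr\exp\bigl(H+\tfrac{\theta^{2}}{2}W^{2}\bigr)
\le \tr\exp\bigl(H+2\theta^{2}\gamma^{2}I\bigr).
\end{align*}
Together with your (correct) symmetrization step, this yields $\E_{X_t}\tr\exp(H+\theta X_t)\le\tr\exp(H+2\theta^2\gamma^2 I)$ for every fixed $H$. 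Chaining over $t$ then uses only independence.

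Your preferred ordering in the last paragraph does not work as stated. There you apply Lieb to $X_t$ first, then try to get $\log\E e^{\theta X_t}\preceq 2\theta^2\gamma^2 I$ ``through the trace-level symmetrization.'' But an inequality established inside a trace does not give a Loewner bound on the cumulant, and operator Jensen is unavailable, as you note yourself. Within the symmetrization approach, Lieb has to be applied to the Rademacher average, whose cumulant $\log\cosh(\theta W)$ is a function of a single matrix.

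That said, for the hypothesis actually used here ($X_t^2\preceq\gamma^2 I$, i.e.\ $\normInline{X_t}\le\gamma$), the Loewner cumulant bound you wanted is available directly, with no symmetrization. Convexity gives $e^{\theta x}\le\cosh(\theta\gamma)+\frac{x}{\gamma}\sinh(\theta\gamma)$ on $[-\gamma,\gamma]$. The transfer rule and $\E X_t=0$ then give $\E e^{\theta X_t}\preceq\cosh(\theta\gamma)I\preceq e^{\theta^2\gamma^2/2}I$. Because this bound is a multiple of $I$, even Golden--Thompson suffices for the chaining:
\[
\E\tr e^{\theta Y}\le\lambda_{\max}(\E e^{\theta X_T})\,\E\tr e^{\theta\sum_{t<T}X_t}\le\cdots\le d\,e^{T\theta^2\gamma^2/2}.
\]
This gives the stronger tail $d\exp(-\eta^2/(2T\gamma^2))$, and hence the stated one. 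The symmetrization route is what you need for general non-commuting proxies $X_t^2\preceq A_t^2$, where the chord bound does not transfer. It is also exactly what costs the constant $8$ instead of $2$.
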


In fact, we will only need to use the following (simple) corollary of Theorem~\ref{thm:matrix-hoeffding}. 

\begin{corollary}[Matrix Hoeffding Corollary ]\label{corr:matrix-hoeffding-two-sided} Consider a finite sequence of symmetric matrices $X_1, ..., X_T \in \R^{d \times d}$ and $\gamma > 0$ such that 
\begin{align*}
    \E[X_t] = 0, \text{ and } \normInline{X_t} \leq \gamma \text{ almost surely. }
\end{align*}
Then, for all $\eta \geq 0$, 
\begin{align*}
    \prob{ \norm{\frac{1}{T}{\sum_{t \in [T]} X_t}} \geq \eta} \leq 2d \exp\paren{-\frac{\eta^2 T}{8\gamma^2}}. 
\end{align*}
\end{corollary}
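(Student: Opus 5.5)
We derive Corollary~\ref{corr:matrix-hoeffding-two-sided} from Theorem~\ref{thm:matrix-hoeffding} in three steps: verify the hypothesis of the theorem, apply it to both tails, and rescale.

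\textbf{Verifying the hypothesis.} Each $X_t$ is symmetric with $\normInline{X_t} \leq \gamma$. Write $X_t = U \Lambda U^\top$ with $U$ orthogonal. Then $X_t^2 = U \Lambda^2 U^\top$, and every eigenvalue of $X_t^2$ is the square of an eigenvalue of $X_t$. These squares are at most $\normInline{X_t}^2 \leq \gamma^2$, so $X_t^2 \preceq \gamma^2 I$ almost surely. Since $\E[X_t]=0$ is given, Theorem~\ref{thm:matrix-hoeffding} applies to $X_1,\dots,X_T$.

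\textbf{Bounding both tails.} The matrices $-X_1, \dots, -X_T$ also satisfy the hypotheses: they are independent and symmetric, have mean zero, and $(-X_t)^2 = X_t^2 \preceq \gamma^2 I$. Here we read the corollary as assuming independence, as in Theorem~\ref{thm:matrix-hoeffding}; the statement omits it, but it is needed. For symmetric $Y = \sum_t X_t$ we have $\normInline{Y} = \max(\lambda_{\max}(Y), \lambda_{\max}(-Y))$. A union bound over the two events, each controlled by Theorem~\ref{thm:matrix-hoeffding}, gives
\[
\prob{\normInline{Y} \geq s} \leq 2d \exp\paren{-\tfrac{s^2}{8\gamma^2 T}}.
\]

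\textbf{Rescaling.} Set $s = \eta T$. The event $\normInline{Y/T} \geq \eta$ is the same as $\normInline{Y} \geq \eta T$. The exponent becomes $\eta^2 T^2/(8\gamma^2 T) = \eta^2 T/(8\gamma^2)$, which is the claimed bound.

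The only step that needs care is the first one. Norm-boundedness has to be converted into the semidefinite bound $X_t^2 \preceq \gamma^2 I$, and this conversion uses the symmetry of $X_t$ (equivalently, the spectral theorem). The remaining steps are routine.
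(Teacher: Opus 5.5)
Your proposal is correct and follows essentially the same route as the paper. Both convert $\|X_t\| \le \gamma$ into $X_t^2 \preceq \gamma^2 I$, apply Theorem~\ref{thm:matrix-hoeffding} to $\{X_t\}$ and $\{-X_t\}$ at threshold $\eta T$, and take a union bound. You are also right that independence has to be assumed: the corollary's statement omits it, Theorem~\ref{thm:matrix-hoeffding} requires it, and it does hold in the paper's application because the $Z_t$ are i.i.d.
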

\begin{proof} Note that $\normInline{X_t} \leq \gamma$ implies that $X_t^2 \preceq \gamma^2 I.$ Thus, for all $\eta \geq 0$
\begin{align*}
    \prob{\lambda_{\max} \paren{\frac{1}{T}\sum_{t \in [T]} X_t} \geq \eta} 
 &= \prob{\lambda_{\max} \paren{\sum_{t \in [T]} X_t} \geq T\eta} \\
 &\leq d \exp\paren{-\frac{\eta^2 T^2}{8\gamma^2 T}} = d \exp\paren{-\frac{\eta^2T}{8\gamma^2 }}, 
\end{align*}
where the inequality holds by Theorem~\ref{thm:matrix-hoeffding}. Applying the same analysis to $-X_t$, we have that
\begin{align*}
     \prob{\lambda_{\max} \paren{\frac{1}{T}\sum_{t \in [T]} -X_t} \geq \eta} = \prob{\lambda_{\min} \paren{\frac{1}{T}\sum_{t \in [T]} X_t} \leq -\eta} \leq d \exp\paren{-\frac{\eta^2T}{8\gamma^2 }}. 
\end{align*}
The corollary now follows by a union bound over the events in the two preceding displays. 
\end{proof}

With these concentration inequalities, we are prepared to analyze the concentration of $\SigmOHatM{w}$ to $\SigmaOf{w}$. 

\paragraph{Concentration of $\SigmOHatM{w}$ to $\SigmaOf{w}$.}

To aid the analysis, we define some additional notation. Let $w^\star, q^\star$ be as in \eqref{eq:optimal-sampling-distribution}, Theorem~\ref{thm:kiefer-wolofitz} respectively; $w^*$ be as in \eqref{eq:smooth-opt}, $w, q$ be as in Algorithm~\ref{alg:active-lcb} and
\begin{align*}
    M \defeq \SigmaOf{{q}}^{-1/2}\SigmaOf{w} \SigmaOf{{q}}^{-1/2}. 
\end{align*}

We collect some useful properties in the following lemma. 
\begin{lemma}[Properties to aid concentration analysis]\label{lemma:concentration-properties} Let $w, q$ be as in Algorithm~\ref{alg:active-lcb}. Then, the following hold. 
\begin{enumerate}[label=(\roman*)]
    \item $\SigmaOf{w} \succeq \alpha \SigmaOf{q}$. 
    \item $M \succeq \alpha I$. 
    \item For each $t \in [T]$, let $Z_t$ be a random matrix defined as follows. Draw $(x_t, a_t) \sim w$ (\iid for each $t$) and let
    \begin{align*}
        Z_t = M^{-1/2} \Brac{\SigmaOf{{q}}^{-1/2} \paren{\frac{\lambda}{T} I + \phi(x_t, a_t) \phi(x_t, a_t)^\top} \SigmaOf{{q}}^{-1/2} - M } M^{-1/2}. 
    \end{align*}
    Then, $\E[Z_t] = 0$. 
    \item $\normInline{Z_t} \leq \frac{3d}{\alpha}$.
\end{enumerate}
\end{lemma}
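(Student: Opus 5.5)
I will prove the four items of Lemma~\ref{lemma:concentration-properties} in order, since each builds on the previous.

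For (i), the constraint in Line~\ref{line:approximate} gives $w(x,a) \geq \alpha q(x,a)$ pointwise. So $\E_{(x,a)\sim w}\phi\phi^\top - \alpha \E_{(x,a)\sim q}\phi\phi^\top = \sum_{(x,a)} (w(x,a) - \alpha q(x,a))\phi(x,a)\phi(x,a)^\top \succeq 0$. The regularization terms satisfy $\frac{\lambda}{T} I \succeq \alpha \frac{\lambda}{T} I$ because $\alpha \le 1$. Adding these and applying Lemma~\ref{lemma:lowener}(v) yields $\Sigma_w \succeq \alpha \Sigma_q$.

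For (ii), conjugate (i) by the symmetric matrix $\Sigma_q^{-1/2}$ using Lemma~\ref{lemma:lowener}(iii). This gives $M \succeq \alpha I$. In particular $M \succ 0$ when $\alpha > 0$, so $M^{-1/2}$ exists; I will note that (iii)--(iv) assume $\alpha>0$.

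For (iii), use linearity of expectation. We have $\E[\frac{\lambda}{T} I + \phi(x_t,a_t)\phi(x_t,a_t)^\top] = \Sigma_w$. Conjugating by $\Sigma_q^{-1/2}$ gives $M$, so the bracket has mean zero. Conjugating by the deterministic matrix $M^{-1/2}$ preserves the zero mean.

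For (iv), the main work is a norm bound. Write $Z_t = M^{-1/2}\Sigma_q^{-1/2} Y_t \Sigma_q^{-1/2} M^{-1/2} - I$, where $Y_t = \frac{\lambda}{T}I + \phi_t\phi_t^\top$. Set $A = M^{-1/2}\Sigma_q^{-1/2}$. Then $A Y_t A^\top = \frac{\lambda}{T} AA^\top + (A\phi_t)(A\phi_t)^\top$, which is PSD, and $Z_t = AY_tA^\top - I$. Hence $\normInline{Z_t} \le 1 + \normInline{AY_tA^\top}$.

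By (ii), $M^{-1} \preceq \frac{1}{\alpha} I$. This gives $AA^\top \preceq \frac1\alpha \Sigma_q^{-1}$ in the sense $v^\top A A^\top v \le \frac1\alpha \normInline{\Sigma_q^{-1/2}v}^2$, applied through the conjugation. Therefore $\normInline{A\phi_t}^2 = \phi_t^\top \Sigma_q^{-1/2} M^{-1}\Sigma_q^{-1/2}\phi_t \le \frac{1}{\alpha}\normInline{\phi_t}^2_{\Sigma_q^{-1}} \le \frac{2d}{\alpha}$ by Line~\ref{line:smooth}.

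For the regularization piece, $\frac{\lambda}{T}\Sigma_q^{-1} \preceq I$ because $\Sigma_q \succeq \frac{\lambda}{T} I$. Hence $\frac{\lambda}{T}\normInline{AA^\top} \le \normInline{M^{-1}} \le \frac{1}{\alpha}$.

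Combining, $\normInline{Z_t} \le 1 + \frac{1}{\alpha} + \frac{2d}{\alpha} \le \frac{3d}{\alpha}$, using $1 \le d/\alpha$ and $1 \le d$. The only delicate step is tracking the conjugations correctly: $M^{-1/2}$ and $\Sigma_q^{-1/2}$ need not commute. I will handle this by always working with the quadratic form $\phi^\top \Sigma_q^{-1/2}M^{-1}\Sigma_q^{-1/2}\phi$, which avoids any commuting.
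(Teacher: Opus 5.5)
Items (i)--(iii) and your three-term decomposition in (iv) are correct and follow the paper's proof closely. The paper writes $w=(1-\alpha)\bar w+\alpha q$ instead of using $w-\alpha q\ge 0$ directly, and bounds the regularization term by submultiplicativity rather than a Loewner comparison; these differences are cosmetic.

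The gap is in the last line of (iv). The inequality $1+\frac{1}{\alpha}+\frac{2d}{\alpha}\le\frac{3d}{\alpha}$ is equivalent to $d\ge 1+\alpha$, so it fails when $d=1$. For example, $d=1$, $\alpha=\tfrac12$ gives $7\le 6$. The two facts you invoke, $1\le d/\alpha$ and $1\le d$, only give $1+\frac1\alpha\le\frac{2d}{\alpha}$, hence a total of $\frac{4d}{\alpha}$. That is exactly what the paper's own triangle-inequality argument concludes, and what it actually feeds into Lemma~\ref{lemma:application-of-hoeffding} via $\gamma\gets 4d/\alpha$. So the plain triangle inequality does not deliver the stated constant $3d/\alpha$.

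To genuinely get $3d/\alpha$, use the positivity you already noted but did not exploit. Since $Z_t+I=AY_tA^\top\succeq 0$, the spectrum of $Z_t$ lies in $[-1,\ \normInline{AY_tA^\top}-1]$. Hence $\normInline{Z_t}\le\max\{1,\ \frac{1+2d}{\alpha}-1\}\le\frac{3d}{\alpha}$ for every $d\ge1$ and $\alpha\in(0,1]$. Alternatively, your arithmetic is valid for $d\ge 2$, since then $d\ge 1+\alpha$.

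One smaller slip: the intermediate claim $AA^\top\preceq\frac1\alpha\Sigma_q^{-1}$ is not valid in general. Here $AA^\top=M^{-1/2}\Sigma_q^{-1}M^{-1/2}$, and $M$ and $\Sigma_q$ need not commute. What you need, and what your computation of $\normInline{A\phi_t}^2$ actually uses, is $A^\top A=\Sigma_q^{-1/2}M^{-1}\Sigma_q^{-1/2}=\Sigma_w^{-1}\preceq\frac1\alpha\Sigma_q^{-1}$.
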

\begin{proof} We prove the claims one-by-one. Recall that $w, q \in \Delta^{\cX \times \cA}$ and consequently, we can write $w = (1-\alpha)\bar{w} + \alpha q$ for some $\bar{w} \in \Delta^{\cX \times \cA}$.
\begin{enumerate}[label=(\roman*)]
    \item By Lemma~\ref{lemma:lowener} (v), we have that 
    \begin{align*}
        \SigmaOf{w} = (1-\alpha) \SigmaOf{\bar{w}} + \alpha \SigmaOf{{q}} \succeq \alpha \SigmaOf{{q}}. 
    \end{align*}
    \item Expanding out $M$, by Lemma~\ref{lemma:lowener} (v), we have
    \begin{align*}
        M &= \SigmaOf{{q}}^{-1/2} ((1-\alpha) \SigmaOf{\bar{w}} + \alpha \SigmaOf{{q}})\SigmaOf{{q}}^{-1/2} \\
        &= (1-\alpha) \SigmaOf{{q}}^{-1/2} \SigmaOf{\bar{w}}  \SigmaOf{{q}}^{-1/2} + \alpha I \succeq \alpha I, 
    \end{align*}
    where the last inequality used Lemma~\ref{lemma:lowener} (v). 
    \item To see that $\E[Z_t] = 0$ note that by linearity,
    \begin{align*}
        \E\left[\frac{\lambda}{T} I + \phi(x_t, a_t) \phi(x_t, a_t)^\top\right] &= \SigmaOf{w}. 
    \end{align*}
    So, by linearity of expectation and the definition of $M$, we have 
    \begin{align*}
        \E\Brac{ {\SigmaOf{{q}}^{-1/2} \paren{\frac{\lambda}{T} I + \phi(x_t, a_t) \phi(x_t, a_t)^\top} \SigmaOf{{q}}^{-1/2} - M } } = 0.  
    \end{align*}
    Applying linearity of expectation once more, 
    \begin{align*}
        \E\Brac{M^{-1/2} \Brac{\SigmaOf{{q}}^{-1/2} \paren{\frac{\lambda}{T} I + \phi(x_t, a_t) \phi(x_t, a_t)^\top} \SigmaOf{{q}}^{-1/2} - M } M^{-1/2}} = 0. 
    \end{align*}
    \item Notice that we can expand $Z_t$ into three terms as follows. 
    \begin{align*}
        Z_t = \frac{\lambda}{T} M^{-1/2} \Sigma_{{q}}^{-1} M^{-1/2} + [M^{-1/2} \SigmaOf{{q}}^{-1/2} \phi(x_t,a_t)][M^{-1/2} \SigmaOf{{q}}^{-1/2} \phi(x_t,a_t)]^\top - I.    
    \end{align*}
    We analyze this term by term and apply triangle inequality. 
    \begin{itemize}
        \item The spectral norm of the first term is bounded using submultiplicativity:
        \begin{align*}
            \norm{\frac{\lambda}{T} M^{-1/2} \Sigma_{{q}}^{-1} M^{-1/2}} \leq \frac{\lambda}{T}\normInline{M^{-1}} \normInline{\Sigma_{{q}}^{-1}} \leq \frac{\lambda}{T} \cdot \frac{1}{\alpha} \cdot \frac{T}{\lambda} = \frac{1}{\alpha}. 
        \end{align*}
        The last inequality in the display above used the fact that $\Sigma_{{q}} \succeq \frac{\lambda}{T} I$ and the fact from part  (ii) that $M \succeq \alpha I$ to deduce (using Lemma~\ref{lemma:lowener} (iv)) that  $\Sigma_{{q}}^{-1} \preceq \frac{T}{\lambda} I$ and $M^{-1} \preceq \frac{1}{\alpha} I$. 

        \item Meanwhile, the second term is a rank-one matrix, and hence, 
        \begin{align*}
        \norm{[M^{-1/2} \SigmaOf{{q}}^{-1/2} \phi(x_t,a_t)][M^{-1/2} \SigmaOf{{q}}^{-1/2} \phi(x_t,a_t)]^\top]} &=
        \normInline{M^{-1/2} \SigmaOf{{q}}^{-1/2} \phi(x_t,a_t)}^2 \\
        &= \phi(x_t,a_t)^\top \SigmaOf{{q}}^{-1/2} M^{-1} \SigmaOf{{q}}^{-1/2} \phi(x_t,a_t) \\
        &= \phi(x_t,a_t)^\top \SigmaOInv{w} \phi(x_t,a_t). 
    \end{align*}
        However, note that by (i) and Lemma~\ref{lemma:lowener} (iv), we know $\SigmaOInv{w} \preceq \frac{1}{\alpha} \SigmaOInv{{q}}$. Thus, by Lemma~\ref{lemma:lowener} (i) we have that 
        \begin{align*}
           \phi(x_t,a_t)^\top \SigmaOInv{w} \phi(x_t,a_t) 
            &\leq \frac{1}{\alpha} \phi(x_t,a_t)^\top \SigmaOInv{{q}} \phi(x_t,a_t) \\
            &= \frac{1}{\alpha} \normInline{\phi(x_t,a_t)}_{\SigmaOInv{{q}}} \leq \frac{2d}{\alpha}, 
        \end{align*}
        where the last inequality holds by construction of ${q}$ in Line~\ref{line:smooth} of Algorithm~\ref{alg:active-lcb}.
        \item The last term is $-I$, whose spectral norm is 1. 
    \end{itemize}
    Consequently, by triangle inequality,
    \begin{align*}
        \normInline{Z_t} &\leq \frac{1}{\alpha} + 1 + \frac{2d}{\alpha} \leq \frac{4d}{\alpha}, 
    \end{align*}
    where the second inequality used that $d \geq 1, \alpha < 1$ implies $1 + 1/\alpha < 2d/\alpha$. 
\end{enumerate}
\end{proof}

\begin{lemma}[Application of matrix Hoeffding]\label{lemma:application-of-hoeffding} Let $w, {q}, \alpha$ be as in Algorithm~\ref{alg:active-lcb}. Define $T_0 = 512\cdot d^2/\alpha^2 \log(4d/\delta)$. Then, for any $T \geq T_0$, with probability $1-\delta/2$, 
\begin{align*}
    \normInline{M^{-1/2} [\SigmaOf{{q}}^{-1/2} \SigmOHatM{w} \SigmaOf{{q}}^{-1/2} - M] M^{-1/2} } \leq \frac{1}{2}. 
\end{align*}
\end{lemma}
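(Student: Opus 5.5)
The plan is to apply Corollary~\ref{corr:matrix-hoeffding-two-sided} to the random matrices $Z_t$ from Lemma~\ref{lemma:concentration-properties}(iii). The first step is to write the target matrix as an empirical average of the $Z_t$. The empirical covariance satisfies
\[
\SigmOHatM{w} = \tfrac{1}{T}\sum_{t \in [T]} \Big(\tfrac{\lambda}{T} I + \phi(x_t,a_t)\phi(x_t,a_t)^\top\Big).
\]
Conjugating by $\SigmaOf{q}^{-1/2}$ and subtracting $M$, and using that $M$ is deterministic (so it equals its own average over $t$), gives
\[
M^{-1/2}\big[\SigmaOf{q}^{-1/2}\SigmOHatM{w}\SigmaOf{q}^{-1/2} - M\big]M^{-1/2} = \tfrac{1}{T}\sum_{t\in[T]} Z_t .
\]
The pairs $(x_t,a_t)$ are drawn i.i.d.\ from $w$, so the $Z_t$ are independent. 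Each $Z_t$ is symmetric because $M$ and $\SigmaOf{q}$ are symmetric positive definite, and $M \succeq \alpha I \succ 0$ by Lemma~\ref{lemma:concentration-properties}(ii), which makes $M^{-1/2}$ well defined. They are mean zero by Lemma~\ref{lemma:concentration-properties}(iii).

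The second step supplies the uniform norm bound. Lemma~\ref{lemma:concentration-properties}(iv) states $\normInline{Z_t}\le 3d/\alpha$, but the argument there actually yields $4d/\alpha$. To be safe, I will take $\gamma = 4d/\alpha$, which is valid in either case.

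The third step applies Corollary~\ref{corr:matrix-hoeffding-two-sided} with $\eta = 1/2$. This gives
\[
\prob{\norm{\tfrac{1}{T}\textstyle\sum_t Z_t}\ge \tfrac12} \le 2d\exp\!\big(-T\alpha^2/(512 d^2)\big).
\]
We need the right-hand side to be at most $\delta/2$, which is equivalent to $T \ge 512\, d^2\alpha^{-2}\log(4d/\delta)$, exactly the stated $T_0$. So for $T\ge T_0$ the claimed bound holds with probability at least $1-\delta/2$.

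The only delicate point is bookkeeping of constants. Since $\gamma^2 = 16d^2/\alpha^2$, the exponent is $\eta^2 T/(8\gamma^2) = T\alpha^2/(512 d^2)$, which matches $T_0$ exactly. I would also confirm that the $\lambda/T$ term in $Z_t$ is absorbed into the $1/\alpha$ bound from part (iv), which remains valid for every $T$. No other obstacle is expected.
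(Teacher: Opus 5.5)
Your proposal is correct and matches the paper's proof essentially verbatim. The paper also writes the target matrix as $\frac{1}{T}\sum_{t\in[T]} Z_t$ and applies Corollary~\ref{corr:matrix-hoeffding-two-sided} with $\gamma = 4d/\alpha$ and $\eta = 1/2$, which gives $T_0 = 32\gamma^2\log(4d/\delta) = 512\, d^2\alpha^{-2}\log(4d/\delta)$. You are also right that Lemma~\ref{lemma:concentration-properties}(iv) states $3d/\alpha$ while its proof only yields $4d/\alpha$; the paper's proof likewise uses $4d/\alpha$.
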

\begin{proof} Let $Z_t$ be as in Lemma~\ref{lemma:concentration-properties}. We will apply Corollary~\ref{corr:matrix-hoeffding-two-sided} with $X_t \gets Z_t$, $\gamma \gets 4d/\alpha, \eta \gets 1/2$. Note that 
\begin{align*}
    M^{-1/2} [\SigmaOf{{q}}^{-1/2} \SigmOHatM{w} \SigmaOf{{q}}^{-1/2} - M] M^{-1/2} = \frac{1}{T} \sum_{t \in [T]} Z_t, 
\end{align*}
and hence 
Lemma~\ref{lemma:concentration-properties} along with Corollary~\ref{corr:matrix-hoeffding-two-sided} ensures that 
\begin{align*}
    \prob{ \normInline{M^{-1/2} [\SigmaOf{{q}}^{-1/2} \SigmOHatM{w} \SigmaOf{{q}}^{-1/2} - M] M^{-1/2} } \geq 1/2} \leq 2d \exp\paren{-\frac{T}{32\gamma^2}}. 
\end{align*}
Whenever $T \geq T_0 = 32 \gamma^2 \log(4d/\delta)$, the right-hand side is at most $\delta/2$. 
\end{proof}

Next, we fill out the proof of Lemma~\ref{lemma:no-loss}. 

\noloss*
\begin{proof} The first inequality holds immediately by Line~\ref{line:approximate}. For the second inequality, let $w' = (1-\alpha) w^\star + \alpha q$. Note that $w'$ is feasible for \eqref{eq:smooth-opt}. Since $\Sigma_{w'} \succeq (1-\alpha) \cdot \Sigma_{w^\star}$ implies $\Sigma_{w'}^{-1} \preceq 1/(1-\alpha) \Sigma_{w^\star}^{-1}$,
\begin{align*}
   \E_{x \sim p} \max_{a \in \cA} \normInline{\phi(x,a)}_{\Sigma_{w^*}^{-1}}^2 \leq \E_{x \sim p} \max_{a \in \cA} \normInline{\phi(x,a)}_{\Sigma_{w'}^{-1}}^2 \leq 1/(1-\alpha) \E_{x \sim p} \max_{a \in \cA} \normInline{\phi(x,a)}_{\Sigma_{w^\star}^{-1}}^2
\end{align*}
where the first inequality is by the optimality of $w^*$ and the second inequality is by the Loewner ordering. The lemma now follows by definition of $\cC_\cB$. 
\end{proof}

Finally, we are prepared to prove Theorem~\ref{theorem:main}. 
\mainresult*
\begin{proof} Take $\alpha = 1/2$ and $T_0$ as in Lemma~\ref{lemma:application-of-hoeffding}. Then, by Lemma~\ref{lemma:application-of-hoeffding}, with probability $1-\delta/2$, 
\begin{align*}
    { \normInline{M^{-1/2} [\SigmaOf{{q}}^{-1/2} \SigmOHatM{w} \SigmaOf{{q}}^{-1/2} - M] M^{-1/2} } \leq 1/2}. 
\end{align*}
Condition on this event in the remainder of the proof. Now, by Lemma~\ref{lemma:lowener} (vi), 
\begin{align*}
    1/2 \cdot M \preceq \SigmaOf{{q}}^{-1/2} \SigmOHatM{w} \SigmaOf{{q}}^{-1/2} \preceq 3/2 \cdot M. 
\end{align*}
Multiplying through by $\SigmaOf{{q}}^{1/2}$ on the left and right and applying Lemma~\ref{lemma:lowener} (iii), we have
\begin{align*}
    1/2 \cdot \Sigma_{{q}}^{1/2} M \Sigma_{{q}}^{1/2} \preceq \SigmOHatM{w} \preceq 3/2 \cdot \Sigma_{{q}}^{1/2} M \Sigma_{\hat{q}}^{1/2}. 
\end{align*}
Substituting in the definition $M = \SigmaOf{{q}}^{-1/2}\SigmaOf{w} \SigmaOf{{q}}^{-1/2}$, we can simplify the above display to obtain 
\begin{align*}
    1/2 \cdot \SigmaOf{w} \preceq\SigmOHatM{w} \preceq 3/2 \cdot \SigmaOf{w}. 
\end{align*}

Consequently, by Lemma~\ref{lemma:lowener} (iv), it follows that 
\begin{align*}
    \SigmaOHatMInv{w} \preceq 2 \cdot \SigmaOInv{w}. 
\end{align*}
Now, note that for $\cS$ as defined in Line~\ref{line:cS}, $\SigmaOS{\cS} = T \SigmOHatM{w} $ and hence, $T \SigmaOSInv{\cS} = \SigmaOHatMInv{w}$. Thus, 
\begin{align}\label{eq:upper-bound-here}
    \SigmaOSInv{\cS} \preceq \frac{2}{T} \SigmaOInv{{w}}. 
\end{align}

Consequently, Lemma~\ref{lemma:lowener} (i), \eqref{eq:upper-bound-here} ensures that for any $(x,a) \in \cX \times \cA$, we have 
\begin{align*}
    \normInline{\phi(x,a)}_{\SigmaOSInv{\cS}}^2 = \phi(x,a)^\top \SigmaOSInv{\cS} \phi(x,a) \leq \frac{2}{T} \phi(x,a)^\top \SigmaOInv{{w}} \phi(x,a) = \frac{2}{T} \normInline{\phi(x,a)}_{\SigmaOInv{{w}}}^2. 
\end{align*}
Taking expectation and max and applying the above display point-wise, we have
\begin{align*}
    \E_{x \sim p} \max_{a \in \cA} \normInline{\phi(x,a)}_{\SigmaOSInv{\cS}}^2 \leq \frac{2}{T} \E_{x \sim p} \max_{a \in \cA} \normInline{\phi(x,a)}_{\SigmaOInv{{w}}}^2 \leq \frac{8}{T} \cC_{\cB}, 
\end{align*}
where the last inequality holds due to Lemma~\ref{lemma:no-loss}. 

Now, by applying a union bound with the guarantee of Lemma~\ref{lemma:simple-regret-bound-general}, we see that with probability $1-\delta$, $R(\hat{\pi}) \leq \Tilde{O}(\sqrt{\cC_{\cB}\beta/T})$. 

Finally, to show that $\cC_{\cB} \leq d$, we let $q^\star$ be the G-optimal design as in the Kiefer-Wolfowitz Theorem (Theorem~\ref{thm:kiefer-wolofitz}). Then, by the definition of $\cC_{\cB}$, (recall \eqref{eq:our-bound}) we have
\begin{align*}
    \cC_{\cB} \leq \E_{x \sim p} \max_{a \in \cA} \normInline{\phi(x,a)}_{\SigmaOf{q^\star}}^2 \leq \max_{(x,a) \in \cX \times \cA} \normInline{\phi(x,a)}_{\SigmaOf{q^\star}}^2 \leq d. 
\end{align*}
\end{proof}

\subsection{Miscellaneous omitted proofs}\label{apx:comparison}

\activesdphardinstance*
\begin{proof} Consider $w(x,a)$ defined as follows
\begin{align*}
    w(x,a) = \begin{cases}
        1-(d-1)/(2d), & x = 1, a = 1 \\
        1/(2d), & x \neq 1, a = 1, \\
        0, & \text{otherwise}
    \end{cases}. 
\end{align*}
Note that because $\cX = [d]$, we have 
\begin{align*}
    \sum_{(x,a) \in \cX \times \cA} w(x,a) = \sum_{x \in [d]} w(x, 1) = 1 - \frac{(d-1)}{2d} + (d-1) \cdot \frac{1}{2d} = 1. 
\end{align*}
Then, we can see that
\begin{align*}
    C \defeq \sum_{(x,a) \in \cX \times \cA} w(x,a) \phi(x,a)\phi(x,a)^\top,
\end{align*}
is a diagonal matrix with 
\begin{align*}
    C_{ii} = \begin{cases}
        1-(d-1)/(2d), & i = 1 \\
        1/(2d), & i \neq 1
    \end{cases}. 
\end{align*}
Thus, 
\begin{align*}
    \E_{x\sim p} \max_{a \in \cA} \phi(x,a)^\top C^{-1} \phi(x,a) &\leq \frac{2d}{d+1} + \sum_{i = 2}^d \frac{1}{d^2}  \max\paren{ 2d, \frac{2d}{d+1} } \\
    &\leq \frac{2d}{d+1} + 2(d-1) \max\paren{\frac{1}{d}, \frac{1}{d(d+1)}} \\
    &\leq 2 + \frac{2(d-1)}{d} \leq 4, 
\end{align*}
where the second-to-last inequality holds because $d(d+1) \geq d$.
\end{proof}


\barrier*
\begin{proof} 
Note that 
\begin{align*}
    \E[\Sigma_{\cS}] = \lambda I + \sum_{t \in [T]} \sum_{x \in \cX} p(x) \sum_{a \in \cA} [\pi_t(x)]_a \phi(x,a) \phi(x, a)^\top, 
\end{align*}
and consequently, 
\begin{align*}
    \frac{1}{T}\E[\Sigma_{\cS}] &= \frac{\lambda}{T} I +  \sum_{x \in \cX} p(x) \frac{1}{T}\sum_{t \in [T]} \sum_{a \in \cA} [\pi_t(x)]_a \phi(x, a) \phi(x, a)^\top \\
    &= \frac{\lambda}{T} I + \sum_{(x,a) \in \cX \times \cA} p(x) \paren{\frac{1}{T} \sum_{t \in [T]} [\pi_t(x)]_a} \phi(x,a) \phi(x,a)^\top. 
\end{align*}
So, for each $(x,a) \in \cX \times \cA$, let
\begin{align*}
    [\bar{\pi}(x)]_a &\defeq \frac{1}{T} \sum_{t \in [T]} [\pi_t(x)]_a, \\
    w(x,a) &\defeq p(x) [\bar{\pi}(x)]_a. 
\end{align*}
We then observe that 
\begin{align}\label{eq:reduce-to-w}
    \frac{1}{T}\E[\Sigma_{\cS}] = \Sigma_{w}, \quad T \paren{\E[\Sigma_{\cS}]}^{-1} = \Sigma_w^{-1}. 
\end{align}
Thus, we can observe that 
\begin{align*}
    T \E_{x \sim p}\max_{a \in \cA} \normInline{\phi(x,a)}_{\paren{\E[\Sigma_{\cS}]}^{-1}}^2 
    &= \E_{x \sim p}\max_{a \in \cA} \normInline{\phi(x,a)}_{\SigmaOInv{w}}^2 \\
    &\geq  \E_{x \sim p}\E_{a \sim \bar{\pi}(x)} \normInline{\phi(x,a)}_{\SigmaOInv{w}}^2 \\
    &= \E_{(x,a) \sim w}  \normInline{\phi(x,a)}_{\SigmaOInv{w}}^2 \\
    &= \E_{(x,a) \sim w}  \phi(x,a)^\top \SigmaOInv{w} \phi(x,a) \\
    &= \sum_{(x,a) \in \cX \times \cA} \tr\paren{w(x,a) \phi(x,a)\phi(x,a)^\top \SigmaOInv{w}} \\
    &= \tr\paren{\sum_{(x,a) \in \cX \times \cA} w(x,a) \phi(x,a)\phi(x,a)^\top \SigmaOInv{w}}. 
\end{align*}
where the inequality follows from the simple fact that the expectation is always at most the maximum; the second-to-last equality uses the cyclic property of the trace; and the last equality uses linearity of the trace. Substituting in the formula for $\Sigma_w$ and dividing both sides of the above display by $T$, 
\begin{align*}
    &\E_{x \sim p}\max_{a \in \cA} \normInline{\phi(x,a)}_{\paren{\E[\Sigma_{\cS}]}^{-1}}^2 \geq \frac{1}{T} \tr\paren{\sum_{(x,a) \in \cX \times \cA} w(x,a) \phi(x,a)\phi(x,a)^\top \SigmaOInv{w}} \\
    &=\frac{1}{T}\tr\paren{\sum_{(x,a) \in \cX \times \cA} w(x,a) \phi(x,a)\phi(x,a)^\top \Brac{\frac{\lambda}{T} I + \sum_{(x,a) \in \cX \times \cA} w(x,a) \phi(x,a)\phi(x,a)^\top  }^{-1}} \\
    &\underset{\lambda \to 0}{\to} \frac{d}{T}. 
\end{align*}
Finally, using that $\E[\Sigma_\cS^{-1}] \succeq \paren{\E[\Sigma_{\cS}]}^{-1}$ and Lemma~\ref{lemma:lowener} (i), we have 
\begin{align*}
    \E_{x \sim p}\max_{a \in \cA} \normInline{\phi(x,a)}_{\paren{\E[\Sigma_{\cS}]}^{-1}}^2 \leq \E_{x \sim p}\max_{a \in \cA} \normInline{\phi(x,a)}_{\E[\Sigma_{\cS}^{-1}]}^2. 
\end{align*}
Combining the two above displays, we conclude that 
\begin{align*}
    \lim_{\lambda \to 0} \E_{x \sim p}\max_{a \in \cA} \normInline{\phi(x,a)}_{\E[\Sigma_{\cS}^{-1}]}^2 \geq \frac{d}{T}, 
\end{align*}
as desired. 
\end{proof}


%% file: ref.bib
@inproceedings{chaudhuri2015convergence,
  title={Convergence rates of active learning for maximum likelihood estimation},
  author={Chaudhuri, Kamalika and Kakade, Sham M and Netrapalli, Praneeth and Sanghavi, Sujay},
  booktitle={\cNIPS{2015}},
  year={2015}
}

@inproceedings{tropp2012user,
  title={User-friendly tail bounds for sums of random matrices},
  author={Tropp, Joel A},
  booktitle={Foundations of computational mathematics},
  year={2012},
}

@inproceedings{zanette2021design,
  title={Design of experiments for stochastic contextual linear bandits},
  author={Zanette, Andrea and Dong, Kefan and Lee, Jonathan N and Brunskill, Emma},
  booktitle={Advances in Neural Information Processing Systems},
  year={2021}
}

@inproceedings{abbasi2011improved,
  title={Improved algorithms for linear stochastic bandits},
  author={Abbasi-Yadkori, Yasin and P{\'a}l, D{\'a}vid and Szepesv{\'a}ri, Csaba},
  booktitle={\cNIPS{2011}},
  year={2011}
}

@inproceedings{chu2011contextual,
  title={Contextual bandits with linear payoff functions},
  author={Chu, Wei and Li, Lihong and Reyzin, Lev and Schapire, Robert},
  booktitle={\cAISTATS{2011}},
  year={2011},
}

@inproceedings{pukelsheim2006optimal,
  title={Optimal design of experiments},
  author={Pukelsheim, Friedrich},
  year={2006},
  booktitle={SIAM}
}

@inproceedings{deshmukh2018simple,
  title={Simple regret minimization for contextual bandits},
  author={Deshmukh, Aniket Anand and Sharma, Srinagesh and Cutler, James W and Moldwin, Mark and Scott, Clayton},
  booktitle={arXiv preprint arXiv:1810.07371},
  year={2018}
}

@inproceedings{krishnamurthy2023proportional,
  title={Proportional response: Contextual bandits for simple and cumulative regret minimization},
  author={Krishnamurthy, Sanath Kumar and Zhan, Ruohan and Athey, Susan and Brunskill, Emma},
  booktitle={Advances in Neural Information Processing Systems},
  year={2023}
}

@inproceedings{varatharajah2022contextual,
  title={A contextual-bandit-based approach for informed decision-making in clinical trials},
  author={Varatharajah, Yogatheesan and Berry, Brent},
  booktitle={Life},
  year={2022},
}

@inproceedings{wang2025optimal,
  title={Optimal dose selection in phase I/II dose finding trial with contextual bandits: a case study and practical recommendations},
  author={Wang, Jixian and Tiwari, Ram},
  booktitle={Journal of Biopharmaceutical Statistics},
  year={2025},
}

@inproceedings{bouneffouf2012contextual,
  title={A contextual-bandit algorithm for mobile context-aware recommender system},
  author={Bouneffouf, Djallel and Bouzeghoub, Amel and Gan{\c{c}}arski, Alda Lopes},
  booktitle={Neural Information Processing: 19th International Conference, ICONIP 2012, Doha, Qatar, November 12-15, 2012, Proceedings, Part III 19},
  year={2012},
}

@inproceedings{tang2014ensemble,
  title={Ensemble contextual bandits for personalized recommendation},
  author={Tang, Liang and Jiang, Yexi and Li, Lei and Li, Tao},
  booktitle={Proceedings of the 8th ACM Conference on Recommender Systems},
  year={2014}
}

@inproceedings{liu2024sample,
  title={Sample-efficient alignment for llms},
  author={Liu, Zichen and Chen, Changyu and Du, Chao and Lee, Wee Sun and Lin, Min},
  booktitle={arXiv preprint arXiv:2411.01493},
  year={2024}
}

@inproceedings{chen2024online,
  title={Online personalizing white-box llms generation with neural bandits},
  author={Chen, Zekai and Chen, Po-Yu and Buet-Golfouse, Francois},
  booktitle={Proceedings of the 5th ACM International Conference on AI in Finance},
  year={2024}
}

@inproceedings{soare2014best,
  title={Best-arm identification in linear bandits},
  author={Soare, Marta and Lazaric, Alessandro and Munos, R{\'e}mi},
  booktitle={Advances in neural information processing systems},
  volume={27},
  year={2014}
}

@inproceedings{ruan2021linear,
  title={Linear bandits with limited adaptivity and learning distributional optimal design},
  author={Ruan, Yufei and Yang, Jiaqi and Zhou, Yuan},
  booktitle={Proceedings of the 53rd Annual ACM SIGACT Symposium on Theory of Computing},
  year={2021}
}

@inproceedings{settles2009active,
  title={Active learning literature survey},
  author={Settles, Burr},
  year={2009},
  booktitle={University of Wisconsin-Madison Department of Computer Sciences}
}

@inproceedings{char2019offline,
  title={Offline contextual bayesian optimization},
  author={Char, Ian and Chung, Youngseog and Neiswanger, Willie and Kandasamy, Kirthevasan and Nelson, Andrew O and Boyer, Mark and Kolemen, Egemen and Schneider, Jeff},
  booktitle={Advances in Neural Information Processing Systems},
  year={2019}
}

@inproceedings{das2024active,
  title={Active preference optimization for sample efficient RLHF},
  author={Das, Nirjhar and Chakraborty, Souradip and Pacchiano, Aldo and Chowdhury, Sayak Ray},
  booktitle={arXiv preprint arXiv:2402.10500},
  year={2024}
}

@inproceedings{li2022near,
  title={Near-optimal policy identification in active reinforcement learning},
  author={Li, Xiang and Mehta, Viraj and Kirschner, Johannes and Char, Ian and Neiswanger, Willie and Schneider, Jeff and Krause, Andreas and Bogunovic, Ilija},
  booktitle={arXiv preprint arXiv:2212.09510},
  year={2022}
}

@inproceedings{kong2020sublinear,
  title={Sublinear optimal policy value estimation in contextual bandits},
  author={Kong, Weihao and Brunskill, Emma and Valiant, Gregory},
  booktitle={International conference on artificial intelligence and statistics},
  year={2020},
}

@inproceedings{lattimore2020bandit,
  title={Bandit algorithms},
  author={Lattimore, Tor and Szepesv{\'a}ri, Csaba},
  year={2020},
  booktitle={Cambridge University Press}
}

@inproceedings{jiang2020faster,
  title={A faster interior point method for semidefinite programming},
  author={Jiang, Haotian and Kathuria, Tarun and Lee, Yin Tat and Padmanabhan, Swati and Song, Zhao},
  booktitle={2020 IEEE 61st annual symposium on foundations of computer science (FOCS)},
  year={2020},
}

@inproceedings{goulart2024clarabel,
  title={Clarabel: An interior-point solver for conic programs with quadratic objectives},
  author={Goulart, Paul J and Chen, Yuwen},
  booktitle={arXiv preprint arXiv:2405.12762},
  year={2024}
}

@inproceedings{goldberg2001eigentaste,
  title={Eigentaste: A constant time collaborative filtering algorithm},
  author={Goldberg, Ken and Roeder, Theresa and Gupta, Dhruv and Perkins, Chris},
  booktitle={Information Retrieval},
  year={2001},
}

@inproceedings{international2009estimation,
  title={Estimation of the warfarin dose with clinical and pharmacogenetic data},
  author={International Warfarin Pharmacogenetics Consortium},
  booktitle={New England Journal of Medicine},
  year={2009},
}

@inproceedings{truda2021evaluating,
  title={Evaluating warfarin dosing models on multiple datasets with a novel software framework and evolutionary optimisation},
  author={Truda, Gianluca and Marais, Patrick},
  booktitle={Journal of Biomedical Informatics},
  year={2021},
}

@inproceedings{kirschner2020distributionally,
  title={Distributionally robust Bayesian optimization},
  author={Kirschner, Johannes and Bogunovic, Ilija and Jegelka, Stefanie and Krause, Andreas},
  booktitle={International Conference on Artificial Intelligence and Statistics},
  year={2020},
}

@inproceedings{gentile2024fast,
  title={Fast rates in pool-based batch active learning},
  author={Gentile, Claudio and Wang, Zhilei and Zhang, Tong},
  booktitle={Journal of Machine Learning Research},
  year={2024}
}

@inproceedings{civril2009selecting,
  title={On selecting a maximum volume sub-matrix of a matrix and related problems},
  author={Civril, Ali and Magdon-Ismail, Malik},
  booktitle={Theoretical Computer Science},
  year={2009},
}

@inproceedings{welch1982algorithmic,
  title={Algorithmic complexity: three NP-hard problems in computational statistics},
  author={Welch, William J},
  booktitle={Journal of Statistical Computation and Simulation},
  year={1982}
}

@inproceedings{li2022instance,
  title={Instance-optimal pac algorithms for contextual bandits},
  author={Li, Zhaoqi and Ratliff, Lillian and Jamieson, Kevin G and Jain, Lalit and others},
  booktitle={Advances in Neural Information Processing Systems},
  year={2022}
}

@inproceedings{fujisawa2012high,
  title={High-performance general solver for extremely large-scale semidefinite programming problems},
  author={Fujisawa, Katsuki and Sato, Hitoshi and Matsuoka, Satoshi and Endo, Toshio and Yamashita, Makoto and Nakata, Maho},
  booktitle={SC'12: Proceedings of the International Conference on High Performance Computing, Networking, Storage and Analysis},
  year={2012},
}

@inproceedings{lin2025pdcs,
  title={PDCS: A Primal-Dual Large-Scale Conic Programming Solver with GPU Enhancements},
  author={Lin, Zhenwei and Xiong, Zikai and Ge, Dongdong and Ye, Yinyu},
  booktitle={arXiv preprint arXiv:2505.00311},
  year={2025}
}

@inproceedings{frostig2015competing,
  title={Competing with the empirical risk minimizer in a single pass},
  author={Frostig, Roy and Ge, Rong and Kakade, Sham M and Sidford, Aaron},
  booktitle={Conference on learning theory},
  year={2015},
}

@inproceedings{fowler2023clinical,
  title={Clinical trial active learning},
  author={Fowler, Zoe and Kokilepersaud, Kiran Premdat and Prabhushankar, Mohit and AlRegib, Ghassan},
  booktitle={Proceedings of the 14th ACM international conference on bioinformatics, computational biology, and health informatics},
  year={2023}
}

@inproceedings{deng2011active,
  title={Active learning for personalizing treatment},
  author={Deng, Kun and Pineau, Joelle and Murphy, Susan},
  booktitle={2011 IEEE Symposium on Adaptive Dynamic Programming and Reinforcement Learning (ADPRL)},
  year={2011},
}

@inproceedings{labrecque2021addressing,
  title={Addressing online behavioral advertising and privacy implications: A comparison of passive versus active learning approaches},
  author={Labrecque, Lauren I and Markos, Ereni and Darmody, Aron},
  booktitle={Journal of Marketing Education},
  year={2021},
}

@inproceedings{wagenmaker2022instance,
  title={Instance-dependent near-optimal policy identification in linear mdps via online experiment design},
  author={Wagenmaker, Andrew and Jamieson, Kevin G},
  booktitle={Advances in Neural Information Processing Systems},
  year={2022}
}

@inproceedings{wagenmaker2022beyond,
  title={Beyond no regret: Instance-dependent pac reinforcement learning},
  author={Wagenmaker, Andrew J and Simchowitz, Max and Jamieson, Kevin},
  booktitle={Conference on Learning Theory},
  year={2022},
}

@inproceedings{gheshlaghi2013minimax,
  title={Minimax PAC bounds on the sample complexity of reinforcement learning with a generative model},
  author={Gheshlaghi Azar, Mohammad and Munos, R{\'e}mi and Kappen, Hilbert J},
  booktitle={Machine learning},
  year={2013},
}

@inproceedings{wang2021sample,
  title={Sample-efficient reinforcement learning for linearly-parameterized mdps with a generative model},
  author={Wang, Bingyan and Yan, Yuling and Fan, Jianqing},
  booktitle={Advances in neural information processing systems},
  year={2021}
}

@inproceedings{rodemann2024reciprocal,
  title={Reciprocal learning},
  author={Rodemann, Julian and Jansen, Christoph and Schollmeyer, Georg},
  booktitle={Advances in Neural Information Processing Systems},
  year={2024}
}

@inproceedings{de2023active,
  title={Active learning in contextual bandits: handling the uncertainty about the user's preferences in interactive recommendation systems},
  author={de Campos Silva, Nicollas and others},
  year={2023},
  booktitle={Universidade Federal de Minas Gerais}
}

@inproceedings{canonne2020short,
  title={A short note on learning discrete distributions},
  author={Canonne, Cl{\'e}ment L},
  booktitle={arXiv preprint arXiv:2002.11457},
  year={2020}
}

@inproceedings{minsker2016active,
  title={Active clinical trials for personalized medicine},
  author={Minsker, Stanislav and Zhao, Ying-Qi and Cheng, Guang},
  booktitle={Journal of the American Statistical Association},
  year={2016},
}
